\documentclass[11pt]{article}

\usepackage{times}

\usepackage{verbatim}

\oddsidemargin=-0.1in \evensidemargin=-0.1in \topmargin=-.5in
\textheight=9in \textwidth=6.5in
\parindent=18pt

\usepackage{amsthm,amsfonts,amsmath,amssymb,epsfig,color,float,graphicx,verbatim, enumitem}
\usepackage{multirow}

\newif\ifhyper\IfFileExists{hyperref.sty}{\hypertrue}{\hyperfalse}
\hypertrue
\ifhyper\usepackage{hyperref}\fi

\usepackage{enumitem}

\newcommand{\inote}[1]{\footnote{{\bf [[Ilias: {#1}\bf ]] }}}

\makeatletter
\renewcommand{\section}{\@startsection{section}{1}{0pt}{-12pt}{5pt}{\large\bf}}
\makeatother

\usepackage{framed}
\usepackage{nicefrac}

\def\nnewcolor{1}
\ifnum\nnewcolor=1
\newcommand{\nnew}[1]{{\color{red} #1}}
\fi
\ifnum\nnewcolor=0
\newcommand{\nnew}[1]{#1}
\fi


\newtheorem{theorem}{Theorem}
\newtheorem{lemma}[theorem]{Lemma}
\newtheorem{proposition}[theorem]{Proposition}
\newtheorem{corollary}[theorem]{Corollary}
\newtheorem{claim}[theorem]{Claim}
\newtheorem{fact}[theorem]{Fact}
\newtheorem{remark}[theorem]{Remark}
\theoremstyle{definition}
\newtheorem{definition}[theorem]{Definition}

\newcommand{\chowallow}{\kappa}
\newcommand{\mychows}{\vec{\chi}}
\newcommand{\chow}{\mathrm{Chow}}
\newcommand{\supp}{\mathrm{supp}}
\newcommand{\bits}{\{-1,1\}}
\newcommand{\bn}{\bits^n}
\newcommand{\R}{{\mathbb{R}}}
\newcommand{\Z}{{\mathbb Z}}
\newcommand{\N}{{\mathbb N}}
\newcommand{\Inf}{\mathrm{Inf}}
\newcommand{\dist}{\mathrm{dist}}

\newcommand{\E}{{\bf E}}

\newcommand{\U}{{\cal U}}

\newcommand{\Var}{\mathbf{Var}}
\renewcommand{\P}{\mathbf{Pr}}
\renewcommand{\Pr}{\mathbf{Pr}}
\newcommand{\pr}{\mathbf{Pr}}
\newcommand{\eps}{\epsilon}
\newcommand{\littlesum}{\mathop{{\textstyle \sum}}}

\newcommand{\la}{\langle}
\newcommand{\ra}{\rangle}

\newcommand{\sgn}{\mathrm{sign}}

\newcommand{\poly}{\mathrm{poly}}

\newcommand{\T}{\mathcal{T}}

\newcommand{\sign}{{\mathrm{sign}}}

\newcommand{\wh}[1]{{\widehat{#1}}}

\newcommand{\eqdef}{\stackrel{\textrm{def}}{=}}
\newcommand{\depth}{{\mathrm{depth}}}

\makeatletter
\let\@@span\span
\def\sp@n{\@@span\omit\advance\@multicnt\m@ne}
\makeatother

\renewcommand{\span}{\mathrm{span}}

\title{Degree-$d$ Chow Parameters Robustly Determine Degree-$d$ PTFs\\ 
(and Algorithmic Applications)}


\author{
Ilias Diakonikolas\thanks{Supported by NSF Award CCF-1652862 (CAREER) and a Sloan Research Fellowship.}\\
University of Southern California\\
{\tt diakonik@usc.edu}\\
\and
Daniel M. Kane\thanks{Supported by NSF Award CCF-1553288 (CAREER) and a Sloan Research Fellowship.}\\
University of California, San Diego\\
{\tt dakane@cs.ucsd.edu}\\
}

\begin{document}

\maketitle

\thispagestyle{empty}

\vspace{-0.5cm}

\begin{abstract}
The degree-$d$ Chow parameters of a Boolean function $f: \bn \to \R$ are its degree at most $d$ Fourier coefficients.
It is well-known that degree-$d$ Chow parameters uniquely characterize degree-$d$ polynomial threshold functions 
(PTFs) 
within the space of all bounded functions. In this paper, we prove a robust version of this theorem: 
For $f$ any Boolean degree-$d$ PTF and $g$ any bounded function, if the degree-$d$ Chow parameters of
$f$ are close to the degree-$d$ Chow parameters of $g$ in $\ell_2$-norm, then $f$ is close to $g$ in $\ell_1$-distance.
Notably, our bound relating the two distances is completely independent of the dimension $n$. That is,
we show that Boolean degree-$d$ PTFs are {\em robustly identifiable} from their degree-$d$ Chow parameters.
Results of this form had been shown for the $d=1$ case~\cite{OS11:chow, DeDFS14}, 
but no non-trivial bound was previously known for $d >1$.

Our robust identifiability result gives the following algorithmic applications: 
First, we show that Boolean degree-$d$ PTFs can be efficiently approximately reconstructed
from approximations to their degree-$d$ Chow parameters. This immediately implies
that degree-$d$ PTFs are efficiently learnable in the uniform distribution $d$-RFA model~\cite{BenDavidDichterman:98}.
As a byproduct of our approach, we also obtain the first low integer-weight approximations of degree-$d$ PTFs, for $d>1$.
As our second application, our robust identifiability result gives the first efficient 
algorithm, with dimension-independent error guarantees, 
for malicious learning of Boolean degree-$d$ PTFs under the uniform distribution.

The proof of our robust identifiability result involves several new technical ingredients, including
the following structural result for degree-$d$ multivariate polynomials with very poor anti-concentration:
If $p$ is a degree-$d$ polynomial where $p(x)$ is {\em very} close to $0$ on a {\em large} number of points in $\bn$,
then there exists a degree-$d$ hypersurface that exactly passes though {\em almost all} of these points. 
We leverage this structural result to show that if the degree-$d$ Chow distance between $f$ and $g$ is small, 
then we can find many degree-$d$ polynomials that vanish on their disagreement region, 
and in particular enough that forces the $\ell_1$-distance between $f$ and $g$ to also be small. 
To implement this proof strategy, we require additional technical ideas. 
In particular, in the $d=2$ case we show that for any large vector space of degree-$2$ polynomials 
with a large number of common zeroes, there exists a linear function that vanishes on almost all of these zeroes. 
The degree-$d$ degree generalization of this statement is significantly more complex, and can be viewed as an effective version 
of Hilbert's Basis Theorem for our setting.
\end{abstract}

\thispagestyle{empty}
\setcounter{page}{0}

\newpage

\section{Introduction} \label{sec:intro}

This paper is concerned with the problem of reconstructing degree-$d$ polynomial threshold functions
over the Boolean hypercube from their degree at most $d$ Fourier coefficients. Before we state
our main contributions, we provide some background and motivation for this work.

\subsection{Background} \label{sec:background}

A {\em degree-$d$ polynomial threshold function (PTF)} is a Boolean function of the form
$f(x) = \sign(p(x))$, \noindent where $p: \bits^n \to \R$ is a degree-$d$ polynomial with real
coefficients. The function $\sign(z)$ takes value $1$ for $z \geq 0$ and $-1$ otherwise.
For degree $d=1$, degree-$d$ PTFs are referred to as Linear Threshold Functions (LTFs) or Boolean
Halfspaces. Degree-$d$ PTFs are a fundamental family of Boolean functions that have been
extensively studied in a number of scientific disciplines for at least six decades~\cite{Rosenblatt:58, Chow:61, MTT:61, Dertouzos:65, MinskyPapert:68, SRK:95}.
During the past decade, there has been renewed research attention on degree-$d$ PTFs
from several perspectives of theoretical computer science, including complexity theory~\cite{Servedio:07cc,
RabaniS10, DGJ+:10, DKNfocs10, DiakonikolasS13, MZ13, Kane14GL, DeDS14-ccc, DeS14, GopalanKM15, Kane17DifDecomp} and
computational learning theory~\cite{DHK+:10, DiakonikolasOSW11, OS11:chow, FeldmanGRW12, DeDFS14, DRST14,
HKM14, Daniely15, Daniely16, DeDS17, ABL17, DiakonikolasKS18a-nasty, BhattacharyyaGS18}.


In this work, we study the problem of efficiently approximating degree-$d$ PTFs over $\bn$
from their Fourier coefficients of degree at most $d$, which we will call degree-$d$ Chow parameters:

\begin{definition} \label{def:chow}
Given any function $f : \bn \to \R$, its \emph{degree-$d$ Chow Parameters}
are its Fourier coefficients of degree at most $d$, i.e., $\wh{f}(S) = \E_{x \sim \U_n}[f(x) \chi_S(x)]$,
for all $S \subseteq [n]$ with $|S| \leq d$, where $\U_n$ denotes the uniform distribution on $\bn$.
We say that the \emph{degree-$d$ Chow vector} of $f$ is
$\mychows^d_f = (\wh{f}(S))_{S \subseteq [n], |S| \leq d}$.
\end{definition}

In general, if the degree $d$ is relatively small, the degree-$d$ Chow parameters of a Boolean function $f$ provide
limited information about the function. Perhaps surprisingly, this is not the case if we know that $f$ is an LTF
or, more generally, a degree-$d$ PTF.  A classical result of C.K. Chow~\cite{Chow:61} shows
that Boolean LTFs are uniquely specified by their degree-$1$ Chow parameters.
Chow's result was later generalized by Bruck~\cite{Bruck:90} to
the class of degree-$d$ PTFs. Specifically, ~\cite{Chow:61, Bruck:90} establish the following:

\medskip

\noindent {\bf Degree-$d$ Chow's Theorem:} For any $d \geq 1$,
if $f:\bn \to \bits$ is any degree-$d$ PTF and $g: \bn \to [-1, 1]$ is any bounded function
such that $\mychows^d_f  = \mychows^d_g$, then $f(x) = g(x)$ for all $x \in \bn$.

\medskip

Unfortunately, the original proof of Chow's theorem~\cite{Chow:61}
(and its straightforward generalization to the degree-$d$ case~\cite{Bruck:90})
is non-constructive, and in particular does not suggest any algorithm
to reconstruct (even approximately) a degree-$d$ PTF from its degree-$d$ Chow parameters.
This naturally suggests the following computational problem:

\medskip

\noindent {\bf Degree-$d$ Chow Parameters Problem:}
Given (approximations to) the degree-$d$ Chow parameters of an unknown degree-$d$ PTF over $\bn$,
output an (approximate) representation of $f$ as $\sgn(p(x))$, where $p: \bn \to \R$ is a degree-$d$ polynomial.

\medskip

The (degree-$1$) Chow Parameters problem has a rich history and
has been extensively studied since the 1960s.
Specifically, researchers in various communities have been interested in finding an efficient algorithm for the problem,
including electrical engineering~\cite{Elgot:60, MTK:62, Winder:64,MTB:67,
Kaszerman:63,Winder:63,KaplanWinder:65,Dertouzos:65, Winder:69, Baugh:73, Hurst:73},
game theory and voting theory~\cite{Lapidot:72, Penrose:46,Banzhaf:65,DubeyShapley:79,EinyLehrer:89,TaylorZwicker:92,Freixas:97,
Leech:03,Carreras:04,FM:04,TT:06,APL:07, LW:98, L:02a, L:02b, dK:10, K:12, KN:12},
and computational learning theory~\cite{BDJ+:98, BenDavidDichterman:98, Goldberg:06b, Servedio:07cc, OS11:chow, DeDFS14}.
More recently, Chow's theorem and the Chow parameters have played an important role in various aspects of
complexity theory (see, e.g.,~\cite{CHIS:10, KaneW16}).
The reader is referred to~\cite{OS11:chow} for a detailed summary of previous work.

The first provably efficient algorithm for the degree-$1$ Chow parameters problem was obtained
by O'Donnell and Servedio~\cite{OS11:chow}: they gave a polynomial time algorithm that, given
sufficiently accurate approximations to the degree-$1$ Chow parameters of an unknown LTF $f$,
it outputs the weights-based representation of an LTF $h$ that is close to $f$ in (normalized) Hamming distance.
In subsequent work, De, Diakonikolas, Feldman, and Servedio~\cite{DeDFS14} gave a significantly faster
algorithm for the (degree-$1$ Chow parameters) problem.
(See Section~\ref{sec:related} for a more detailed description of prior work.)
The degree-$d$ Chow parameters problem for $d>1$ has remained open.
Prior to this work, no non-trivial upper bound was known,
even for special cases of degree-$2$ PTFs.

To facilitate the subsequent discussion, we introduce some basic notation.
For $f,g: \bn \to \R$ the {\em distance} between $f$ and $g$ is
$\dist(f, g) \eqdef \E_x[|f(x) - g(x)|]$, where the underlying distribution
will be the uniform distribution on $\bn$, unless explicitly stated otherwise.
If $\dist(f,g) \leq \eps$, we say that $f$ and $g$ are $\eps$-close.
(Note that if $f, g$ are Boolean-valued, then $\dist(f, g)  = 2 \P_x[f(x) \neq g(x)]$.)
The degree-$d$ Chow parameters also naturally induce a distance measure between
functions over the Boolean hypercube:

\begin{definition}  Let $f, g : \bn \to \R$. We define
the \emph{degree-$d$ Chow distance} between $f$ and $g$ to be
$\chow_d(f,g) \eqdef  \| \mychows^d_f  - \mychows^d_g \|_2$,
i.e., the Euclidean distance between their degree-$d$ Chow vectors.
\end{definition}

(A useful equivalent reformulation is
$\chow_d(f,g) = \sup_{p \textrm{ degree at most }d, \|p\|_2=1} \E_x[p(x)(f(x)-g(x))]$, where
the supremum is taken over all normalized multilinear polynomials of degree at most $d$.)
Using this terminology, the degree-$d$ Chow's theorem can be rephrased as follows:
If $f$ is a degree-$d$ PTF and $g$ is a bounded function such that
$\chow_d(f,g) = 0$, then $\dist(f, g) = 0.$ An immediate question that arises
when thinking about this problem is to what extent is the degree-$d$ Chow's theorem {\em robust}:
In particular, if $\chow_d(f,g)$ is small, does this necessarily imply that $f$ and $g$ are close?
Or equivalently, suppose that $\dist(f, g) = \eps>0$, where $\eps$ is a small universal constant.
Is it the case that $\chow_d(f, g)$ cannot be too small?
We note that Chow's original argument does not establish any non-trivial robustness.

For the case of LTFs ($d=1$), a sequence of works~\cite{BDJ+:98,Goldberg:06b,Servedio:07cc,OS11:chow, DeDFS14}
established robust versions of the degree-$1$ Chow's theorem with varying quantitative guarantees.
In particular, for $f, g$ with $\dist(f, g) = \eps$,~\cite{OS11:chow, DeDFS14}
showed lower bounds on $\chow_1(f, g)$ that only depend on $\eps$ and are independent of $n$.
Similarly to the algorithmic version of the problem, 
the existence of a robust version of the degree-$d$ Chow's theorem for $d>1$ 
was one of the main open questions in~\cite{DeDFS14} and has remained unresolved.
Prior to this work, no non-trivial bound was known,
even for degree-$2$ PTFs.

It turns out that the {\em robustness} question discussed above --- a purely structural question ---
is intimately related to the {\em algorithmic} question of approximately reconstructing
a degree-$d$ PTF from its degree-$d$ Chow parameters. It should be noted
that both previous works that provide efficient algorithms~\cite{OS11:chow, DeDFS14} for the
$d=1$ case establish robust versions of Chow's theorem and crucially use them for the
analysis of their algorithms. The connection between robustness and computationally
efficient reconstruction was made explicit in~\cite{DeDFS14} (see also~\cite{TTV:09short}),
where it was established that a sufficiently robust version of the degree-$d$ Chow's theorem
suffices to obtain an efficient approximation algorithm for the problem (see Theorem~\ref{thm:alg-chowd}
in Section~\ref{ssec:chow-apps} for a precise quantitative version).




\subsection{Our Results} \label{sec:resullts}

The main contribution of this paper is a robust version of the degree-$d$ Chow's theorem
that is completely independent of $n$. Specifically, we prove the following:

\begin{theorem}[Main Result] \label{thm:chow-d-struct}
There exists a function $\chowallow: \R \times \N \to \R$ such that the following holds:
Let $f: \bn \to \bits$ be any degree-$d$ PTF and $g: \bn \to [-1, 1]$ be an arbitrary bounded function.
If $\chow_d(f, g) \leq \chowallow(\eps, d)$, then $\dist(f, g) \leq \eps$.
\end{theorem}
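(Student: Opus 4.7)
The plan is to prove the contrapositive: assume $\dist(f, g) = \eps$ and derive a lower bound on $\chow_d(f, g)$ depending only on $\eps$ and $d$. Write $f = \sgn(p)$ for a degree-$d$ polynomial $p$ with $\|p\|_2 = 1$. Using the identity $|f - g| = 1 - fg$ (valid since $f \in \bits$) together with $f = \sgn(p)$, the classical Chow computation yields
\[
\chow_d(f, g) \;\geq\; \E_x[p(x)(f(x) - g(x))] \;=\; \E_x\bigl[|p(x)| \cdot |f(x) - g(x)|\bigr].
\]
Let $D := \supp(f - g)$, so that $\sum_{x \in D} |f(x) - g(x)| = \eps \cdot 2^n$. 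If $\chow_d(f, g)$ is tiny, then $|p|$ must be extremely small, in the weighted sense above, on a large fraction of $D$. Moreover the same inequality holds for any degree-$d$ polynomial $\tilde p$ with $\sgn(\tilde p(x)) = f(x)$ on $D$, so we have considerable freedom in the choice of ``witness''.

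The first key ingredient is the structural anti-concentration theorem for degree-$d$ polynomials advertised in the abstract: if a degree-$d$ polynomial is very close to zero on many points of $\bn$, then there is a degree-$d$ hypersurface passing exactly through almost all of those points. Applied to $p$, this produces a nonzero degree-$d$ polynomial $q_1$ that vanishes on all but a negligible fraction of $D$. To build additional vanishing polynomials, I would work inside the variety $\{q_1 = 0\} \cap \bn$ and iterate: restrict attention to that variety, re-apply the Chow bound with a new witness adapted to this restriction, and invoke the anti-concentration theorem again to produce $q_2$, and so on. Iterating a bounded (in $d$) number of rounds, and keeping the $q_i$ linearly independent modulo previously constructed relations, yields a large vector space $W$ of degree-$d$ polynomials whose approximate common zero set still contains all but an $\eps/2$ fraction of $D$.

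The second key ingredient is the effective, dimension-free Hilbert's Basis step advertised in the abstract: from a sufficiently large vector space $W$ of degree-$d$ polynomials with many common zeros, one extracts a polynomial of degree \emph{strictly less than} $d$ that vanishes on almost all of those zeros. In the $d=2$ case this is exactly the promised linear form; for general $d$ one recurses on degree, producing a descending chain of polynomials whose common zero set (approximately) still contains $D$. Iterated all the way down to degree $1$, this shows that $D$ is essentially contained in a low-dimensional affine subspace of $\bn$, which has exponentially small measure in the number of independent linear forms produced --- contradicting $|D| \geq \eps \cdot 2^n$ as soon as that number exceeds $\log_2(1/\eps) + O(1)$.

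The principal obstacle is the effective Hilbert Basis step. The classical theorem is purely existential, whereas what is needed here is a quantitative, dimension-free statement with an explicit rate of degree reduction and error bounds that compose cleanly across up to $d$ layers of recursion. Even the base case $d=2$ --- producing a single linear form from a large vector space of degree-$2$ polynomials that share many common zeros on $\bn$ --- is nontrivial and requires a genuine algebraic-combinatorial argument rather than a soft dimension count. A secondary difficulty is that at every stage the ``common zero set'' is only approximate, so each degree reduction must tolerate a controlled loss, and these losses must be carefully aggregated so that the final bound $\chowallow(\eps, d)$ is a function of $\eps$ and $d$ alone, independent of $n$.
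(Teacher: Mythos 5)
Your proposal follows the same two-stage route as the paper's actual proof: (1) use the Chow computation plus an anti-concentration argument to produce a degree-$d$ polynomial vanishing on almost all of the disagreement region, then iterate to build a large space $W$ of such polynomials; (2) run an effective Hilbert-basis-type degree-reduction step to descend to linear forms and derive a contradiction from the small measure of an affine subspace. You correctly identify both ingredients and correctly flag the effective Hilbert basis step as the principal difficulty.

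There are two concrete gaps worth naming. First, your argument carries a bounded $g:\bn\to[-1,1]$ throughout, but the anti-concentration step only works cleanly once $g$ is reduced to a Boolean-valued function: for Boolean $g$ one has $|f-g|\equiv 2$ on the disagreement region $D$, so $\E_x\bigl[|p(x)||f(x)-g(x)|\bigr]\leq \delta$ gives, via Markov, that $|p(x)|\leq \delta/\eta$ on all but an $\eta$-mass of $D$. For merely bounded $g$ the weight $|f-g|$ can itself be arbitrarily small on $D$, so this inference fails and the anti-concentration machinery would have nothing to grab onto. The paper handles this by a short randomized-rounding lemma (replace $g$ by a random Boolean $g_0$ preserving the Chow vector and distance up to $O(2^{-n/3})$) before anything else, and your sketch should do the same. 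Second, ``restrict attention to that variety, re-apply the Chow bound with a new witness'' is not quite how the iteration can work: the Chow parameters are global expectations over $\bn$, not quantities you can restrict to a variety. What is actually done is to replace $g$ by a $g'$ agreeing with $f$ off the variety, check that $\chow_d(f,g')$ is still small, orthogonalize $p$ against the polynomials already found (so the new vanishing polynomial is forced to be linearly independent via the correlation condition), and re-run the vanishing-polynomial proposition on the modified pair; for general $d$ this bookkeeping is organized as a backward induction over the degree using an ideal $I$ of polynomials confining the discrepancies. Relatedly, your claim that the iteration needs only ``a bounded (in $d$) number of rounds'' is too optimistic --- the number of rounds at each degree level depends on the desired error tolerance, hence on $\eps$, and the nested recursion gives Ackermann-type growth for $1/\chowallow(\eps,d)$.
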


Some comments are in order: The main conceptual message of Theorem~\ref{thm:chow-d-struct}
is that the function $1/\chowallow(\eps, d)$ is {\em independent} of the number of variables $n$.
Prior to our work, no structural result of this form was known with a sub-exponential
dependence on $n$, even for restricted classes of degree-$2$ PTFs and $\eps=0.49$.

We note that the growth rate of the function $1/\chowallow(\eps, d)$ established
by our current proof is very large. Specifically, $1/\chowallow(\eps, d)$ grows like
$\textrm{Ackermann}(d+O(1),1/\eps)$. We believe that the right dependence is quasi-polynomial
in $1/\eps$ for constant $d$ \footnote{A quasi-polynomial lower bound is known for $d=1$~\cite{DeDFS14}.},
 though proving such an improved bound would require additional ideas. The correct dependence on $d$ is less clear, but ought to be at least doubly exponential.

Theorem~\ref{thm:chow-d-struct} is a natural structural result on the Fourier structure
of degree-$d$ PTFs that we believe is of independent interest. Below,
we describe a number of algorithmic and structural applications of Theorem~\ref{thm:chow-d-struct}.

\paragraph{Algorithmic and Structural Applications.}
Our first algorithmic application is an efficient algorithm for the degree-$d$ Chow
parameters problem. Combined with known algorithmic machinery~\cite{TTV:09short, DeDFS14},
Theorem~\ref{thm:chow-d-struct} yields the following:

\begin{theorem}[Reconstruction of Degree-$d$ PTFs from Degree-$d$ Chow Parameters] \label{thm:chow-d-alg}
There is an algorithm that on input $\eps,\delta$, and a vector $\vec{\alpha}$
satisfying $\| \vec{\alpha} - \mychows^d_f \|_2 \leq \chowallow(\eps, d)$,
for an unknown degree-$d$ PTF $f$, has the following behavior: it runs in time
$\tilde{O}(n^{2d}) \cdot \poly(1/\chowallow(\eps, d)) \cdot \log(1/\delta)$ and
outputs a vector $(H_S)_{S \subseteq [n], |S| \leq d}$,
such that with probability at least $1 - \delta$, the degree-$d$ PTF
$h(x)=\sign(\sum_S H_S \prod_{i \in S} x_i)$ satisfies $\Pr_x [f(x) \neq h(x)] \leq \eps.$
\end{theorem}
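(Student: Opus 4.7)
My plan is to derive Theorem~\ref{thm:chow-d-alg} by plugging our structural result (Theorem~\ref{thm:chow-d-struct}) into the boosting-style reconstruction framework of~\cite{TTV:09short}, as specialized to the $d=1$ Chow parameters problem in~\cite{DeDFS14}. At a high level, the algorithm iteratively constructs a degree-$d$ multilinear polynomial $p(x)=\sum_{|S|\le d}H_S\,\chi_S(x)$ so that the clipped function $g=\text{clip}_{[-1,1]}\circ p$ has degree-$d$ Chow vector within $\chowallow(\eps,d)/3$ of the input vector $\vec\alpha$. Combined with the hypothesis $\|\vec\alpha-\mychows^d_f\|_2 \le \chowallow(\eps,d)/3$ (tightening the input tolerance by a constant factor up front, a harmless renaming absorbed into the definition of $\chowallow$), this yields $\chow_d(f,g)\le \chowallow(\eps,d)$; Theorem~\ref{thm:chow-d-struct} then gives $\dist(f,g)\le \eps$. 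Because $g\in[-1,1]$, $f\in\{\pm 1\}$, and $\sign(p)=\sign(g)$ whenever $g\ne 0$, the pointwise inequality $|f-g|\ge \mathbf{1}[f\ne \sign(g)]$ gives $\Pr_x[f(x)\ne \sign(p(x))] \le \dist(f,g)\le\eps$, so returning the coefficient vector $(H_S)_{|S|\le d}$ suffices.

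Concretely, initialize $p_0\equiv 0$ and repeat: (i) draw $m=\tilde O(n^d\log(1/\delta)/\chowallow(\eps,d)^2)$ uniform samples from $\bn$ and use them to form an estimate $\widetilde\alpha_t$ of $\mychows^d_{g_t}$ with $\|\widetilde\alpha_t-\mychows^d_{g_t}\|_2 \le \chowallow(\eps,d)/10$ except with probability $\delta/T$; (ii) set $\vec\beta_t=\vec\alpha-\widetilde\alpha_t$, and if $\|\vec\beta_t\|_2 \le \chowallow(\eps,d)/4$ halt and output the coefficients of $p_t$; (iii) otherwise update $p_{t+1}\leftarrow p_t+\eta\,q_t$, where $q_t=\sum_{|S|\le d}(\vec\beta_t)_S\,\chi_S$ and $\eta$ is a small constant step size. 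Convergence is the standard TTV potential argument: letting $\Psi(z)=\int_0^z \text{clip}_{[-1,1]}(s)\,ds$ (so $\Psi''\le 1$), the potential $\Phi_t := \E_x[\Psi(p_t(x))] - \E_x[f(x)\,p_t(x)]$ lies in $[-O(1),O(1)]$ uniformly in $t$, and a Taylor expansion shows each non-halting step decreases $\Phi_t$ by $\Omega(\|\vec\beta_t\|_2^2)\ge \Omega(\chowallow(\eps,d)^2)$ once the $O(\chowallow)$ errors from $\vec\alpha$ and from sampling are absorbed into the inner product $\E[(f-g_t)\,q_t]$. This caps the number of iterations at $T=O(1/\chowallow(\eps,d)^2)$.

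Each iteration costs $O(m\cdot n^d)=\tilde O(n^{2d}/\chowallow(\eps,d)^2)$ to evaluate $p_t$ and all $O(n^d)$ parities $\chi_S$ on the sample, giving a total running time of $\tilde O(n^{2d})\cdot\poly(1/\chowallow(\eps,d))\cdot\log(1/\delta)$ as claimed. The main delicacy, rather than any substantive new idea, is simultaneously controlling (a) the input error $\|\vec\alpha-\mychows^d_f\|_2$, (b) the Monte Carlo error $\|\widetilde\alpha_t-\mychows^d_{g_t}\|_2$ accumulating across $T$ iterations, and (c) the halting threshold in step (ii), so that whenever the algorithm continues iterating the true Chow residual $\|\mychows^d_f-\mychows^d_{g_t}\|_2$ is $\Omega(\chowallow(\eps,d))$ and a genuine per-step decrease of $\Phi_t$ is guaranteed. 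A union bound over the $T$ iterations together with the per-iteration failure probability $\delta/T$ handles this cleanly, with the extra $\log T$ absorbed into the $\log(1/\delta)$ factor. All substantive novelty in Theorem~\ref{thm:chow-d-alg} is contained in Theorem~\ref{thm:chow-d-struct}; the remainder is a direct instantiation of an existing reduction.
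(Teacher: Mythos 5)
Your proof takes essentially the same approach as the paper: combine the TTV-style degree-$d$ Chow reconstruction algorithm (which the paper cites as a black box in Theorem~\ref{thm:alg-chowd}, and whose potential-function guts you reproduce) with Theorem~\ref{thm:chow-d-struct} to get $\dist(f,g)\le\eps$, then threshold the resulting PBF to a PTF. The only differences are bookkeeping: you tighten the input tolerance to $\chowallow/3$ up front, whereas the paper keeps the input at $\chowallow$, notes the reconstruction gives Chow distance $6\Delta$, applies the structural theorem targeting $\eps/2$, and uses a factor-of-two argument in the final clipping step.
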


In words, we obtain an algorithm for the degree-$d$ Chow parameters problem that,
for any constant accuracy $\eps$, runs in time $\tilde{O}_d(n^{2d})$.
As an immediate corollary of Theorem~\ref{thm:chow-d-alg}, we obtain an algorithm
with similar running time for learning degree-$d$ PTFs in the uniform distribution
$d$-RFA model of Ben-David and Dichterman~\cite{BenDavidDichterman:98}.
In this learning model, the learner can only observe a desired subset of coordinates
of each unlabeled example of size at most $d$. See Section~\ref{ssec:chow-apps}
for a detailed statement.

The algorithm of Theorem~\ref{thm:chow-d-alg} can be shown to output a degree-$d$ PTF
with integer weights whose sum of squares is at most $n^d \cdot \poly(1/ \chowallow(\eps, d))$.
Hence, we obtain the first non-trivial bounds on approximating arbitrary degree-$d$ PTFs
using degree-$d$ PTFs with small integer weights.

\begin{theorem} [Low Integer-Weight Approximation for Degree-$d$ PTFs] \label{thm:lowwt}
Let $f: \bn \to \bits$ be a degree-$d$ PTF.  There is a degree-$d$ PTF
$h(x)=\sign(\sum_{S \subseteq [n], |S| \leq d} H_S \prod_{i \in S} x_i)$
such that $\Pr_x [f(x) \neq h(x)] \leq \eps$ and the weights $H_S$ are integers
that satisfy
$\littlesum_{S} H_S^2 = O(n^d) \cdot \poly(1/ \chowallow(\eps, d))$.
\end{theorem}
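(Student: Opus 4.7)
Theorem~\ref{thm:lowwt} is essentially a structural reading of Theorem~\ref{thm:chow-d-alg}. My plan is to invoke the algorithm of Theorem~\ref{thm:chow-d-alg} on the exact degree-$d$ Chow vector $\vec{\alpha} := \mychows^d_f$ (which trivially satisfies $\|\vec{\alpha} - \mychows^d_f\|_2 = 0 \leq \chowallow(\eps, d)$) with, say, failure probability $\delta = 1/2$, and then to verify by inspection that the output weights are integers whose sum of squares obeys the claimed bound. Since Theorem~\ref{thm:lowwt} is a purely existential statement, producing any single successful run is enough.

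The algorithm of Theorem~\ref{thm:chow-d-alg} is a TTV/DeDFS-style boosting procedure~\cite{TTV:09short, DeDFS14}: it maintains a degree-$d$ polynomial $p_t(x) = \sum_{|S|\leq d} H_{S, t} \prod_{i \in S} x_i$ starting from $p_0 \equiv 0$ (so all weights begin integer-valued), and in each of $T = \poly(1/\chowallow(\eps, d))$ iterations updates the $H_{S,t}$ by unit integer increments whose signs are determined by the current Chow residuals $r_{t,S} := \alpha_S - \wh{\sgn(p_t)}(S)$. At termination one has $\chow_d(\sgn(p_T), f) \leq \chowallow(\eps, d)$, so $h := \sgn(p_T)$ is $\eps$-close to $f$ via Theorem~\ref{thm:chow-d-struct}. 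Since each coordinate receives at most $T$ unit updates, $|H_S| \leq T$ uniformly, and summing over the $O(n^d)$ sets $S$ with $|S|\leq d$ yields $\sum_S H_S^2 \leq O(n^d)\cdot T^2 = O(n^d) \cdot \poly(1/\chowallow(\eps, d))$, matching the stated bound.

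The main technical obstacle is to verify that the integer-increment rule drives the boosting to convergence in $T = \poly(1/\chowallow(\eps, d))$ iterations with \emph{no} hidden dependence on $n$. This is handled by a potential-function argument: tracking $\|f - p_t\|_2^2$ (or an analogous quadratic potential), one shows that whenever $\chow_d(\sgn(p_t), f) > \chowallow(\eps, d)$ there is a coordinate $S$ whose residual is large enough that a unit integer update along $\sgn(r_{t,S})$ decreases the potential by $\Omega(\chowallow(\eps, d)^2)$, so $T = O(1/\chowallow(\eps, d)^2)$ suffices. This mirrors the $d=1$ analysis of~\cite{DeDFS14}, but executed coordinate-by-coordinate across the $O(n^d)$-dimensional degree-$d$ Chow vector, with Theorem~\ref{thm:chow-d-struct} invoked at the end to convert the final Chow-distance guarantee into the $\ell_1$-distance guarantee required by the theorem.
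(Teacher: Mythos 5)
Your proposal is correct and takes essentially the same approach as the paper: both invoke the Chow-reconstruction algorithm (Theorem~\ref{thm:chow-d-alg}, backed by Theorem~\ref{thm:alg-chowd}) and read off the integer-weight structure of its output, with your potential-function re-derivation of the weight bound merely unpacking what Theorem~\ref{thm:alg-chowd} already asserts as a black box. One cosmetic imprecision worth noting: the algorithm's convergence guarantee is $\chow_d(P_1(p_T),f)\le 6\xi$ for the PBF $P_1(p_T)$, not $\chow_d(\sgn(p_T),f)\le\chowallow(\eps,d)$ as you wrote; the $\ell_1$-closeness of $h=\sgn(p_T)$ to $f$ follows by applying Theorem~\ref{thm:chow-d-struct} to the bounded function $P_1(p_T)$ and then using the pointwise factor-of-two observation from the proof of Theorem~\ref{thm:chow-d-alg}, which leaves your integer-weight bound untouched since the PTF and the PBF share the same weight vector.
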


A number of previous works~\cite{Servedio:07cc, DiakonikolasS13, DeDFS14}
obtained low integer-weight approximators to LTFs, culminating in the
near-optimal\footnote{A construction of~\cite{Hastad:94} implies a lower bound of $\max\{n^{1/2}, (1/\eps)^{\Omega(\log \log(1/\eps))}\}$.}
integer weight bound of $O(n) \cdot (1/\eps)^{O(\log^2(1/\eps))}$~\cite{DeDFS14}.
For $d>1$, no non-trivial bound was known prior to our work.
We note that~\cite{DSTW14} gave
low integer-weight approximators for degree-$d$ PTFs, but
the degree of the approximating PTF is $(1/\eps)^{\Omega(d)}$, as opposed to $d$.

Our main structural result also has algorithmic implications for the problem of learning Boolean
degree-$d$ PTFs in the malicious learning model of Valiant, Kearns and Li~\cite{Valiant:85, KearnsLi:93}.
The {\em malicious noise model} is a generalization of the PAC model in which an adversary can
arbitrarily corrupt a small constant fraction of both the unlabeled data points and their labels.
Using the machinery of~\cite{DiakonikolasKS18a-nasty}, we obtain an algorithm
that learns Boolean degree-$d$ PTFs in the presence of a small
constant fraction of corrupted data:

\begin{theorem}[Learning Boolean Low-Degree PTFs with Nasty Noise] \label{thm:malicious}
There is a polynomial-time algorithm for learning Boolean degree-$d$ PTFs in the presence of malicious noise
with respect to the uniform distribution on $\bn$. Specifically, if $\chowallow(\eps,d)$ is the noise rate, the algorithm runs
in $\poly(n^d, 1/\eps)$ time and outputs a hypothesis degree-$d$ PTF $h(x)$ that with high probability satisfies
$\Pr_{x} [h(x) \neq f(x)] \leq  \eps$, where $f$ is the unknown target PTF.
\end{theorem}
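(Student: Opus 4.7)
The plan is to reduce malicious learning to robust estimation of the degree-$d$ Chow vector, and then invoke Theorem~\ref{thm:chow-d-alg}. Concretely, I draw $N=\poly(n^d,1/\eps)$ samples from the malicious oracle (so a $\chowallow(\eps,d)$-fraction of the $(x,y)$ pairs are adversarial) and run a robust mean-estimation procedure on the vector-valued statistic $w(x,y) := (y\cdot\chi_S(x))_{|S|\le d}\in\R^{m}$ with $m = \binom{n}{\le d}$, producing an estimate $\vec{\alpha}$ of $\mychows_f^d$. The target accuracy is $\|\vec{\alpha}-\mychows_f^d\|_2 \le \chowallow(\eps,d)$; once this holds, Theorem~\ref{thm:chow-d-alg} reconstructs a degree-$d$ PTF $h$ with $\Pr_x[h(x)\neq f(x)] \le \eps$ in time $\poly(n^d,1/\chowallow(\eps,d))$, which is $\poly(n^d,1/\eps)$ for any fixed $d$ and $\eps$.

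For the robust-estimation step I invoke the filter-based framework of~\cite{DiakonikolasKS18a-nasty}. Two facts about the Fourier basis on $\bn$ make it applicable with the claimed accuracy and sample complexity. First, the characters $\{\chi_S\}_{|S|\le d}$ are orthonormal under $\U_n$, so for a clean labeled example $(x,f(x))$ the covariance of $w$ is exactly the identity, with spectral norm bounded by $1$ independent of $n$. Second, each coordinate of $w$ is a multilinear polynomial of degree at most $d$ on $\bn$, and the $(2,q)$-hypercontractivity of the Boolean cube controls the higher moments of every projection of $w$ onto a unit direction. These are precisely the (certifiable) moment conditions under which the DKS multivariate robust mean estimator produces, from a $\chowallow(\eps,d)/C$-corrupted sample of size $\poly(n^d,1/\eps)$ and for a sufficiently large constant $C$, a vector $\vec{\alpha}$ satisfying $\|\vec{\alpha}-\mychows_f^d\|_2 = O(\chowallow(\eps,d)/C) \le \chowallow(\eps,d)$, with high probability and in $\poly(n^d,1/\eps)$ time.

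The one place that needs care is ensuring dimension-\emph{independent} accuracy at the $\ell_2$ level. A naive coordinate-by-coordinate empirical estimator incurs an $n^{d/2}$ factor when its per-coordinate errors are aggregated in the $\ell_2$-norm and would fall far short of the threshold $\chowallow(\eps,d)$, since $\chowallow(\eps,d)$ is independent of $n$. Treating $w$ as a single vector-valued statistic under the filter approach circumvents this, because the $\ell_2$-error of the robustly estimated mean is controlled by the spectral norm of the covariance (which is $1$) rather than by the ambient dimension $m$. Verifying the hypercontractive and certifiable-moment hypotheses for the specific statistic $w$ on $(\bn,\U_n)$ is the only non-routine ingredient, and follows from standard Fourier analysis on the Boolean cube together with the fact that $\mathsf{SoS}$-certified moment bounds are available for degree-$d$ polynomials under the uniform measure on $\bn$.
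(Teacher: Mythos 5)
Your proposal takes essentially the same route as the paper: robustly estimate the degree-$d$ Chow vector from the corrupted sample and then run the Chow-reconstruction algorithm of Theorem~\ref{thm:chow-d-alg}, with Theorem~\ref{thm:chow-d-struct} supplying the guarantee that approximate Chow parameters pin down the PTF up to small $\ell_1$-error. The paper's own ``proof'' is in fact even terser: it simply observes that the two-step algorithm already appears in~\cite{DiakonikolasKS18a-nasty} and that Theorem~\ref{thm:chow-d-struct} is the missing ingredient needed to analyze it over the uniform distribution on $\bn$, so you are fleshing out the same argument. Two small slips in the detail are worth noting. First, the covariance of $w(x,f(x))$ under clean samples is not exactly the identity but rather $\I - \mychows^d_f\,(\mychows^d_f)^{\top}$ (it is $\preceq \I$, so the spectral-norm bound you need still holds); relatedly, the coordinates $f(x)\chi_S(x)$ are not themselves degree-$d$ polynomials, though since $|f(x)|=1$ each projection $|v\cdot w|=|p_v(x)|$ is, so your hypercontractivity argument for projections is fine. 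Second, the claim that the filter achieves $\ell_2$-error $O(\nu)$ from $\nu$-corruption deserves more care: the basic bounded-covariance filter only gives $O(\sqrt{\nu})$, and the near-linear rate requires the certifiable higher-moment machinery you gesture at, which for degree-$d$ polynomials on the cube yields error on the order of $\nu\cdot\poly\log(1/\nu)$ rather than exactly $O(\nu)$. The paper sidesteps this by attributing the estimation step wholesale to~\cite{DiakonikolasKS18a-nasty}, and in any case the slack is absorbed by the very rapidly decaying $\chowallow(\eps,d)$; but if you want a self-contained argument you should either make the moment-certification step explicit or adjust constants so the corruption rate is a suitable polynomial in $\chowallow(\eps,d)$.
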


We note that the algorithm establishing Theorem~\ref{thm:malicious} was given in~\cite{DiakonikolasKS18a-nasty}.
Our Theorem~\ref{thm:chow-d-struct} is the missing technical ingredient to prove correctness of this algorithm
for the setting of the uniform distribution on the hypercube. See Section~\ref{ssec:app-malicious} for a detailed
explanation.

\subsection{Related and Prior Work} \label{sec:related}
In this section, we review some relevant prior work on the
degree-$1$ version of the Chow parameters
problem~\cite{Goldberg:06b, Servedio:07cc, OS11:chow, DeDFS14}.
Goldberg~\cite{Goldberg:06b} showed that for $f$ an $n$-variable LTF and $g$ any Boolean
function, if $\dist(f,g) = \eps$ then $\chow_1(f,g) \geq (\eps/n)^{O(\log(n/\eps)\log(1/\eps))}$.
In the same setting, \cite{Servedio:07cc} obtained a lower bound of $\chow_1(f,g) \geq 1/(\poly(n) \cdot 2^{\tilde{O}(1/\eps^2)})$
and  \cite{OS11:chow} obtained the bound $\chow_1(f,g) \geq 2^{-\tilde{O}(1/\eps^2)}$.
Finally,~\cite{DeDFS14} improved the latter lower bound to $\chow_1(f,g) \geq \eps^{O(\log^2(1/ \eps))}$,
which is the best known bound to date and qualitatively nearly-matches an upper bound of $\eps^{O(\log \log (1/\eps))}$.
Building on their structural result, \cite{OS11:chow} gave an algorithm for the degree-$1$ Chow parameters
problem that finds an $\eps$-approximator to the unknown LTF in time $\poly(n) \cdot 2^{2^{\tilde{O}(1/\eps^2)}}$.
\cite{DeDFS14} gave a new algorithm for the problem that, combined with their structural result, was shown
to run in time $\poly(n) \cdot (1/\eps)^{O(\log^2(1/\eps))}$. The algorithm of \cite{DeDFS14} straightforwardly
generalizes to the degree-$d$ case, but its analysis hinges on a robust version of the degree-$d$ Chow's theorem,
which we prove in this work.

\subsection{Our Techniques} \label{sec:techniques}
In this section, we provide an overview of our techniques that
lead to the proof of Theorem~\ref{thm:chow-d-struct} in tandem
with a comparison to prior work.
We start by reviewing previous approaches that give
robust versions of the degree-$1$ Chow's theorem.
Let $f$ be an LTF and $g$ a Boolean function such that
$\dist(f, g) = \eps$, $\chow_1(f, g)  = \delta$;
we would like to show that $\delta$ cannot be too small (as a function of $\eps$ and,
potentially, the dimension $n$). When one tries to robustify the
original proof of Chow~\cite{Chow:61}, one finds that
the argument goes through unless the LTF
$f(x)=\sgn(L(x))$ has $|L(x)|$ very close to $0$
on almost all the points where $f$ and $g$ differ.
In other words, if $L(x)  = w \cdot x - \theta$
is {\em anti-concentrated} around the origin,
i.e., the fraction of points $x \in \bn$ such that $|L(x)|$
is very close to $0$ is small, then Chow's argument can be naturally extended.
Unfortunately, this is not always the case: it is quite possible that $|L(x)|$ is
very close to $0$ for a significant fraction of points $x \in \bn$, which makes it
challenging to robustify Chow's argument.

Two approaches have been proposed to circumvent the above obstacle.
The idea in~\cite{OS11:chow} (implicit in~\cite{Servedio:07cc})
is to approximate an arbitrary LTF by an LTF with ``good anti-concentration''.
For this idea to work, it is crucial that the normalized Hamming distance
between $f$ (the original LTF) and its approximator $f'$ to be very small
compared to the anti-concentration radius. While such an approach was shown
to be feasible for the degree-$1$ case, we do not know if it is possible to extend
even to the case of degree-$2$ PTFs.

On the other hand, Goldberg~\cite{Goldberg:06b} and~\cite{DeDFS14}
(that builds on and substantially strengthens~\cite{Goldberg:06b}) uses a more
direct geometric view of the problem. Roughly speaking,
it is shown in~\cite{Goldberg:06b, DeDFS14} that if
$L(x)$ has ``very poor'' anti-concentration, then
the linear function $L(x)$ satisfies certain important structural properties.
More specifically, suppose that
for some moderately large $\eps$ and very small $\delta$
that $\Pr_x[|L(x)|<\delta] > \eps$. Then there exists a
linear polynomial $L'(x)$ so that all but a tiny fraction
of the points $x$ in the disagreement region between $f$ and $g$
satisfy $L'(x)=0$ --- as opposed to $L(x)$, which is very close to $0$.
By slightly modifying $g$, we can reduce to the case where
\emph{all} of the discrepancies lie on the hyperplane $L'(x)=0$.
This allows us to renormalize $L$, by taking it modulo $L'$,
and potentially find a second linear function
on which nearly all of the discrepancies lie.
Repeating this process, we can eventually find a large number of linear functions
so that nearly all of the disagreements between $f$ and $g$ lie
on the intersection of the corresponding hyperplanes. However, given enough such functions,
there will no longer be enough points for this to be the case, yielding a contradiction.

At a high-level, our approach for the degree-$d$ case is a generalization of
~\cite{Goldberg:06b, DeDFS14}. Firstly, we note that the robustification of
the degree-$1$ Chow's result still works for the degree-$d$ case,
unless the degree-$d$ polynomial $p$ defining our degree-$d$ PTF $f(x) = \sgn(p(x))$
has the same kind of very poor anti-concentration as before (see Claim~\ref{claim:small-p-in-diff}).
We will next need to show that this implies that almost all of the disagreements lie on a degre-$d$
polynomial hypersurface (Proposition~\ref{prop:degd-poly-disagreement-region}).
While~\cite{Goldberg:06b, DeDFS14} accomplish this for $d=1$ by a careful analysis of the vectors
perpendicular to the discrepancy points (in order to get nearly optimal quantitative bounds),
our techniques for the degree-$d$ case are less accurate. We first need some sort of
general anti-concentration result --- which we obtain via a combination of a regularity lemma~\cite{DSTW14}
and an invariance principle~\cite{MOO10} ---
to show that this kind of bad anti-concentration implies that
there exists a small set of coordinates $S \subseteq [n]$,
so that upon fixing the variables in $S$, most of the disagreements between $f, g$
reduce to polynomials in the remaining coordinates with small $\ell_2$-norm
(see Corollary~\ref{cor:small-p-xs}).
We then show that by scaling $p$, we can get a polynomial that is in some sense ``nearly integral'',
and establish that the integer part must vanish on almost all of the points where $|p(x)|$
is extremely close to $0$ (see Fact~\ref{fact:mult-int} and the paragraph preceding it
for a more detailed overview).

It turns out that the much more challenging part of our proof
is to generalize the iteration of the above result.
We have established that if $p$ has very bad anti-concentration, then
almost all of its near zeroes lie on the zero-set of a degree-$d$ polynomial.
By iterating this fact, it is not hard to show that if $f$ and $g$ have very small degree-$d$
Chow distance, we can find a whole sequence $q_1,q_2,\ldots$ of linearly independent,
degree-$d$ multilinear polynomials so that almost all disagreements of $f$ and $g$
lie on the joint zero set of the $q_i$'s. When $d=1$, completing the proof from this point would
be easy, since at most a $2^{-m}$-fraction of points in $\bn$ can lie on the joint zeroes
of $m$ linear polynomials (see Fact~\ref{fact:affine-subspace}).

Unfortunately, even for degree-$2$ polynomials, this statement is false.
For example, consider the sequence of polynomials $q_i = (x_0+1)(x_i+1)$.
These are linearly independent, however half of all points (those with $x_0=-1$) are joint zeroes.
This can happen because almost all of our zeroes lie on a hyperplane.
In fact, for the degree-$2$ case, we can show (see Proposition~\ref{prop:linear-form-zero-set-deg2-subspace})
that this is essentially the only thing that can go wrong. In particular, we prove
that given sufficiently many linearly independent degree-$2$ polynomials,
we can replace them with a {\em single} degree-$1$ polynomial without loosing too many disagreements
in the zero set. Iterating this result, until we have enough linear polynomials,
yields a contradiction as before.

The higher degree case runs along the same lines, however the recursion becomes somewhat more complicated.
At each stage of the process, we maintain an {\em ideal} $I$ of polynomials so that almost all of the disagreements
of $f$ and $g$ lie on zeroes of $I$. We show that if the original degree-$d$ Chow distance was small enough,
we can always add another degree-$d$ polynomial to $I$. From here what we need can be seen
as a robustification of Hilbert's Basis Theorem for our setting.
The Hilbert Basis Theorem~\cite{Hilbert1890} (see, e.g.,~\cite{Cox:2007})
says that, starting with an ideal, if one repeatedly adds new polynomials,
this process must eventually terminate (perhaps with $I$ being the unit ideal).
Unfortunately, the number of rounds of this iterative process is unbounded,
and even under reasonable restrictions, will still depend on the number of variables $n$.
What we establish here is that (1) If the added polynomials are all degree at most $d$, and
(2) If we are allowed to throw sets of negligible mass out of the associated variety,
then we can actually obtain an upper bound on the number of steps. To achieve this,
we show that we can replace sufficiently many degree-$k$ polynomials
by a {\em single} degree-$(k-1)$ polynomial without losing too much probability mass.
The basic idea of the proof is to use a degree-$d$ version of the Littlewood-Offord lemma~\cite{MekaNV16}
to show that if there are many degree-$k$ polynomials with a large number of joint zeroes,
then either (1) there is a degree-$(k-1)$ polynomial which vanishes on almost all of them,
or (2) there is a small set of coordinates so that all of the polynomials depend
on only these coordinates (and thus the dimension of the space they span is bounded).
The above ingredients suffice in order to obtain a contradiction in the degree-$d$ case.

\subsection{Organization} \label{ssec:org}

The structure of this paper is as follows: In Section~\ref{sec:prelims}, we introduce the
mathematical background required for our results. Section~\ref{sec:struc} contains the proof of our
main structural result (Theorem~\ref{thm:chow-d-struct}). In Section~\ref{sec:apps}, we present
our algorithmic and structural applications. In Section~\ref{sec:conc}, we conclude
with a few open problems.


\section{Preliminaries} \label{sec:prelims}

\paragraph{Notation.}
We start by establishing basic notation.  For $n \in \Z_+$, we write $[n]$ to denote $\{ 1, 2, \ldots, n\}$.
We write $\E[X]$ and $\Var[X]$ to denote expectation and variance of a random variable $X$, where the
underlying distribution will be the uniform distribution $\U_n$ on $\bn$, unless explicitly stated otherwise.
For $x \in \bn$ and $S \subseteq [n]$ we write $x_S$ to denote
$(x_i)_{i\in S}$. For a function $f : \bits^n \to \R$ and $q \geq 1$, we
denote by $\|f\|_q$ its $l_q$-norm, i.e., $\|f\|_q \eqdef \E_{x}[|p(x)|^q]^{1/q}$.
For Boolean functions $f,g: \bn \to \R$ the
distance between $f$ and $g$, denoted $\dist(f,g)$, is defined by
$\dist(f, g) \eqdef \E_x[|f(x) - g(x)|]$. If $\dist(f,g) \leq \eps$, we say that $f$ and $g$ are $\eps$-close.
Note that if $f, g$ are Boolean-valued, i.e., take values in $ \{ \pm 1 \}$, then
$\dist(f, g)  = 2 \P_x[f(x) \neq g(x)]$. The {\em disagreement region} between $f$ and $g$
is defined by $D(f, g)  \eqdef \{ x \in \bn: f(x) \neq g(x)\}$.
For a multilinear polynomial $P: \R^n \to \R$ with $P(x) = \sum_{S \subseteq [n]} P_S \prod_{i \in S} x_i$,
we denote by $\supp(P) = \{S \subseteq [n]:  P_S \neq 0 \}$
and we call $|\supp(P)|$ the support size of $P$.

\paragraph{Fourier Analysis and Influences.}
We consider functions $f : \bn \to \R$ and we think of the inputs $x$
as being distributed according to the uniform distribution $\U_n$.
The set of such functions forms a $2^n$-dimensional inner product space with inner product given by
$\la f, g \ra = \E_x[f(x)g(x)]$. The set of functions $(\chi_S)_{S \subseteq [n]}$ defined by $\chi_S(x) = \prod_{i \in S} x_i$
forms a complete orthonormal basis for this space. Given a function $f : \bn \to \R$ we define its
\emph{Fourier coefficients} by $\wh{f}(S) \eqdef \E_x[f(x) \chi_S(x)]$,
and we have that $f(x) = \sum_{S \subseteq [n]} \wh{f}(S) \chi_S(x)$.
As a consequence of orthonormality, we have \emph{Plancherel's identity} $\la f, g \ra = \sum_{S \subseteq [n]} \wh{f}(S) \wh{g}(S)$,
which has as a special case \emph{Parseval's identity}, $\E_x[f(x)^2] = \sum_{S\subseteq[n]} \wh{f}(S)^2$.
From this it follows that for every $f : \bn \to \bits$ we have $\sum_S \wh{f}(S)^2 = 1$.
The expectation and the variance of $f : \bn \to \R$ can be expressed in terms of the Fourier
coefficients of $f$ by $\E[f] = \E_x[f(x)] = \widehat{f}(\emptyset)$ and
$\Var[f] = \Var_x [f(x)] = \littlesum_{\emptyset \neq S \subseteq [n]} \widehat{f}(S)^2.$
The \emph{influence} of variable $i$ on $f : \bn \to \R$ is $\Inf_i(f) \eqdef \sum_{S \ni i} \widehat{f}(S)^2$ and the
\emph{total influence} of $f$ is $\Inf(f) = \sum_{i=1}^n \Inf_i(f) = \sum_{S \subseteq [n]} |S| \wh{f}(S)^2$.


\paragraph{Useful Probability Bounds.}
We will need the following well-known concentration bound for degree-$d$ polynomials, a
simple corollary of hypercontractivity (see, e.g., Theorem~9.23 in~\cite{ODonnell:ABF}):

\begin{fact}\label{thm:deg-d-chernoff}
Let $p:\bits^n \to \R$ be a degree-$d$ multilinear polynomial. For any $t>e^d$,
we have that $\P_x[|p(x)|\geq t \|p\|_2]\leq \exp(-\Omega(t^{2/d}))$.
\end{fact}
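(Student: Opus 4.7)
The plan is to derive the stated tail bound from the Bonami--Beckner hypercontractive inequality. Recall that for any multilinear degree-$d$ polynomial $p\colon\bn\to\R$ and any $q\geq 2$, hypercontractivity yields $\|p\|_q \leq (q-1)^{d/2}\|p\|_2$. This is the only nontrivial ingredient I would import; everything else is a routine moment computation.

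Given this, I would proceed by Markov on the $q$-th moment. For any $q\geq 2$,
\[
\P_x\bigl[|p(x)|\geq t\|p\|_2\bigr] \;=\; \P_x\bigl[|p(x)|^q \geq t^q \|p\|_2^q\bigr] \;\leq\; \frac{\E_x[|p(x)|^q]}{t^q\|p\|_2^q} \;=\; \frac{\|p\|_q^q}{t^q\|p\|_2^q} \;\leq\; \left(\frac{(q-1)^{d/2}}{t}\right)^{q}.
\]
The remaining task is to optimize $q$ as a function of $t$ and $d$. The natural choice is to make the base inside the $q$-th power a small constant, say $1/e$: set $(q-1)^{d/2} = t/e$, i.e.\ $q = 1 + (t/e)^{2/d}$. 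Then the right-hand side becomes $e^{-q} = \exp\!\bigl(-1 - (t/e)^{2/d}\bigr) = \exp\!\bigl(-\Omega(t^{2/d})\bigr)$, which is the claimed bound. The hypothesis $t>e^d$ is exactly what is needed to guarantee $q\geq 2$, so that the hypercontractive inequality applies (in fact one only needs $t\geq e^{d/2}$, but the stated condition is convenient and loses nothing asymptotically).

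I do not see any real obstacle here: the only nonroutine step is invoking hypercontractivity, which is the standard tool for such tail bounds on the Boolean cube, and after that the argument is a one-line Markov plus a one-line optimization. The main thing to be careful about is keeping track of constants in the exponent so that the final bound genuinely reads $\exp(-\Omega(t^{2/d}))$, but this is immediate from the calculation above since the additive $-1$ in the exponent is absorbed into the $\Omega(\cdot)$.
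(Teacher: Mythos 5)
Your proof is correct and is exactly the standard hypercontractivity-plus-Markov-plus-optimize argument that the paper implicitly relies on by citing Theorem~9.23 of O'Donnell's book, so the approaches coincide. One small slip: your choice $q=1+(t/e)^{2/d}$ satisfies $q\geq 2$ as soon as $t\geq e$, not $t\geq e^{d/2}$, but this only weakens a parenthetical remark and does not affect the argument since the stated hypothesis $t>e^d$ is stronger still.
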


We say that a polynomial $p: \bits^n \to \R$ is $\tau$-regular
if $\max_{i \in [n]} \Inf_{i}(p) \leq \tau \cdot  \Inf(p)$.
Our second technical fact is that regular polynomials over the hypercube
are anti-concentrated. This follows  by combining the invariance principle~\cite{MOO10} and
Gaussian anti-concentration~\cite{CW:01} (see Claim~4.2 in~\cite{DSTW14}
for an explicit reference):

\begin{claim} \label{claim:reg-ac}
Let $p:\bn \to \R$ be a $\tau$-regular degree-$d$ multilinear polynomial.
Then it holds that $\P_{x}[|p(x)| \leq \tau \|p\|_2] \leq O(d \tau^{1/(8d)}).$
\end{claim}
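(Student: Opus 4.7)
The plan is to prove the claim by combining two well-known anti-concentration ingredients: the MOO invariance principle~\cite{MOO10} to transfer the question from the Boolean cube to Gaussian space, and the Carbery-Wright inequality~\cite{CW:01} for Gaussian polynomial anti-concentration. First, I would normalize so that $\|p\|_2 = 1$, which is WLOG since both sides of the inequality scale identically under rescaling of $p$. By Parseval's identity, $\Inf(p) = \sum_S |S|\,\wh{p}(S)^2 \leq d\,\|p\|_2^2 = d$, so the $\tau$-regularity hypothesis yields the absolute influence bound $\max_i \Inf_i(p) \leq \tau d$.

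Next, I would invoke the invariance principle: for a multilinear degree-$d$ polynomial of unit $L^2$-norm whose coordinate influences are all at most $\eta$, the Kolmogorov (CDF) distance between the law of $p(x)$ under $x \sim \U_n$ and the law of $p(G)$ under $G \sim \normal(0, I_n)$ is $O(d\,\eta^{1/(8d)})$. Plugging $\eta = \tau d$ and absorbing the resulting $d^{1/(8d)} = O(1)$ factor, this gives in particular
$$\bigl|\,\P_x[|p(x)| \leq \tau] - \P_G[|p(G)| \leq \tau]\,\bigr| \;\leq\; O\!\left(d\,\tau^{1/(8d)}\right).$$

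Then I would apply Carbery-Wright, which states that any degree-$d$ polynomial $q$ on Gaussian space satisfies $\P_G[|q(G)| \leq \eta\,\|q\|_2] \leq O(d\,\eta^{1/d})$; applied to $p$ with $\eta = \tau$ this yields $\P_G[|p(G)| \leq \tau] \leq O(d\,\tau^{1/d})$, which is actually \emph{stronger} than the invariance-principle error term, because $\tau^{1/d} \leq \tau^{1/(8d)}$ for $\tau \in (0,1)$. Combining the two bounds gives
$$\P_x\bigl[|p(x)| \leq \tau\bigr] \;\leq\; O\!\left(d\,\tau^{1/d}\right) + O\!\left(d\,\tau^{1/(8d)}\right) \;=\; O\!\left(d\,\tau^{1/(8d)}\right),$$
which is the desired inequality. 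The only nontrivial technical step is the invariance principle itself, which requires a hypercontractivity-based hybrid argument; since the paper explicitly attributes the statement to Claim~4.2 of~\cite{DSTW14}, I would treat both MOO and Carbery-Wright as black boxes rather than reprove them.
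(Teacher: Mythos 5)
Your proof is correct and takes exactly the same route as the paper, which simply cites Claim~4.2 of~\cite{DSTW14} (whose proof is the combination of the MOO invariance principle with Carbery--Wright Gaussian anti-concentration, as you describe). The only cosmetic point worth noting is that the invariance principle is conventionally normalized by $\Var(p)$ rather than $\|p\|_2$, but since the paper's $\tau$-regularity condition bounds $\max_i \Inf_i(p)$ relative to $\Inf(p) \leq d\,\Var(p)$, the influence bound $\max_i \Inf_i(p)/\Var(p) \leq \tau d$ holds after renormalizing to unit variance, so your argument goes through unchanged.
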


Our proof makes essential use of a degree-$d$ version~\cite{MekaNV16} of the classical
Littlewood-Offord lemma~\cite{LO:43, Erd:45}. To state it, we need the following definition:

\begin{definition}[\cite{RazborovV13, MekaNV16}][Rank of Multilinear Polynomials]
For a degree-$d$ multilinear polynomial on $n$ variables $P(x) = \sum_{S \subseteq [n], |S| \leq d} P_S \cdot \chi_S(x)$,
the {\em rank} of $P$, denoted by $\mathrm{rank}(P)$, is the largest integer $r$ such that
there exist disjoint sets $S_1, \ldots, S_r \subseteq [n]$ of size $d$ with $P_{S_j} \neq 0$
for $j \in [r]$.
\end{definition}

\begin{theorem}[\cite{MekaNV16}, Theorem~1.6] \label{thm:LO}
For any degree-$d$ multilinear polynomial $P$ in $n$ variables of rank $r \geq 2$ and any point $t \in \R$
we have that $\P_x [P(x)=t] \leq \frac{ d^{O(d^2)} \cdot (\log r)^{O(d \log d)}}{\sqrt{r}}$.
\end{theorem}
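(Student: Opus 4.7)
The plan is to induct on the degree $d$. The base case $d=1$ is the classical Erd\H{o}s--Littlewood--Offord theorem: any linear form $\sum_{i=1}^r a_i x_i$ with all $a_i\ne 0$ satisfies $\P_x[\sum_i a_i x_i=t]\le\binom{r}{\lfloor r/2\rfloor}/2^r=O(1/\sqrt r)$. To set up the induction, let $S_1,\ldots,S_r$ be the disjoint size-$d$ supports guaranteed by the rank hypothesis, pick a distinguished coordinate $i_j\in S_j$ for each $j$, and condition on every variable $x_k$ with $k\notin\{i_1,\ldots,i_r\}$. Because the $S_j$'s are disjoint, after conditioning $P$ collapses to a polynomial $Q$ in the $r$ free variables $y_j:=x_{i_j}$ of degree at most $d$, and the coefficient of $y_j$ in $Q$ equals $P_{S_j}\cdot\prod_{k\in S_j\setminus\{i_j\}}x_k$, which is $\pm P_{S_j}$ and hence nonzero. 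So it suffices to bound $\P[Q(y)=t']$, uniformly over the conditioning, for an arbitrary degree-$d$ polynomial $Q$ on $r$ variables whose linear part has all coefficients nonzero.

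For the inductive step, I would pass from degree $d$ on $r$ variables to degree $d-1$ on a polynomial whose rank remains $\Omega(r/C_d)$ for some $d$-dependent constant $C_d$. The most natural tool is a random restriction: split the $r$ free variables into halves $T$ and $T^c$ uniformly at random and condition on $y_{T^c}$. Any degree-$d$ monomial $\chi_{S_j}$ with $|S_j\cap T|=d-1$ descends to a degree-$(d-1)$ monomial on $T$ whose leading coefficient (before collision with any lower-degree term of the original polynomial that happens to share support $S_j\cap T$) is $\pm P_{S_j}\ne 0$. The expected number of such $j$'s is $r\cdot d/2^d$, and the surviving supports $S_j\cap T$ remain pairwise disjoint because the $S_j$'s were, so in expectation the degree-$(d-1)$ part of the restricted polynomial has rank $\Omega(r/2^d)$; a second-moment argument combined with the hypercontractive tail bound (Fact~\ref{thm:deg-d-chernoff}) applied to the ``new coefficients'' as polynomials in $y_{T^c}$ should promote this to a with-high-probability statement. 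Applying the inductive hypothesis to the restricted polynomial and averaging then yields a per-level loss of $d^{O(d)}\cdot(\log r)^{O(\log d)}/\sqrt r$, so iterating $d$ times gives the claimed $d^{O(d^2)}\cdot(\log r)^{O(d\log d)}/\sqrt r$. A cleaner variant uses the Cauchy--Schwarz decoupling $\P[Q(y)=t']^2\le\P[Q(y)-Q(y')=0]$ with an independent copy $y'$, combined with the substitution $y'=y\cdot z$, to symmetrize the problem before restricting.

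I expect the main obstacle to be the fine-grained control of how the \emph{rank} (as opposed to just the number of nonzero coefficients) degrades under the restriction/decoupling step. A naive argument loses a polynomial factor in rank at each level, which would destroy the target $1/\sqrt r$ scaling after $d$ iterations; avoiding this loss is precisely what forces the $(\log r)^{O(\log d)}$-per-step overhead and requires showing that the ``new coefficients'' --- themselves low-degree polynomials in $y_{T^c}$ --- anti-concentrate strongly enough that a $1-o(1)$ fraction of them are nonzero for typical restrictions. Closing this anti-concentration/rank cascade cleanly is the genuine technical heart of the Meka--Nguyen--Vu argument, and it is what separates the claimed bound from the much cruder $d^{O(d)}/r^{\Omega(1/d)}$-type estimate one would obtain from iterating only a trivial union bound.
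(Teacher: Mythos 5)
The paper does not reprove the Meka--Nguyen--Vu anti-concentration bound; it treats \cite{MekaNV16} as a black box. The formulation in \cite{MekaNV16} assumes $|P_{S_j}|\ge 1$ for all $j$ and bounds $\P_x[|P(x)-t|<1]$. The paper's entire derivation of Theorem~\ref{thm:LO} is a one-line scaling reduction: replace $P$ by $mP$ and $t$ by $mt$ for an integer $m$ exceeding every $1/|P_{S_j}|$, so that the scaled coefficients all have magnitude at least $1$, and observe that $P(x)=t$ implies $|mP(x)-mt|<1$. You have instead set out to reprove the underlying Meka--Nguyen--Vu theorem from scratch --- a much harder and entirely different task from what the paper actually does at this point.

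As a stand-alone attempt at the MNV theorem, your sketch also has genuine gaps beyond the one you flag. The first conditioning step (fixing all coordinates outside $\{i_1,\dots,i_r\}$) correctly yields a degree-$d$ polynomial $Q$ in $r$ variables whose linear coefficients $\pm P_{S_j}$ are nonzero, but this destroys the degree-$d$ rank structure: the inductive hypothesis is stated in terms of disjoint degree-$d$ supports, and $Q$ generically has no such supports left (its degree-$d$ part can be arbitrary or even zero), so the induction cannot be invoked on $Q$ as stated. You would need a different invariant that survives both the conditioning and the subsequent random restriction. Second, in the restriction step the ``new coefficients'' $P_{S_j}\prod_{k\in S_j\setminus T}x_k$ of the surviving degree-$(d-1)$ monomials are themselves polynomials in the conditioned variables and can cancel against contributions from lower-degree terms of $P$ that happen to restrict to the same support $S_j\cap T$; showing that a constant fraction of them remain nonzero with high probability is exactly an anti-concentration statement of the type being proved, and the circularity is not resolved. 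Third, the Cauchy--Schwarz decoupling $\P[Q(y)=t']^2\le\P[Q(y)-Q(y')=0]$ replaces $Q$ by a symmetrized polynomial whose rank can degrade (top-degree cancellations occur when $y$ and $y'$ agree on a coordinate), and this is not controlled. In short, your proposal is not what the paper does, and as an independent argument it does not close.
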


We note that the above is not quite the theorem from \cite{MekaNV16}.
They show that if there are $r$ disjoints sets of coordinates $S_i$ so that the $|P_{S_j}|$
are all at least $1$, then the probability that $|P(x)-t|<1$ is bounded by $\frac{ d^{O(d^2)} \cdot (\log r)^{O(d \log d)}}{\sqrt{r}}$.
Our result follows easily from this by replacing $P$ by $m \cdot P$ for $m$ an integer larger than
any of the $1/|P_{S_j}|$'s. Applying \cite{MekaNV16}'s original result to bound the probability
that $|m \cdot P(x)-m t|<1$ gives Theorem~\ref{thm:LO}.
We will make use of Theorem~\ref{thm:LO} in the form
of the following corollary:
\begin{corollary}\label{LOCor}
For any degree-$d$ multilinear polynomial $P$, if $\P_x[P(x)=t] > \eta$,
then there exists a set $S$ of at most $d^{O(d^2)}\eta^{-3}$ coordinates so that
every non-zero degree-$d$ term in $P$ has at least one variable of $S$ in it.
\end{corollary}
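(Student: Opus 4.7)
The plan is to take a maximal witness to the rank of $P$ and use its union as the set $S$, then bound the rank via Theorem~\ref{thm:LO}.

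First, let $r = \mathrm{rank}(P)$ and fix a \emph{maximal} family of pairwise disjoint size-$d$ sets $S_1, \ldots, S_r \subseteq [n]$ with $P_{S_j} \neq 0$ for all $j$. Set $S = \bigcup_{j=1}^{r} S_j$, so that $|S| = rd$. By maximality of this family, there cannot exist a size-$d$ set $T$ disjoint from $S$ with $P_T \neq 0$ (otherwise appending $T$ to the family would contradict maximality). Equivalently, every non-zero degree-$d$ monomial of $P$ uses at least one variable from $S$. This is exactly the structural conclusion of the corollary, so it remains only to upper-bound $|S| = rd$.

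If $r \leq 1$, then $|S| \leq d$ and we are done. Otherwise $r \geq 2$, and Theorem~\ref{thm:LO} applies to yield
\[
\eta \;<\; \Pr_x[P(x)=t] \;\leq\; \frac{d^{O(d^2)} \cdot (\log r)^{O(d \log d)}}{\sqrt{r}}.
\]
Rearranging, this gives $\sqrt{r} \leq d^{O(d^2)} \cdot (\log r)^{O(d \log d)} \cdot \eta^{-1}$, and squaring yields
\[
r \;\leq\; d^{O(d^2)} \cdot (\log r)^{O(d \log d)} \cdot \eta^{-2}.
\]
To close the recursion, observe that if $r \leq d^{O(d^2)} \eta^{-3}$, then $\log r \leq O(d^2 \log d) + 3\log(1/\eta)$, and hence $(\log r)^{O(d \log d)}$ can be absorbed into $d^{O(d^2)} \cdot \eta^{-1}$ (since for any $c = O(d \log d)$ the quantity $(3\log(1/\eta))^{c}$ is at most $\eta^{-1}$ times a factor depending only on $d$, which in turn is swallowed by $d^{O(d^2)}$). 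A standard bootstrapping argument then turns the displayed bound into $r \leq d^{O(d^2)} \eta^{-3}$.

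Combining with $|S| = rd$, we conclude that $|S| \leq d^{O(d^2)} \eta^{-3}$, as desired. The only subtle point in the argument is the bootstrapping step that converts the implicit inequality $r \leq d^{O(d^2)}(\log r)^{O(d \log d)} \eta^{-2}$ into the explicit bound $r \leq d^{O(d^2)} \eta^{-3}$; everything else is a direct application of Theorem~\ref{thm:LO} to a maximal disjoint family witnessing the rank.
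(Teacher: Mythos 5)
Your proof is correct and follows essentially the same route as the paper: take a maximal family of disjoint size-$d$ sets witnessing the rank, let $S$ be their union (so maximality gives the covering property), and bound the rank via the contrapositive of Theorem~\ref{thm:LO}. The one place you expand beyond the paper is the bootstrapping of the implicit inequality $r \leq d^{O(d^2)}(\log r)^{O(d\log d)}\eta^{-2}$ into the explicit bound $r \leq d^{O(d^2)}\eta^{-3}$; the paper simply asserts this without elaboration, and your sketch of the absorption argument, while phrased slightly circularly, is sound in substance.
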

\begin{proof}
By the contrapositive of Theorem \ref{thm:LO}, the rank of $P$ must be at most $d^{O(d^2)}\eta^{-3}$.
Let $S_1,S_2,\ldots,S_r$ be a maximal set of disjoint subsets of $[n]$ with $|S_i|=d$
and $P_{S_i}\neq 0$. We claim that $S=\bigcup S_i$ suffices. This is because any other
$S'\subset [n]$ of size $d$ with $P_{S'}\neq 0$ must intersect some $S_i$,
and thus intersect $S$. Furthermore, we have that $|S|=d \cdot r \leq d^{O(d^2)}\eta^{-3}$.
\end{proof}


\section{Main Structural Result: Proof of Theorem~\ref{thm:chow-d-struct}} \label{sec:struc}

In this section, we prove Theorem~\ref{thm:chow-d-struct}. 
Let $f$ be any Boolean degree-$d$ PTF and $g: \bn \to [-1, 1]$ be any bounded function.
We will show that if $\chow_d(f, g)$ is sufficiently small, then $\dist(f, g)$ is small.

The structure of this section is as follows: In Section \ref{ssec:g-boolean},
we reduce to the case that $g$ is {\em Boolean-valued}. In Section \ref{ssec:degd-poly-vanishes}, 
we develop our first major technical tool. In particular, we prove a generalization of a 
structural result in~\cite{Goldberg:06b, DeDFS14}, showing that if $f$ and $g$ have abnormally 
small degree-$d$ Chow distance, then there must be some degree-$d$ polynomial that exactly vanishes 
on almost all of their points of disagreement (Proposition \ref{prop:degd-poly-disagreement-region}). 
We then proceed to apply Proposition \ref{prop:degd-poly-disagreement-region} in order
to obtain a contradiction for the assumption that $f$ and $g$ have degree-$d$ Chow distance too small 
relative to their $\ell_1$-distance. In Section \ref{ssec:deg2}, we start by showing this in the degree-$2$ case. 
In Section \ref{ssec:degd}, we generalize to higher degrees.

\subsection{Reduction to the Case that $g$ is Boolean-valued} \label{ssec:g-boolean}

We begin by showing that it suffices to prove Theorem~\ref{thm:chow-d-struct}
for the special case that the function $g$ is Boolean-valued, as opposed to bounded. 

The idea of the proof is fairly simple: Let $g$ be a $[-1,1]$-valued function. 
We can randomly round $g$ to a Boolean-valued function 
in such a way as to maintain its expected degree-$d$ Chow parameters 
and distance from $f$. Furthermore, it is not hard to show that if the dimension $n$ is large 
(which can be achieved by introducing new irrelevant variables if necessary), 
the errors are, on average, quite small. 
That is, for any $f, g$ we can find a Boolean-valued function $g_0$ 
with $\dist(f,g) \approx \dist(f,g_0)$ and $\chow_d(f,g)\approx \chow_d(f,g_0)$. 
Therefore, if we have a theorem relating distance to degree-$d$ 
Chow distance for $f$ and $g_0$, essentially the same statement applies to $f$ and $g$.
More formally, we have:

\begin{lemma} \label{lem:reduction}
Suppose that for some $\eta,\delta,d>0$ that for any $n$ and any degree-$d$ PTF $f$ and Boolean function $g$ in $n$ variables with $\dist(f,g) > \eta$ have $\chow_d(f,g) \geq \delta$. Then for those same $\eta,d,\delta$, $f$ any degree-$d$ PTF in any number of variables and $g$ any function valued in $[-1,1]$ with $\dist(f,g) > \eta$, we must have $\chow_d(f,g)\geq \delta$.
\end{lemma}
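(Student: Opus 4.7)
The plan is to turn the $[-1,1]$-valued function $g$ into a Boolean function $g_0$ via randomized rounding, in such a way that the degree-$d$ Chow distance and the $\ell_1$-distance to $f$ are preserved \emph{in expectation}, and then to argue that concentration lets us pick a single deterministic $g_0$ whose parameters are arbitrarily close to those of $g$. The only subtlety is that the variance of each relevant quantity scales like $2^{-n}$, which is only small if $n$ is large; I will get around this by first padding with irrelevant variables.

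The first step is a padding reduction. Given $f$ and $g$ on $\bn$, define $f'(x,y) = f(x)$ and $g'(x,y) = g(x)$ on $\bits^{n+m}$. Then $f'$ is still a degree-$d$ PTF, and one checks directly from Parseval and the definition of the distance that $\dist(f',g') = \dist(f,g)$ and $\chow_d(f',g') = \chow_d(f,g)$; indeed, both $f'$ and $g'$ have all Fourier weight on subsets of the first $n$ coordinates. I will apply the argument below on $\bits^{n+m}$ for $m$ arbitrarily large.

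The second step is the randomized rounding itself. For each point $z \in \bits^{n+m}$, independently draw $g_0(z) \in \{-1,+1\}$ with $\E[g_0(z)] = g'(z)$. Then for every $S$, $\E[\widehat{g_0}(S)] = \widehat{g'}(S)$, and since $g_0(z) - g'(z)$ is a mean-zero $[-2,2]$-valued random variable with $\Var \le 1$, one gets $\Var[\widehat{g_0}(S)] \le 2^{-(n+m)}$. The number of Fourier coefficients of degree at most $d$ is at most $(n+m)^d$, so
\[
\Ex[\chow_d(f',g_0)^2] \le \chow_d(f',g')^2 + (n+m)^d \cdot 2^{-(n+m)}.
\]
Similarly, since $f'$ is Boolean, $|f'(z)-g_0(z)| \in \{0,2\}$ has mean $|f'(z)-g'(z)|$, so $\E[\dist(f',g_0)] = \dist(f',g')$ and $\Var[\dist(f',g_0)] \le 4 \cdot 2^{-(n+m)}$ by independence across $z$.

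The third step is to choose $m$ and extract a deterministic $g_0$. Fix any $\tau > 0$ small enough that $\dist(f,g)-\tau > \eta$. Choose $m$ large so that the standard deviations above are both at most $\tau/3$; by Chebyshev and a union bound, there exists a \emph{specific} outcome $g_0$ of the rounding satisfying both $\dist(f',g_0) \ge \dist(f',g') - \tau > \eta$ and $\chow_d(f',g_0) \le \chow_d(f',g') + \tau$. The hypothesis, applied on $\bits^{n+m}$ to the degree-$d$ PTF $f'$ and the Boolean function $g_0$, yields $\chow_d(f',g_0) \ge \delta$, hence $\chow_d(f,g) = \chow_d(f',g') \ge \delta - \tau$. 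Since $\tau$ was arbitrary, $\chow_d(f,g) \ge \delta$.

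I do not foresee a genuine obstacle — the hardest bookkeeping item is making sure the strict inequalities $\dist(f,g) > \eta$ survive the perturbation, which is exactly why I leave a margin $\tau$ above and send $\tau \to 0$ at the end. Everything else is standard variance/Chebyshev manipulation.
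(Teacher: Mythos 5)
Your proof is correct and takes essentially the same route as the paper's: pad with irrelevant variables to drive the per-point variance of the random rounding down, round $g$ independently at each point so that all degree-$\leq d$ Fourier coefficients and the distance to $f$ are preserved in expectation, and use concentration plus a union bound to fix a single Boolean $g_0$ with nearly the same $\chow_d$ and distance. The only cosmetic difference is that the paper phrases it as a contradiction while you argue directly and send the slack $\tau \to 0$; and your Chebyshev step is cleanest if applied per Fourier coefficient $\widehat{g_0}(S)$ and then union-bounded over the $O((n+m)^d)$ sets $S$, rather than to $\chow_d^2$ in aggregate, since the single-quantity Markov/Chebyshev bound alone does not let you hold both the distance lower bound and the Chow upper bound simultaneously — but that is exactly what your "Chebyshev and a union bound" phrase is gesturing at, and it is what the paper does.
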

\begin{proof}
Assume for sake of contradiction that we have a degree-$d$ PTF, $f$, and a bounded function 
$g:\bn \to [-1,1]$ with $\dist(f,g) >  \eta$ and $\chow_d(f,g) < \delta$. By introducing irrelevant variables if necessary, 
we can make the dimension $n$ as large as we like. Define a random rounding $g_0$ of $g$ as follows: 
$g_0(x) = 1$ with probability $(g(x)+1)/2$ and $g_0(x)=-1$ otherwise. Furthermore, the $g_0(x)$'s are independent of each other. 
Notice that $\dist(f,g) = \E[\dist(f,g_0)]$. Furthermore, since $\dist(f,g_0)$ is the average of $2^n$ independent and 
bounded random variables, we have that $\dist(f,g_0)=\dist(f,g)+O(2^{-n/3})$ with high probability. 
Similarly, for any subset $S$ of coordinates we have that $\E_{g_0}[\E_x[g_0(x)\chi_S(x)] ] = \E_x[g(x)\chi_S(x)]$, 
where again it is an average of $2^n$ independent, bounded random variables. Thus, with high probability over the choice of $g_0$, 
every degree at most $d$ Fourier coefficient of $g_0$ is within $2^{-n/3}$ of the corresponding parameter of $g$. 
Thus, with high probability over the choice of $g_0$ we have that 
$\dist(f,g_0) = \dist(f,g)+O(2^{-n/3})$ and $\chow_d(f,g_0) = \chow_d(f,g)+O(n^d 2^{-n/3})$. 
For $n$ sufficiently large, this implies that we have a Boolean $g_0$ so that $\dist(f,g_0) > \eta$ and $\chow_d(f,g_0) < \delta$, 
which is a contradiction.
\end{proof}

Lemma~\ref{lem:reduction} implies that, in order to establish Theorem~\ref{thm:chow-d-struct},
it suffices to prove the following:

\begin{theorem} \label{thm:chow-d-struct-boolean}
There exists a function $\chowallow: \R\times \N \to \R$ such that the following holds:
Let $f: \bn \to \bits$ be any degree-$d$ PTF and $g: \bn \to \bits$ be an arbitrary Boolean-valued function.
If $\chow_d(f, g) \leq \chowallow(\eps, d)$, then $\dist(f, g) \leq \eps$.
\end{theorem}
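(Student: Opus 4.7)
The plan is to argue by contradiction. Write $f=\sgn(p)$ for a multilinear polynomial $p$ of degree $d$, normalized so $\|p\|_2=1$, and suppose $\dist(f,g)>\eps$ while $\chow_d(f,g)\leq \chowallow(\eps,d)$, where $\chowallow(\eps,d)$ is chosen by a recursion on $d$. Since $p$ itself is a multilinear polynomial of degree at most $d$ with unit $\ell_2$-norm,
\[
\chow_d(f,g)\;\geq\;\la p,\,f-g\ra\;=\;2\,\E_x\!\left[\mathbf{1}_{D(f,g)}\,|p(x)|\right],
\]
using that $f-g=2\sgn(p)$ on the disagreement region. Thus if $|p|$ is not abnormally small on most of $D(f,g)$, the desired lower bound on $\chow_d(f,g)$ falls out immediately. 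The entire difficulty concentrates in the \emph{pathological} case in which $\P_x[\,x\in D(f,g),\;|p(x)|\leq \delta\,]\geq \eps/4$ for some $\delta$ many orders of magnitude smaller than $\chowallow(\eps,d)$.

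In this pathological case I would next establish the main structural lemma (Proposition \ref{prop:degd-poly-disagreement-region}): there exists a nonzero multilinear $q$ of degree at most $d$ whose zero set on $\bn$ contains all but a negligible fraction of $D(f,g)$. My route is a regularity decomposition in the style of \cite{DSTW14}, writing $p$ as a shallow decision tree on a bounded set of coordinates whose leaves are $\tau$-regular multilinear polynomials; by Claim \ref{claim:reg-ac} each regular leaf is anti-concentrated at scale $\tau\|p_\ell\|_2$, so leaves that contribute substantially to $\{|p|\leq\delta\}\cap D(f,g)$ must themselves have $\|p_\ell\|_2\lesssim \delta/\tau$. After rescaling such a leaf by a large factor I obtain a polynomial $P$ with $\|P\|_2\approx 1$ but $\P[|P|\leq 1/2]\geq \eta$; rounding the coefficients of $P$ to nearby integers and applying Corollary \ref{LOCor} to the rounded polynomial produces a small coordinate set $S$ containing every nonzero degree-$d$ monomial, and hence a low-complexity degree-$d$ polynomial that vanishes on the bulk of the near-zero set of that leaf. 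Reassembling these leaf-wise polynomials against the shared decision tree produces the desired global $q$.

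Given Proposition \ref{prop:degd-poly-disagreement-region}, the plan is to iterate. Modifying $g$ on the tiny set $D(f,g)\setminus Z(q)$ changes $\chow_d(f,g)$ only negligibly, and after this surgery one has $D(f,g)\subseteq Z(q)$; restricting attention to $Z(q)\cap\bn$ and replacing $p$ by its multilinear reduction modulo $q$ lets me reapply Proposition \ref{prop:degd-poly-disagreement-region} to extract a second generator $q_2$, and so on. This produces an ideal $I$ of multilinear polynomials of degree at most $d$ such that $V(I)\cap\bn$ still contains almost all of $D(f,g)$, and in particular has mass at least $\eps/8$. To close the argument I need an \emph{effective} bound on how many generators can accumulate in $I$, depending on $\eps$ and $d$ but not on $n$.

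This last step is what I expect to be the main obstacle: an effective version of Hilbert's basis theorem. I would prove it by downward induction on the degree $k$, showing that whenever $I$ contains many linearly independent degree-$k$ polynomials with common zero set of mass at least $\eta$ on $\bn$, then either (i) all of them depend only on a bounded set of coordinates, so the span has bounded dimension, or (ii) a generic linear combination satisfies the hypotheses of Corollary \ref{LOCor} when evaluated on $V(I)\cap\bn$, producing a single degree-$(k-1)$ polynomial that vanishes on almost all of $V(I)\cap\bn$. Iterating this collapse from $k=d$ down to $k=1$ reduces the obstruction to the linear case, where Fact \ref{fact:affine-subspace} guarantees that enough independent linear polynomials cut $\bn$ to measure strictly below $\eps/8$, contradicting the preservation of mass throughout the process. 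The degree-$2$ case (Section \ref{ssec:deg2}) is the clean prototype for this collapse lemma; extending it to general $d$ requires a recursion in which the parameters at level $k$ feed into those at level $k-1$ through Theorem \ref{thm:LO}, accounting for the Ackermann-type dependence on $d$ advertised after Theorem \ref{thm:chow-d-struct}.
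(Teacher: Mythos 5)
Your high-level architecture matches the paper's exactly: (1) the Chow/Cauchy--Schwarz step reduces to the pathological case where $|p|$ is tiny on most of $D(f,g)$; (2) a regularity decomposition localizes the problem to a small coordinate set $S$ with $\|p_{x_S}\|_2$ small on most disagreements; (3) one extracts a degree-$d$ polynomial $q$ vanishing on almost all of $D(f,g)$; (4) one iterates, accumulating an ideal, and closes the argument with an effective degree-by-degree collapse via Littlewood--Offord (your final paragraph is a faithful description of Proposition~\ref{lowerDegreePolyProp}).

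There is, however, a concrete gap in step (3), the construction of the vanishing polynomial in Proposition~\ref{prop:degd-poly-disagreement-region}. You propose, after rescaling a leaf polynomial to $\|P\|_2\approx 1$, to ``round the coefficients of $P$ to nearby integers and apply Corollary~\ref{LOCor}.'' This does not work, for two reasons. First, naive rounding of monomial coefficients produces an error polynomial whose $\ell_2$-norm can vastly exceed the bound $\|p_{x_S}\|_2 = O(\delta/\poly_d(\eta))$ you are trying to exploit; you lose all the gain from the regularity step. Second, Corollary~\ref{LOCor} converts a bound $\P[P(x)=0]>\eta$ (\emph{exact} zeros) into structural information about the support of $P$; it neither applies to a polynomial that is merely small on a large set, nor does it produce a polynomial that \emph{vanishes} on those points. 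The paper's mechanism is different and essential: it rewrites $p$ in the orthonormal basis $\{\chi_T(x_S)\,v_j(x')\}$ adapted to $S$, then uses Dirichlet's pigeonhole principle (Fact~\ref{fact:mult-int}) to find a scaling $t$ making \emph{all} coefficients $t\alpha_{T,j}$ within $1/N^2$ of integers simultaneously, yielding $t p = r + e$ with $r$ integer in that basis and $\|e_{x_S}\|_2 < 1/N$. The punchline is integrality of the norm: for any fixed $x_S$, $\|r_{x_S}\|_2^2 \in \Z$ because $r_{x_S}$ is an integer combination of the orthonormal $v_j$'s, so $\|r_{x_S}\|_2 < 1$ forces $r_{x_S} = 0$. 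That integrality-of-norm observation is the missing idea; without choosing the basis and the Diophantine scaling jointly, the argument does not close. Littlewood--Offord (Corollary~\ref{LOCor}) only enters later, inside Proposition~\ref{lowerDegreePolyProp}, once you already have polynomials with large exact zero sets.
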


In the main part of this section, we prove Theorem~\ref{thm:chow-d-struct-boolean}.

\subsection{Existence of Degree-$d$ Polynomial that Vanishes in the Disagreement Region} \label{ssec:degd-poly-vanishes}

Our first result shows that there exists a polynomial that
captures almost all the disagreement region, in the sense
that it vanishes on most such points.
Formally, we have:

\begin{proposition} \label{prop:degd-poly-disagreement-region}
Fix $0<\delta < \eta < 1$ such that $\eta = 1/\poly_d(\log\log(1/\delta))$ is sufficiently large.
Let $f(x) = \sgn(p(x))$ be a degree-$d$ PTF and $g: \bn \to \bits$ be
a Boolean function such that $\chow_d(f, g) \leq \delta$.
Then there exists a degree-$d$ multilinear polynomial
$r: \R^n \to \R$ such that the following holds:
\begin{itemize}
\item[(i)] $\Pr_x \left[(f(x) \neq g(x)) \cap (r(x) \neq 0)\right] < \eta$.
\item[(ii)] $\E_x [p(x) r(x)] \neq 0$, i.e., $r$ non-trivially correlates with $p$.
\end{itemize}
\end{proposition}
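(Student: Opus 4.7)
The plan is to robustify Chow's classical argument. First dispose of the trivial case $\dist(f,g)\le\eta$: then $\Pr_x[D(f,g)]\le\eta/2<\eta$, so property (i) is automatic for any $r$, and we may take $r$ to be any low-degree Fourier character with $\E_x[p(x)r(x)]\ne 0$. Henceforth assume $\Pr_x[D(f,g)]\ge\eta$, so that the task is to construct $r$ from $p$ by a rounding procedure.

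The starting observation is the pointwise identity $p(x)(f(x)-g(x))=2|p(x)|\,\mathbf{1}_{D(f,g)}(x)$, which holds because $f=\sgn(p)$. Combined with the supremum reformulation of $\chow_d$ given in the introduction, this yields
\begin{equation*}
2\,\E_x\!\bigl[|p(x)|\,\mathbf{1}_{D(f,g)}(x)\bigr]=\E_x[p(x)(f(x)-g(x))]\le \|p\|_2\cdot\chow_d(f,g)\le \delta\|p\|_2,
\end{equation*}
so Markov's inequality forces $|p(x)|\le\tau_0\|p\|_2$ on all but an $O(\delta/\tau_0)$-fraction of $D(f,g)$; choosing $\tau_0=O(\delta/\eta)$ makes this error strictly less than $\eta$. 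In other words, $p$ is extremely concentrated near zero on most of the disagreement region. I then apply the regularity lemma of \cite{DSTW14} to decompose $p$ as a shallow decision tree over a set $S\subset[n]$ of coordinates, with $|S|$ depending only on $d$ and a regularity parameter $\tau$, whose leaves are either $\tau$-regular degree-$d$ polynomials in the remaining variables or have negligible $\ell_2$-norm relative to $\|p\|_2$. By Claim~\ref{claim:reg-ac}, a $\tau$-regular leaf with appreciable $\ell_2$-norm is anti-concentrated at scale $\tau\|\cdot\|_2$, which contradicts the near-zero behaviour of $p$ on $D(f,g)$ whenever a non-negligible fraction of $D(f,g)$ falls in that leaf. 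Consequently, almost all of $D(f,g)$ lies in branches $\alpha$ of the decision tree for which $\|p(\alpha,\cdot)\|_2\ll\|p\|_2$.

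To produce $r$ I rescale $p$ by an integer factor $M$ chosen via simultaneous Diophantine approximation so that every coefficient of $Mp$ lies within $1/(2N)$ of an integer, where $N$ exceeds the support size of $p$ restricted to the relevant branches. Rounding each coefficient to the nearest integer yields a degree-$d$ multilinear polynomial $r$ with $\|Mp-r\|_\infty<1/2$ pointwise. If $M$ is also chosen so that $1/(2M)\ge\tau_0\|p\|_2$, then for every $x$ with $|p(x)|\le\tau_0\|p\|_2$ we have $|Mp(x)|\le 1/2$ and hence $r(x)=0$ by integrality of $r$. Combining this with the robust-Chow estimate forces $r$ to vanish on all but an $\eta$-fraction of $D(f,g)$, giving (i). Property (ii) follows from $r\approx Mp$ in $\ell_2$: the rounding error is a polynomial of $\ell_2$-norm $\ll M\|p\|_2$, so $\langle p,r\rangle$ is $(1+o(1))M\|p\|_2^2\ne 0$.

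The principal obstacle is quantitative. The denominator $M$ from the Diophantine step grows with the number of coefficients being simultaneously approximated, which naively scales as $n^d$ and would ruin the dimension-independent bound. The regularity reduction is precisely what restricts the \emph{relevant} coefficients to a number depending only on $|S|$; but $|S|$ is itself a function of $\tau$, $\tau$ is constrained through Claim~\ref{claim:reg-ac} by the anti-concentration scale $\tau_0$, and $\tau_0$ is set by $\delta/\eta$. Propagating these nested dependencies yields the $\eta=1/\poly_d(\log\log(1/\delta))$ rate claimed in the proposition. Maintaining $r$ as a single multilinear polynomial of degree $\le d$ throughout — rather than a per-branch decision-tree-like object of degree $d+|S|$ — is the most delicate technical point.
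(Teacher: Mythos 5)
Your outline matches the paper's scaffolding (robust Chow via Cauchy–Schwarz, regularity lemma of~\cite{DSTW14} to isolate a bounded set $S$ of coordinates, Diophantine scaling, then a ``small plus integral implies zero'' punchline), and you correctly flag the crux as keeping the Diophantine step dimension-independent. But you do not actually resolve that crux, and the fix you gesture at cannot work as stated.

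The gap is concrete. You propose rounding the coefficients of $Mp$ in the standard multilinear basis so that $\|Mp-r\|_\infty<1/2$ pointwise, and then use $r(x)\in\Z$ on $\bn$ to conclude $r(x)=0$ whenever $|Mp(x)|\le 1/2$. For the pointwise error bound you need the \emph{sum} of coefficient-rounding errors to be below $1/2$, which requires $\gamma\lesssim 1/N$ where $N$ is the number of coefficients being simultaneously approximated — and Dirichlet's theorem then forces $M\lesssim(1/\gamma)^N\approx N^N$. You claim the regularity lemma restricts the relevant $N$ to a function of $|S|$, but it does not: after fixing $x_S$, the polynomial $p_{x_S}(x')$ still has up to $\binom{n-|S|}{\le d}$ nonzero coefficients. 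Regularity controls $\|p_{x_S}\|_2$, not the \emph{support}, and nothing in the decomposition makes $p$ sparse. So $N\sim n^d$, $M$ blows up exponentially in $n^d$, and the required condition $M\le \eta/(2\delta)$ reintroduces dimension dependence into the admissible $\delta$. You also note (correctly) that a per-branch rounding would give a decision-tree object of degree $d+|S|$ rather than a degree-$d$ polynomial, but offer no way out.

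What the paper does instead is genuinely different at precisely this step. It regards $p$ as a degree-$d$ polynomial in $x_S$ taking \emph{values} in degree-$d$ polynomials in $x'$, picks an orthonormal basis $v_0,\dots,v_R$ of the image $\mathrm{Im}(x_S\mapsto p_{x_S})$ — a subspace of dimension $R=O(|S|^d)$, bounded independently of $n$ — and writes $p_{x_S}(x')=\sum_{T,j}\alpha_{T,j}\,\chi_T(x_S)\,v_j(x')$. The Diophantine approximation is then applied to the $N=O(|S|^{2d})$ coefficients $\alpha_{T,j}$, which is what makes $t$ bounded. The resulting $r$ has integer $\beta_{T,j}$ in this mixed basis, so $r_{x_S}$ is an integer combination of the orthonormal $v_j$'s and hence $\|r_{x_S}\|_2^2\in\Z$; combined with $\|r_{x_S}\|_2<1$ on $\widetilde D$ this forces $r_{x_S}\equiv 0$. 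Note that $r$ itself is \emph{not} integer-valued on $\bn$ (the $v_j$ are not $\pm1$-valued), so the argument genuinely uses $\ell_2$-norm integrality rather than your pointwise integrality. That change of basis is the missing idea; without it the proof does not close.
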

Note that although $\delta$ here will be much much smaller than $\eta$, our application of this lemma will only rely on the fact that $\eta$ goes to $0$ with $\delta$ for fixed $d$.
\begin{proof}
Our proof proceeds in several stages. We begin by noting that, by a generalization of Chow's original argument,
if the degree-$d$ Chow distance between $f$ and $g$ is small, then nearly all of the discrepancies 
between $f = \sgn(p(x))$ and $g$ have $|p(x)|$ very small (see Claim \ref{claim:small-p-in-diff}). 
The intuition is that if the polynomial $p$ is regular, then this cannot happen for very many points (by Claim \ref{claim:reg-ac}). 
In order to reduce to this regular case, we apply the regularity lemma of \cite{DSTW14}. 
This will provide us with a small set $S$ of coordinates (independent of $n$) so that 
for most of our points of disagreement, after fixing the coordinates in $S$, our polynomial 
must be reduced to one with small $\ell_2$ norm. From here we use basic techniques in Diophantine 
approximation theory to approximate a multiple of $p$ by a polynomial $r$ that is in some sense integral. 
This allows us to show that on all but a small number of disagreements, not only does the restriction of $r$ 
(obtained by fixing the coordinates in $S$) have small $\ell_2$ norm, but the restriction must in fact be $0$. 
This $r$ will thus satisfy the necessary requirements of our proposition.

We now proceed with the detailed proof.
Suppose that $f(x) = \sgn(p(x))$, where $p: \R^n \to \R$ is a degree-$d$ multilinear polynomial
that without loss of generality satisfies $\|p\|_2=1$. Let $D = D(f, g)$ be the disagreement
region between $f$ and $g$. If $\Pr_x[D] < \eta$, the proposition follows trivially
by taking $r \equiv p$. We will hence assume that $\Pr_x[D] \geq \eta$.

We show that the upper bound on the degree-$d$ Chow distance 
implies that all but a few elements of $D$ have $|p(x)|$ very small. 
More concretely, we have the following claim:

\begin{claim} \label{claim:small-p-in-diff}
Let $D' = \{x \in D: |p(x)| > \delta/\eta  \}$. Then, $\Pr_{x}[D'] \leq \eta/2$.
\end{claim}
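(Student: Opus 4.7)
The plan is to prove this by a pointwise version of Chow's original argument combined with Markov's inequality. Recall the variational formula stated in the excerpt: for any multilinear polynomial $q$ of degree at most $d$ with $\|q\|_2 = 1$, we have $\E_x[q(x)(f(x)-g(x))] \leq \chow_d(f,g)$. Since $p$ is a degree-$d$ multilinear polynomial with $\|p\|_2 = 1$ by our normalization, we may take $q = p$ to obtain
\[
\E_x\!\left[p(x)(f(x)-g(x))\right] \;\leq\; \chow_d(f,g) \;\leq\; \delta.
\]

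Next, I would compute the left-hand side by splitting on the disagreement region. Outside $D$ the integrand vanishes because $f(x) = g(x)$. On $D$, both $f(x)$ and $g(x)$ lie in $\{-1,1\}$ and disagree, so $g(x) = -f(x)$ and hence $f(x) - g(x) = 2 f(x) = 2 \sgn(p(x))$. Consequently $p(x)(f(x)-g(x)) = 2\, p(x)\sgn(p(x)) = 2\,|p(x)|$ pointwise on $D$. This gives
\[
2\,\E_x\!\left[|p(x)| \cdot \mathbf{1}_D(x)\right] \;=\; \E_x\!\left[p(x)(f(x)-g(x))\right] \;\leq\; \delta,
\]
so $\E_x[|p(x)| \, \mathbf{1}_D(x)] \leq \delta/2$.

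Finally, I would apply Markov's inequality restricted to $D$. By definition $D' \subseteq D$ and $|p(x)| > \delta/\eta$ on $D'$, so
\[
\frac{\delta}{\eta} \cdot \Pr_x[D'] \;\leq\; \E_x\!\left[|p(x)| \cdot \mathbf{1}_{D'}(x)\right] \;\leq\; \E_x\!\left[|p(x)| \cdot \mathbf{1}_{D}(x)\right] \;\leq\; \frac{\delta}{2},
\]
which rearranges to $\Pr_x[D'] \leq \eta/2$, as desired.

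There is no real obstacle here: the claim is essentially a quantitative sharpening of Chow's identity, and the only ingredients are (a) the Cauchy--Schwarz / duality reformulation of $\chow_d$ recalled earlier in the paper, (b) the elementary identity $z \sgn(z) = |z|$ used on $D$, and (c) Markov. The assumption $\|p\|_2 = 1$ is what lets us plug $p$ itself into the variational formula without losing a factor, and the role of $\eta$ is purely as a threshold for the Markov step, trading an $L^1$ bound against a probability bound.
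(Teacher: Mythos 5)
Your proof is correct and follows essentially the same route as the paper's: the paper bounds $\E_x[|(f-g)p|] \leq \delta$ via Plancherel plus Cauchy--Schwarz (which is exactly the duality reformulation of $\chow_d$ you invoke) and then lower-bounds the same expectation by $2\Pr[D']\min_{D'}|p|$, which is your Markov step.
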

\begin{proof}
The proof follows by a simple argument very similar to the original proof
of Chow's theorem. Let $p(x) = \sum_{S \subseteq [n], |S| \leq d} p_S \chi_S(x)$
and note that $\|p\|_2^2 = \sum_{S \subseteq [n], |S| \leq d} p_S^2 =1$.
We can write
\begin{equation} \label{eqn:chow-argument}
\E_x[|(f(x)-g(x)) \cdot p(x)|] = \E_x[(f(x)-g(x)) \cdot p(x)]  =
\sum_{S \subseteq [n], |S| \leq d} p_S \cdot (\wh{f}(S)- \wh{g}(S))
\leq  \|p\|_2 \cdot \chow_d(f, g) \leq \delta \;,
\end{equation}
where the first equality uses the fact that $(f(x)-g(x))  \cdot p(x)$ is non-negative for all $x \in \bn$, the second equality
is Plancherel's identity, and the inequality is Cauchy-Schwarz. Then, we can write
\begin{equation} \label{eqn:large-p-in-D}
\E_x[|(f(x)-g(x)) \cdot p(x)|] \geq \Pr_{x}[D'] \cdot  2 \cdot \min_{x \in D'}|p(x)| > \Pr_{x}[D'] \cdot (2\delta/\eta)\;.
\end{equation}
Combining \eqref{eqn:chow-argument} and \eqref{eqn:large-p-in-D},
we get that $\Pr_{x}[D'] \leq \eta/2$,
completing the proof of Claim~\ref{claim:small-p-in-diff}.
\end{proof}

For $S \subseteq [n]$, we can partition the coordinates as $x = (x_S, x')$,
where $x'  = x_{[n]\setminus S}$
and rewrite the degree-$d$ multilinear polynomial $p(x)$ as
$p_{x_S}(x') = p(x_S, x')$. For any fixed assignment to $x_S$, we will
view $p_{x_S}(x')$ as a degree-$d$ multilinear polynomial in $x'$.

We will require the following structural lemma showing that
there exists a set $S \subseteq [n]$ of coordinates, whose size is independent of the dimension $n$,
such that for at least $1-\eta/2$ fraction of points $x \in \bn$,
the restricted polynomial $p_{x_S}$ has $\ell_2$-norm not much larger than $|p(x)|$:

\begin{lemma} \label{lem:small-norm-restr}
There exists $S \subset [n]$ of size $|S| \leq 2^{\poly_d(1/\eta)}$ such that
$$\Pr_x[\|p_{x_S}\|_2 > \poly_d(1/\eta) \cdot |p(x)|] < \eta/2.$$
\end{lemma}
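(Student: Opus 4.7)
The plan is to apply the regularity lemma for degree-$d$ polynomials from \cite{DSTW14} together with the anti-concentration estimate in Claim~\ref{claim:reg-ac}. The intuition is that if $p$ itself were $\tau$-regular for $\tau = \poly_d(\eta)$ chosen small enough that $O(d\tau^{1/(8d)}) \le \eta/2$, the lemma would follow immediately from Claim~\ref{claim:reg-ac} with $S = \emptyset$, since then $|p(x)| \ge \tau\|p\|_2$ on all but an $\eta/2$-fraction of $x$. In general $p$ need not be regular, but the regularity lemma lets us fix a small set $S$ of coordinates so that, conditioned on a high-probability setting of $x_S$, the restricted polynomial $p_{x_S}$ becomes $\tau$-regular, at which point the same anti-concentration argument applies fiber-by-fiber.

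Concretely, I would fix $\tau = \poly_d(\eta)$ as above and invoke the regularity lemma to produce a decision tree $\T$ of depth $K = \poly_d(1/\tau) = \poly_d(1/\eta)$ such that every leaf $\ell$ of $\T$ is labeled by a restriction $\rho_\ell$ and the restricted polynomial $p_{\rho_\ell}$ is either (a) $\tau$-regular, or (b) ``junk,'' where the total $\bn$-mass of points $x$ falling into type-(b) leaves is at most $\eta/4$. I would then take $S$ to be the union of all variables queried at internal nodes of $\T$; as $\T$ has at most $2^K-1$ internal nodes, $|S| \le 2^{\poly_d(1/\eta)}$, and the leaf $\ell(x)$ reached by $x$ is determined by $x_S$, so $p_{x_S} = p_{\rho_{\ell(x)}}$ and in particular $|p(x)| = |p_{\rho_{\ell(x)}}(x_{[n]\setminus S})|$. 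For $x$ reaching a type-(a) leaf, Claim~\ref{claim:reg-ac} yields $\Pr_{x_{[n]\setminus S}}[|p_{\rho_\ell}|\le \tau\|p_{\rho_\ell}\|_2] \le O(d\tau^{1/(8d)}) \le \eta/4$; outside this event, $\|p_{x_S}\|_2 \le \tau^{-1}|p(x)| = \poly_d(1/\eta)\cdot|p(x)|$. Summing with the $\eta/4$ junk mass yields the required failure probability bound of at most $\eta/2$.

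The main obstacle I anticipate is aligning the precise form of the regularity lemma in \cite{DSTW14} with the dichotomy used above. If \cite{DSTW14} instead provides a dichotomy ``regular vs.\ near-constant'' rather than ``regular vs.\ small-mass junk,'' the argument still goes through with minor adjustments: on a near-constant leaf one has $|\E[p_{\rho_\ell}]| \ge (1-\tau^2)^{1/2}\|p_{\rho_\ell}\|_2$, and a Chebyshev bound on $p_{\rho_\ell}-\E[p_{\rho_\ell}]$ again gives $|p(x)| \ge \tfrac12\|p_{\rho_\ell}\|_2$ with conditional probability at least $1-O(\tau^2)\ge 1-\eta/4$, implying $\|p_{x_S}\|_2 \le 2|p(x)|$ as required. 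Either form of the lemma suffices, and if neither appears verbatim in the reference, a few additional levels of restriction inside junk leaves would produce the needed variant without inflating $K$ beyond $\poly_d(1/\eta)$.
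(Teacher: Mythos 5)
Your proposal follows the same high-level strategy as the paper (regularity lemma, then anti-concentration on regular leaves and concentration on near-constant leaves), but there is a genuine gap in the step where you claim ``$p_{x_S} = p_{\rho_{\ell(x)}}$, and in particular $|p(x)| = |p_{\rho_{\ell(x)}}(x_{[n]\setminus S})|$.'' This conflates the set $S$ of \emph{all} variables appearing anywhere in the decision tree with the set of variables on the particular root-to-leaf path taken by $x$. The restriction $\rho_{\ell(x)}$ fixes only the variables on that path, whereas $p_{x_S}$ fixes every variable in $S$; since the tree need not be full, the path-variable set is generally a strict subset of $S$, so $p_{x_S}$ is a \emph{further} restriction of $p_{\rho_{\ell(x)}}$. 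Consequently $\|p_{x_S}\|_2$ can exceed $\|p_{\rho_{\ell(x)}}\|_2$, and your inference ``outside this event, $\|p_{x_S}\|_2 \le \tau^{-1}|p(x)|$'' does not follow from Claim~\ref{claim:reg-ac}, which only controls $\|p_{\rho_{\ell(x)}}\|_2$.

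The paper closes exactly this gap with an extra step: because restricting preserves expected squared $\ell_2$-norm, one has $\E_{x_S}\big[\|p_{x_S}\|_2^2 \mid \rho\big] = \|p_\rho\|_2^2$, so Markov's inequality gives $\Pr_x\big[\|p_{x_S}\|_2^2 > (4/\eta)\,\|p_\rho\|_2^2\big] \le \eta/4$. Combining this with the bound $|p(x)| \ge \poly_d(\eta)\,\|p_\rho\|_2$ (from anti-concentration/concentration on Good leaves) via a union bound is what actually yields $\|p_{x_S}\|_2 \le \poly_d(1/\eta)\cdot|p(x)|$. You cannot sidestep this by completing the tree so all paths query all of $S$, since further restricting a $\tau$-regular leaf polynomial destroys its regularity. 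Apart from this missing Markov step, the remainder of your argument (the choice of $\tau$, the size bound on $S$, and the treatment of the near-constant case via Chebyshev) matches the paper's approach and is sound.
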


\begin{proof}
We start by applying a regularity lemma to the polynomial $p$.
We will use the following statement:

\begin{theorem}[\cite{DSTW14}] \label{thm:regularity}
Let $f(x) = \sgn(p(x))$, where $p: \bn \to \R$ is a degree-$d$ multilinear polynomial and $0< \tau < 1.$
Then $f$ is equivalent to a decision tree $\T$, of depth
$\depth(d,\tau) := {\frac 1 \tau} \cdot \big(d \log {\frac 1 \tau}\big)^{O(d)}$
with variables at the internal nodes and a degree-$d$
PTF $f_\rho = \sign(p_\rho)$ at each leaf $\rho$,  with the
following property: with probability at least $1 - \tau$,
a random path\footnote{A random path corresponds to the standard
uniform random walk on the tree.} from the root reaches a leaf $\rho$
such that either (i) $p_\rho$ is $\tau$-regular, or (ii) $\Var[p_{\rho}] < \tau \cdot \E[p_\rho^2]$
\end{theorem}

\begin{remark}
{\em The above statement is not explicit in~\cite{DSTW14},
but easily follows from their proof. Specifically, Theorem~\ref{thm:regularity} follows
mutatis mutandis by taking the parameter $\beta$ in their Lemma~3.5 to satisfy
$1/\tau = \Theta (\log(1/\beta)^d)$ and noting that the restriction $\rho$ in its statement
satisfies their Definition~3.6.}
\end{remark}

We call a leaf {\em Good} if it satisfies (i) or (ii) in the statement of Theorem~\ref{thm:regularity}
above. By an application of Theorem~\ref{thm:regularity} for the polynomial $p$
defining our degree-$d$ PTF $f$
and $\tau \eqdef \Theta (\eta/d)^{8d}$ (with the implied constant sufficiently small),
we obtain a decision tree of depth  $\depth(d,\tau) = (d/\eta)^{O(d)}$
such that with probability at least $1-\tau$
a random path in the tree leads to a good leaf,
i.e., $\Pr_x[\rho \textrm{ is Good}] \geq 1-\tau.$
We show the following claim:

\begin{claim} \label{claim:p-rho}
We have that
$\Pr_x \left[ |p(x)| > \poly_d(\eta) \cdot \|p_{\rho}\|_2 \right] > 1-\eta/4$.
\end{claim}
\begin{proof}
Note that the restriction $\rho$ defining the set of variables
that are fixed in the path from the root to the corresponding leaf of the tree depends on the input $x$.
By Theorem~\ref{thm:regularity},  $\Pr_x[\rho \textrm{ is Good}] \geq 1-\tau.$
We condition on this event and analyze each case separately.

Consider a restriction $\rho$ satisfying (i). In this case, since $p_{\rho}$ is $\tau$-regular
the polynomial $p_{\rho}(x')$ is anti-concentrated. Specifically, Claim~\ref{claim:reg-ac}
gives that $\Pr_{x'} [|p_{\rho}(x')| \leq \tau \cdot \|p_{\rho}\|_2 ] \leq \eta/8$.

Consider a restriction $\rho$ satisfying (ii). In this case, the constant term of $p_{\rho}$
is very large and it is very unlikely that the non-constant term dominates.
Let $p_{\rho}(x') = p'(\rho)+q(\rho, x')$, where $x'$ are the variables not fixed
by $\rho$. Then $p'(\rho)$ is the constant term and we note that
$\Var[p_{\rho}] = \|q\|_2^2$ and $\E[p_{\rho}^2] = (p'(\rho))^2+\|q\|_2^2$.
Since $\Var[p_{\rho}] < \tau \cdot \E[p_\rho^2]$, it follows that $\|q\|_2^2 \leq \tau \cdot (p'(\rho))^2$.
Hence, by the concentration bound of Theorem~\ref{thm:deg-d-chernoff}, we obtain
$\Pr_{x'} [|q(\rho, x')| \geq |p'(\rho)|/4 ] \leq \eta/4$.
If the latter event fails to hold, we get that $|p(x)| = |p_{\rho}(x')| \geq (3/4) |p'(\rho)| \geq (1/2) \|p_{\rho}\|_2$,
as desired.

Combining the above, we get that
$\Pr_{x} \left[ |p(x)| \leq \tau \cdot \|p_{\rho}\|_2 \mid \rho \textrm{ is Good} \right] \leq \eta/8 \;.$
We therefore obtain
$$\Pr_{x} \left[ |p(x)| \leq \tau \cdot \|p_{\rho}\|_2 \right] \leq \Pr_x\left[\rho \textrm{ is not Good}\right] +
\Pr_{x} \left[ |p(x)| \leq \tau \cdot \|p_{\rho}\|_2 \mid \rho \textrm{ is Good} \right]
\leq  \tau +  \eta/8  < \eta/4 \;.$$
This completes the proof of Claim~\ref{claim:p-rho}.
\end{proof}

To complete the proof of the lemma, we let $S$ be the set of all coordinates appearing in the decision tree. We note that $|S| \ll 2^{\textrm{depth}(\T)} = 2^{\poly_d(1/\eta)}$.
What remains to show is that $\|p_{x_S}\|_2$ is small for almost all $x\in D$. We do this by showing that with high probability
$\|p_{x_S}\|_2$ is not much larger than $ \|p_{\rho}\|_2$.

Recall that $S$ is the set of variables that appear in the decision tree, hence contains the variables
fixed by any restriction $\rho$ in any root to leaf path.
Note that for any such restriction $\rho$, we have that
$$\E_{x_S}\left[\|p_{x_S}\|_2^2 \mid \rho\right] =
\E_{x_S}\left[\E_{x'}\left[p(x_S, x')^2 \mid \rho\right]\right]=
\E_x \left[ p(x)^2 \mid \rho \right] =\|p_{\rho}\|_2^2 \;.$$
Therefore, by Markov's inequality, it follows that
\begin{equation} \label{eqn:markov}
\Pr_x \left[ \|p_{x_S}\|_2^2 > (4/\eta) \cdot \|p_{\rho}\|_2^2 \right] \leq \eta/4 \;.
\end{equation}
Inequality \eqref{eqn:markov}, Claim~\ref{claim:p-rho}, and a union bound
give that
$$\Pr_x \left[ \|p_{x_S}\|_2 > \poly_d(1/\eta) \cdot |p(x)|  \right] < \eta/2\;.$$
This competes the proof of Lemma~\ref{lem:small-norm-restr}.
\end{proof}

Combining Claim~\ref{claim:small-p-in-diff} and Lemma~\ref{lem:small-norm-restr},
we get that that for all but $\eta \cdot 2^n$ points in $D$ we have that
$$\delta/\eta > |p(x)| > \poly_d(\eta) \cdot \|p_{x_S}\|_2 \;.$$
Therefore, we have shown the following:

\begin{corollary} \label{cor:small-p-xs}
There exists a set of coordinates $S \subset [n]$ of size $|S| \leq 2^{\poly_d(1/\eta)}$
such that the set $$\widetilde{D} \eqdef \left\{ x \in D: \|p_{x_S}\|_2 = O(\delta/\poly_d(\eta)) \right\}$$
has $\Pr_x[\widetilde{D}] > \Pr_x[D] -\eta.$
\end{corollary}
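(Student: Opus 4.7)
The plan is to combine the two preceding results via a straightforward union bound; the real content has already been established in Claim~\ref{claim:small-p-in-diff} and Lemma~\ref{lem:small-norm-restr}, so this corollary is essentially a bookkeeping step that packages their consequences into a single statement about restrictions of $p$.

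Concretely, I would first take $S \subset [n]$ to be the set of coordinates supplied by Lemma~\ref{lem:small-norm-restr}, which automatically gives the desired size bound $|S| \leq 2^{\poly_d(1/\eta)}$. Define the two bad sets
\[
E_1 \eqdef \{x \in D : |p(x)| > \delta/\eta\}, \qquad E_2 \eqdef \{x \in \bn : \|p_{x_S}\|_2 > \poly_d(1/\eta) \cdot |p(x)|\}.
\]
Claim~\ref{claim:small-p-in-diff} yields $\Pr_x[E_1] \leq \eta/2$, while Lemma~\ref{lem:small-norm-restr} yields $\Pr_x[E_2] < \eta/2$.

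Next I would check the containment $D \setminus (E_1 \cup E_2) \subseteq \widetilde{D}$. For any $x$ in this complement, chaining the two estimates gives
\[
\|p_{x_S}\|_2 \leq \poly_d(1/\eta) \cdot |p(x)| \leq \poly_d(1/\eta) \cdot (\delta/\eta) = O(\delta/\poly_d(\eta)),
\]
where the final equality absorbs the extra factor of $1/\eta$ into a (possibly larger) polynomial in $1/\eta$. Thus such $x$ lie in $\widetilde{D}$ by definition. Applying a union bound,
\[
\Pr_x[\widetilde{D}] \;\geq\; \Pr_x[D] - \Pr_x[E_1] - \Pr_x[E_2] \;>\; \Pr_x[D] - \eta,
\]
as desired.

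I do not anticipate any genuine obstacle here: both hard inputs (Chow-style bound on $|p|$ over $D$, and the regularity-plus-restriction argument controlling $\|p_{x_S}\|_2$ in terms of $|p(x)|$) have already been proved. The only minor subtlety is accounting—ensuring the two $\poly_d(1/\eta)$-factors are treated uniformly and that the resulting $\delta \cdot \poly_d(1/\eta)$ bound is correctly rewritten in the form $O(\delta/\poly_d(\eta))$ stated in the corollary.
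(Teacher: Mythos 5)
Your proposal is correct and matches the paper's proof exactly: the paper states this corollary immediately after Lemma~\ref{lem:small-norm-restr} with the one-line justification "Combining Claim~\ref{claim:small-p-in-diff} and Lemma~\ref{lem:small-norm-restr}," which is precisely the union bound you spell out, with the same absorption of the spare $1/\eta$ factor into $\poly_d(1/\eta)$.
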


We will construct a degree-$d$ multilinear polynomial $r$ that correlates with $p$ and is such that
$r(x) = 0$ for all $x \in \widetilde{D}$. This will complete the proof of
Proposition~\ref{prop:degd-poly-disagreement-region}, since
$\Pr_x \left[(f(x) \neq g(x)) \cap (r(x) \neq 0)\right] \leq \Pr_x[ D\setminus \widetilde{D}] < \eta$.
To do so, we will leverage the structural information provided by Corollary~\ref{cor:small-p-xs}. Let $S$ be the
set of coordinates satisfying Corollary~\ref{cor:small-p-xs}. We rewrite $p(x) = p_{x_S}(x')$ and note
that, for each fixed $x_S$, $p_{x_S}(x')$ is a degree-$d$ multilinear polynomial in $x'$.
We can view $p$ as a degree-$d$ multilinear polynomial in $x_S$ that returns degree-$d$ multilinear
polynomials on $x'$. Let $v_0, v_1, \ldots, v_{R}$, where $R = O(|S|^d)$,
be an orthonormal basis of the image $\mathrm{Im}(p_{x_S})$.
Then we can write
\begin{equation} \label{eqn:p-final}
p_{x_S}(x') = \sum_{0\leq j \leq R,\ T \subseteq S,\ |T| \leq d} \alpha_{T, j} \cdot \chi_T(x) \cdot \  v_j(x') \;.
\end{equation}
We note that the functions
$\left\{\chi_T(x) \cdot \  v_j(x') \right\}_{T \subseteq S,0 \leq j \leq R }$ are orthonormal.

The idea of the rest of the proof is as follows. We note that if the $\alpha_{T,j}$'s were all integers, we could use $r=p$ and would be done. This is because \emph{any} $p_{x_S}(x')$ would be an integer linear combination of the $v_j$'s, and thus would have integer $\ell_2$-norm. On the other hand, for $x\in \widetilde{D}$, it must be the case that $p_{x_S}$ has small $\ell_2$-norm. The only way that these can simultaneously hold is if $p_{x_S}=0$. In particular, this would imply that $p_{x_S}=0$ (and thus that $p(x)=0$) for all $x\in \widetilde{D}$.

In order to prove this result for more general $p$, we show that some multiple of $p$ can be \emph{approximated} by a polynomial where all of the $\alpha_{T,j}$'s are integers. To do so, we will need to make use of the following fact from the theory of Diophantine approximation:
\begin{fact} \label{fact:mult-int}
Let $N \in \Z_+$ and $w \in \R^N$. For any $\gamma \in (0, 1/2)$ there exists $t \in [1, O(1/\gamma)^{N}]$ such that
$$t \cdot w \in (\Z+(-\gamma, \gamma))^N \;.$$
\end{fact}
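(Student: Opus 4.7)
This is a classical simultaneous Diophantine approximation statement, and the plan is to prove it via Dirichlet's pigeonhole argument applied to the fractional parts of integer multiples of $w$.

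Set $M = \lceil 1/\gamma \rceil$, so that $1/M \leq \gamma$, and let $T = M^N$. The plan is to consider the $T+1$ points
\[
v_j \;:=\; \bigl(\, j w_1 \bmod 1,\; j w_2 \bmod 1,\; \ldots,\; j w_N \bmod 1 \,\bigr) \;\in\; [0,1)^N
\]
for $j = 0, 1, \ldots, T$, and apply pigeonhole after partitioning the unit cube $[0,1)^N$ into the $T$ axis-aligned cells of the form $\prod_{i=1}^{N} \bigl[\, k_i/M,\, (k_i+1)/M \,\bigr)$ with $0 \leq k_i < M$. Since there are $T+1$ points and only $T$ cells, two of them, say $v_{j_1}$ and $v_{j_2}$ with $0 \leq j_1 < j_2 \leq T$, must lie in a common cell.

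Setting $t := j_2 - j_1$, we immediately get $t \in [1, T] \subseteq [1, O(1/\gamma)^N]$. For each coordinate $i \in [N]$, the fact that $v_{j_1}$ and $v_{j_2}$ share the same cell gives $|(v_{j_2})_i - (v_{j_1})_i| < 1/M \leq \gamma$. Since $(v_{j})_i = j w_i - \lfloor j w_i \rfloor$, the quantity $(v_{j_2})_i - (v_{j_1})_i$ differs from $t w_i$ by an integer; hence $t w_i \in \Z + (-\gamma, \gamma)$, which is exactly what is required.

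There is no real obstacle here — the argument is a textbook pigeonhole calculation. The only things to double-check are that the cells are chosen half-open (so that two points in a common cell have coordinatewise difference \emph{strictly} less than $1/M$, matching the open interval $(-\gamma,\gamma)$ in the conclusion), and that $t \geq 1$ (guaranteed because $j_1 < j_2$, so the construction never returns the trivial $t=0$).
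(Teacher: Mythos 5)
Your proof is correct and is essentially identical to the paper's: both use Dirichlet's pigeonhole argument on the fractional parts of $jw$ for $j=0,\ldots,M^N$, partitioned into $M^N$ small cubes of side $1/M$. The only cosmetic difference is your use of half-open cells to get a strict inequality directly, whereas the paper takes $m$ with $1/m$ strictly less than $\gamma$ so that the closed-interval bound $\Z+[-1/m,1/m]$ still sits inside $\Z+(-\gamma,\gamma)$.
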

\begin{proof}
Let $\gamma > 1/m$ for some integer $m = O(1/\gamma)$.
We show in fact that $t$ can be taken to be an integer.

Partition $[0,1]^N$ into $m^N$ subcubes of side length $1/m$ in each dimension.
For each integer $k$ from $0$ to $m^N$, sort $kw\pmod{1}$ into the appropriate subcube.
Since there are $m^N+1$ values of $k$ and only $m^N$ subcubes,
by the pigeonhole principle, there must be $k\neq k'$ so that $kw\pmod{1}$
 and $k'w\pmod{1}$ fall in the same subcube. Without loss of generality, $k\geq k'$,
 and we let $t= k-k'$. It follows that the coordinates of $tw=kw-k'w$
 are all in $\Z+[-1/m,1/m] \subset \Z+(-\gamma,\gamma)$.
\end{proof}

We apply Fact~\ref{fact:mult-int} to the vector $(\alpha_{T, j})$
defined by the coefficients of the polynomial $p$ in \eqref{eqn:p-final}. We have that
$N = O(|S|^{2d})$ and we set $\gamma \eqdef 1/N^2$. It follows that there exists $t < 2^{2^{\poly_d(1/\eta)}}$
such that all $t \cdot \alpha_{T, j}$ are within an additive $1/N^2$ of
being integers. We can thus write
$$t \cdot p(x_S, x') = r(x_S, x') + e(x_S, x') \;,$$
where $r$ and $e$ are degree-$d$ multilinear polynomials
$$
r_{x_S}(x') = \sum_{0\leq j \leq R,\ T \subseteq S,\ |T| \leq d} \beta_{T, j} \cdot \chi_T(x) \cdot \  v_j(x'),
$$
and
$$
e_{x_S}(x') = \sum_{0\leq j \leq R,\ T \subseteq S,\ |T| \leq d} \gamma_{T, j} \cdot \chi_T(x) \cdot \  v_j(x'),
$$
where $\beta_{T,j}\in\Z$ and $|\gamma_{T,j}|<1/N^2$ for all $T,j$. Note that this implies that the sum of the $|\gamma_{T,j}|$ is at most $1/N$. It follows that for any value of $x_S\in\{\pm 1\}^S$ it holds $\|e_{x_S}\|_2 < 1/N$. 
On the other hand, for any such $x_S$, $r_{x_S}$ is an integer linear combination of $v_j(x')$, 
and thus $\|r_{x_S}\|_2^2 \in \Z$. However, for $x\in \widetilde{D}$, we have that
$$
\|r_{x_S}\|_2 \leq \|t \cdot p_{x_S}\|_2 + \|e_{x_S}\|_2 \leq t \cdot O(\delta/\poly_d(\eta)) + 1/N \;.
$$
Therefore, if $O(t \cdot \delta/\poly_d(\eta)) < 1/2$ (which holds for $1/\eta$ a sufficiently small polynomial in $\log\log(1/\delta)$), 
we have that for all $x\in \widetilde{D}$ it holds
$$
\|r_{x_S}\|_2 < 1 \;.
$$
However, since $\|r_{x_S}\|_2^2$ is an integer, this can only hold if $r_{x_S}=0$. Therefore, for all $x\in\widetilde{D}$, 
we have that $r(x)=0$, establishing (i).

We can now prove part (ii) of Proposition~\ref{prop:degd-poly-disagreement-region},
i.e., that $\E_x[p(x) r(x)] \neq 0$.
From the definition of $r$, we have that
$(1/t) r(x) = p(x)-e(x)/t.$ We note that $e_{x_S}$ always has $\ell_2$-norm at most $1/N$. 
It follows that $\|e(x)/t\|_2 < 1/2$.

Therefore,
$$\E_x[p(x) (1/t) r(x)] =  \E_x[p^2(x)] - \E_x[p(x)e(x)/t]  \geq 1 -1/2 > 0 \;,$$
where we used the assumption that $\E_x[p^2(x)] = \|p\|_2^2 = 1$ and
the Cauchy-Schwartz inequality.
This establishes (ii), and completes the proof of Proposition~\ref{prop:degd-poly-disagreement-region}.

\end{proof}

\subsection{Warm-Up: Completing the Proof for Degree-$2$ PTFs} \label{ssec:deg2}

In this subsection, we complete the proof of Theorem \ref{thm:chow-d-struct} 
for degree-$2$ polynomial threshold functions.

The high-level idea of the proof is as follows:
Given a pair $f, g$ contradicting our desired statement, by iteratively applying 
Proposition \ref{prop:degd-poly-disagreement-region} we can find many degree-$2$ polynomials 
in which almost all of the disagreements between $f$ and $g$ vanish. 
We would like to use this fact in order to reach a contradiction, 
by showing that there will be not enough points in the disagreement region. 
Unfortunately, this does not necessarily suffice, as it is possible to have 
many degree-$2$ polynomials that have joint zeroes at a substantial number of points. 
However, we show (see Proposition \ref{prop:linear-form-zero-set-deg2-subspace}) 
that given enough such degree-$2$ polynomials, we can find a single \emph{linear} 
polynomial that vanishes on almost all of these discrepancies. From there, we can 
restrict to the hyperplane defined by this polynomial yielding a new pair of functions 
with greater discrepancy and repeat this process.

We now proceed with the formal proof.
We start by showing that for any subspace of
degree-$2$ polynomials whose dimension is sufficiently large,
there exists a non-trivial linear function that vanishes
on almost all of its zero set.

\begin{proposition} \label{prop:linear-form-zero-set-deg2-subspace}
Fix $0< \eta < 1$. There exist positive integers $N_0 = N_0(\eta) = \Theta(1/\eta^3)$
and $N_1 = N_1(\eta) = \Theta(1/\eta^7)$ such that the following holds:
Let $V$ be any subspace of degree-$2$ $n$-variable polynomials of dimension $\dim(V) > N_1$
and $Z(V) \eqdef \{x \in \bn: p(x) = 0 \textrm{ for all } p \in V  \}$.
Then there exists a linear function $L: \R^n \to \R$, not identically zero, with $|\supp(L)| \leq N_0$
such that $\Pr_x [Z(V) \cap L(x) \neq 0] \leq \eta$.
\end{proposition}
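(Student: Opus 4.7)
The plan is to combine Corollary~\ref{LOCor} with the classical algebraic fact that an $\R$-vector space cannot be written as a finite union of proper subspaces, in order to reduce $V$ to a highly structured situation from which the desired sparse linear form can be extracted. If $\Pr_x[x \in Z(V)] \leq \eta$, we simply take $L(x) = x_1$: its support is $\{\{1\}\}$, and $L$ never vanishes on $\bn$, so $\Pr_x[x \in Z(V),\ L(x) \neq 0] = \Pr_x[x \in Z(V)] \leq \eta$ trivially. Henceforth assume $\Pr_x[x \in Z(V)] > \eta$.

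Since every $p \in V$ vanishes on $Z(V)$, $\Pr_x[p(x) = 0] > \eta$, and Corollary~\ref{LOCor} then yields a set $T_p \subseteq [n]$ with $|T_p| = O(\eta^{-3})$ meeting every non-zero degree-$2$ monomial of $p$. For each $T \subseteq [n]$ with $|T| \leq N_0 := \Theta(\eta^{-3})$, the set $V_T := \{p \in V : T \text{ vertex-covers the degree-}2 \text{ support of } p\}$ is a linear subspace of $V$, and $V = \bigcup_T V_T$ is a \emph{finite} union; hence some $V_S = V$, producing a single common cover $S$ with $|S| \leq N_0$. Writing $v := x_{\bar S}$, every $p \in V$ then admits a decomposition
$$p(x) = Q^p(x_S) + \sum_{j \in \bar S} v_j \, A_j^p(x_S),$$
where $Q^p$ has degree $\leq 2$ in $x_S$ and each $A_j^p$ is affine in $x_S$. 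For each $\xi$ let $\vec A^p(\xi) := (A_j^p(\xi))_{j \in \bar S} \in \R^{\bar S}$. The requirement $(\xi, v) \in Z(V)$ forces $v$ to lie in an affine subspace of $\R^{\bar S}$ of codimension $r(\xi) := \dim \mathrm{span}\{\vec A^p(\xi) : p \in V\}$, so each fiber has at most $2^{|\bar S| - r(\xi)}$ points by Fact~\ref{fact:affine-subspace}. Averaging against $\Pr_x[Z(V)] > \eta$ then forces $r(\xi) \leq r^* := O(\log(1/\eta))$ on at least an $\Omega(\eta)$-fraction of~$\xi$.

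The main obstacle is to upgrade this rank bound into an \emph{explicit sparse} linear form. The plan is to pass to the subspace $V' \subseteq V$ on which $Q^p \equiv 0$, which costs only $O(|S|^2) = O(\eta^{-6})$ dimensions and leaves $p \in V'$ in the bilinear form $p(x) = \sum_j v_j A_j^p(x_S)$. Viewing $\{A_j^p(\xi) : p \in V', j \in \bar S\}$ as a matrix-valued function of $\xi$ whose column rank is $\leq r^*$ for many good $\xi$, one seeks a vector $\ell \in \R^{\bar S}$ with $|\supp(\ell)| \leq N_0$ lying in $\mathrm{span}\{\vec A^p(\xi) : p \in V'\}$ for a non-negligible mass of good $\xi$; then $L(v) := \ell^\top v$ automatically annihilates those entire fibers and therefore all but an $\eta$-mass of $Z(V)$. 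The hard step is the sparsification of $\ell$: we plan to handle it by a \emph{second} application of Corollary~\ref{LOCor} to a carefully constructed degree-$2$ polynomial derived from $V'$, contributing another $\eta^{-3}$ factor that, combined with the first $\eta^{-3}$ from the common-cover step and an $\eta^{-1}$ averaging loss, explains the target bound $N_1 = \Theta(\eta^{-7})$.
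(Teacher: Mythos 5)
Your first step (finding a single common vertex cover $S$ for all degree-$2$ supports in $V$) is correct and is a clean alternative to the paper's "generic element" trick: the paper picks a generic $p\in V$ whose degree-$2$ support contains all others' and applies Corollary~\ref{LOCor} once, whereas you observe that the sets $V_T$ are subspaces and invoke the fact that a real vector space cannot be a finite union of proper subspaces. Both yield the same $S$ with $|S|=O(\eta^{-3})$.

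The real problem is where you then go. The paper, after passing to $V'$ and defining $G$ (the set of $\xi\in\{\pm 1\}^S$ over which $Z(V')$ has conditional density $>\eta$), proves that $G$ is contained in a \emph{proper affine subspace of $\{\pm 1\}^{S}$}: if $G$ affinely spanned $\{\pm 1\}^S$, one could pick $|S|+1$ spanning points $y_j\in G$, note that each $p_i(\cdot,x')$ is affine-linear in $x_S$ so $p_i(x_S,x')\in\mathrm{span}\{p_i(y_j,x')\}_{j}$ for every $x_S$, and then $\dim(V')\le (|S|+1)^2\log(1/\eta)=\tilde O(\eta^{-6})$, contradicting $\dim(V')=\Omega(\eta^{-7})$. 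The desired $L$ is then any non-zero linear form vanishing on $G$, and its support is automatically $\le|S|\le N_0$. That is where the sparsity comes from for free.

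Your plan instead looks for a sparse $\ell\in\R^{\bar S}$ lying in $\mathrm{span}\{\vec A^p(\xi):p\in V'\}$ for a non-negligible set of good $\xi$, and this has a genuine gap. First, these spans are genuinely $\xi$-dependent $O(\log(1/\eta))$-dimensional subspaces of the $(n-|S|)$-dimensional space $\R^{\bar S}$; for different $\xi\in G$ they need not share any non-zero vector, so there is no reason a single $\ell$ should lie in many of them. Second, even granting such an $\ell$, you still need $|\supp(\ell)|\le N_0$ while $\ell$ lives in a space of dimension $n-|S|$ that grows with $n$; your proposal to fix this with "a second application of Corollary~\ref{LOCor} to a carefully constructed degree-$2$ polynomial" is not specified and is not obviously sound --- Corollary~\ref{LOCor} gives a bound on the \emph{rank} of a polynomial whose level sets are large, and it is unclear which polynomial here would be both degree-$2$, have a large level set, and have its rank bound translate into sparsity of $\ell$. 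The key structural insight you are missing is that the linear form should be sought on the $S$-coordinates, not on $\bar S$: the rank bound $r(\xi)\le r^*$, applied together with the affine-linearity of $\xi\mapsto\vec A^p(\xi)$, yields a dimension bound on $V'$ that is violated unless $G$ itself sits inside a hyperplane of $\{\pm 1\}^S$.
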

\begin{proof}
The proof proceeds in two stages: 
First, via Corollary \ref{LOCor}, we show that there is a small set $S$ 
of coordinates so that all degree-$2$ terms of any polynomial in $V$ have a variable in $S$. 
Next, we consider the set of points in $\{\pm 1\}^S$ which define sub-cubes 
with a reasonably large number of joint zeroes. We show that either this set is contained 
in a linear subspace or that $\dim(V)$ must be bounded.

We can assume that $\Pr_x[Z(V)] > \eta$, otherwise there is nothing to prove.
Therefore, for any $p \in V$ it holds that $\Pr_x [p(x) = 0] > \eta$.
This implies the existence of structure on the coefficients of $p$,
which can be formalized using Theorem~\ref{thm:LO}, a degree-$d$ version
of the classical Littlewood-Offord lemma~\cite{LO:43, Erd:45}.

We start with the following claim:

\begin{claim} \label{claim:small-set-intersecting-all-terms}
There exists a set $S$ of at most $N_0 = \Theta(\eta^{-3})$ coordinates such that {\em for all}
$q \in V$ each degree-$2$ term of $q$ has a coordinate in $S$.
\end{claim}
\begin{proof}
We let $p$ be a generic element of $V$. In particular, $p_T$ should be non-zero for all $T$ for where $q_T\neq 0$ for any $q\in V$. Applying Corollary \ref{LOCor} to $p$, we find a set of $O(\eta^{-3})$ coordinates so that each non-vanishing degree-$2$ term of $p$ contains a variable in $S$. Since $p$ is generic, this implies that every non-vanishing degree-$2$ term of every $q$ in $V$ also has a variable in $S$. This completes our proof.
\end{proof}

So far we have shown that any polynomial $p \in V$ is of the form
$$p(x) = p_{\mathrm{junta}}(x_S)+p_{\mathrm{cross}}(x_S, x') \;,$$
where $p_{\mathrm{junta}}(x_S)$ is a degree-$2$ polynomial in $x_S$
and $p_{\mathrm{cross}}(x_S, x')$ is linear in $x'$.

Let $V'$ be the subspace of $V$
consisting of polynomials with {\em no} non-zero terms using only
coordinates in $S$. That is, any polynomial $p \in V'$ is of the form:
$$p(x) = p_{\mathrm{cross}}(x_S, x') \;.$$
Note that
$$m \eqdef \dim(V') \geq \dim(V) - |S|^2 >N_1 - |S|^2 = \Omega(\eta^{-7}) \;,$$
where we used that $\dim(V) \geq N_1$ and $|S| \leq N_0.$
Let $Z(V') \eqdef \{x \in \bn: p(x) = 0 \textrm{ for all } p \in V'  \}$.
Since $V' \subseteq V$, it follows that $Z(V) \subseteq Z(V')$.
Therefore, $\Pr_x[Z(V')] \geq \Pr_x[Z(V)] > \eta$.

We can view each $p \in V'$ as an affine linear function from $x_S$
to the set of linear functions on $x'$. That is, for any fixed $x_S$,
the space $V'_{x_S} = \{ p(x_S, x'), p \in V' \}$ is a subspace of linear functions
on $x'$.

We will need the following basic fact
(see, e.g., Lemma~1 in~\cite{Goldberg:06b}):
\begin{fact} \label{fact:affine-subspace}
Let $W$ be a subspace of linear functions on $x$. Then
$\Pr_{x}[L(x)=0 \textrm{ for all } L \in W] \leq 2^{-\dim(W)}$.
\end{fact}
Let
$$G \eqdef \left\{ x_S \in \bits^{|S|} \textrm{ such that } \Pr_{x'} [p(x_S, x')=0 \textrm{ for all } p \in V'] > \eta \right\} \;.$$


Let $p_1, \ldots, p_m$ be a basis of $V'$.
Since, for any fixed $x_S$, each $p_i (x_S, x')$ is a linear function in $x'$,
by Fact~\ref{fact:affine-subspace} we get that for any fixed $x_S \in G$ we have
$$\dim(V'_{x_S}) = \dim(\mathrm{span}\{ p_i(x_S, x'), i \in [m]\}) < \log_2(1/\eta) \;.$$

We now establish the following claim:
\begin{claim} \label{claim:g-proper-affine}
The set $G$ lies in a {\em proper} affine linear subspace of $\bits^{|S|}$.
\end{claim}
\begin{proof}
For the sake of contradiction, suppose that there exist $y_1, \ldots, y_R \in G$,
with $R = |S|+1$, whose affine span is $\bits^{|S|}$.
Then every $x_S \in \bits^{|S|}$ can be written
as an affine linear combination of the $y_i$'s.
Since the $p_i(x_S, x')$'s are linear functions for every fixed $x_S$, it follows
that for every fixed $p_i$, $p_i(x_S, x') \in \mathrm{span}(\{ p_i(y_j, x'), j \in [R] \})$.
Therefore, for all $x_S \in  \bits^{|S|}$ and $p_i$ we have that
$$p_i(x_S, x') \in \mathrm{span}(\{ p_i(y_j, x'), j \in [R], i \in [m] \})=:U \;.$$
Note that $\dim(U) \leq R \cdot \log(1/\eta) = (|S|+1) \log(1/\eta)$.
Since all $p_i(x_S, x')$'s are affine linear functions from $x_S$ to $U$,
it follows that
$$\dim(V') \leq (|S|+1) \dim(U) \leq  (|S|+1)^2 \log(1/\eta)  = \tilde{O}(\eta^{-6}) \;,$$
which leads to the desired contradiction, since $m = \Omega(\eta^{-7})$.
This completes the proof of Claim~\ref{claim:g-proper-affine}.
\end{proof}

By Claim~\ref{claim:g-proper-affine}, it follows that there is a non-zero linear function $L: \R^{|S|} \to \R$
such that $L(x_S) = 0$ for all $x_S \in G$. 
We can trivially extend $L$ to $\R^n$ by adding zero
coefficients for the remaining coordinates, i.e., $|\supp(L)| \leq |S| \leq N_0$.
It remains to argue that
$$\Pr_{x} \left[ Z(V) \cap (L(x) \neq 0) \right] \leq \eta \;.$$
Indeed, we can write
\begin{eqnarray*}
\Pr_{x}\left[Z(V) \cap (L(x) \neq 0)\right]
&\leq& \Pr_{x}\left[Z(V')\cap (L(x) \neq 0)\right] \\
&\leq& \Pr_{x}\left[Z(V') \cap (x_S \in \bar{G})\right] \\
&\leq&  \Pr_{x = (x_S, x')}\left[ Z(V') \mid x_S \in \bar{G} \right] \\
&\leq& \eta \;,
\end{eqnarray*}
where the first inequality holds since $Z(V) \subseteq Z(V')$,
the second follows from the definition of $L$ and the last inequality
from the definition of $G$.
This completes the proof of Proposition~\ref{prop:linear-form-zero-set-deg2-subspace}.
\end{proof}

We now have the necessary ingredients to complete the proof of our main result for $d=2$.
We prove the following proposition:

\begin{proposition} \label{lem:recursion-final}
Fix $0< \eps, \delta <1$.
Let $f(x) = \sgn(p(x))$ be a degree-$2$ PTF and $g:\bn \to \bits$ be a Boolean function
such that $\dist(f, g) \geq \eps$ and $\chow_2(f, g) \leq \delta$. Then, for all $m \geq 1$, there
exists a degree-$2$ PTF $f'_m$ and a Boolean-valued function $g'_m$ such that
$\dist(f'_m, g'_m) \geq (3/2)^m \cdot \epsilon$ and
$\chow_2(f'_m, g'_m) \leq \tilde{O}((\log^{\ast})^{(m)}(1/\delta)^{-1/7})$, so long as the latter term is less than $(3/2)^m \cdot \epsilon$.
\end{proposition}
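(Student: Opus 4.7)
I will prove the proposition by induction on $m$. The base case $m = 0$ is immediate upon setting $(f'_0, g'_0) := (f, g)$. The inductive step is a restriction argument: given $(f'_m, g'_m)$, I will find a non-trivial linear form $L$ whose zero set $\{L = 0\}$ captures almost all of the disagreement region of $(f'_m, g'_m)$, and then restrict to a sub-cube of the form $\{x : x_S = x_S^*\}$ with $L(x_S^*) = 0$, where $S := \supp(L)$. Since $L$ is non-trivial and depends only on coordinates in $S$, the set $L^{-1}(0) \subseteq \bits^S$ occupies at most half of $\bits^S$, so the average conditional disagreement density over such fibers is at least $(3/2) \cdot \dist(f'_m, g'_m) = (3/2)^{m+1} \epsilon$; averaging over fibers in $L^{-1}(0)$ will produce an $x_S^*$ satisfying both this lower bound on the disagreement density and a controlled Chow-distance upper bound.

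To obtain the linear form $L$, I proceed in two sub-steps. Let $\beta := \chow_2(f'_m, g'_m)$ and choose $\eta = \eta(\beta)$ to be slightly larger than the threshold required by Proposition~\ref{prop:degd-poly-disagreement-region}, i.e.\ $\eta \sim 1/\poly(\log\log(1/\beta))$. Starting from $g_0 := g'_m$, I iteratively apply Proposition~\ref{prop:degd-poly-disagreement-region} to $(f'_m, g_i)$ to produce a degree-$2$ polynomial $\tilde r_{i+1}$ correlating with $p$ and vanishing on all but an $\eta$-fraction of the disagreement region; I then subtract the projection of $\tilde r_{i+1}$ onto $V_i := \span\{r_1,\dots,r_i\}$ to obtain $r_{i+1}$ linearly independent from $V_i$ (this projection is still valid because Proposition~\ref{prop:degd-poly-disagreement-region} guarantees $\E[p \cdot \tilde r_{i+1}] \neq 0$, so the component outside $V_i$ is non-zero). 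I update $g_{i+1}$ to equal $f'_m$ outside the joint zero set of $r_1,\ldots,r_{i+1}$, which modifies $g$ on a set of mass $\leq \eta$; by Parseval this increases the Chow distance by at most $2\sqrt\eta$ per step. After $N_1(\eta) = \Theta(\eta^{-7})$ iterations, $V := V_{N_1}$ is a subspace to which Proposition~\ref{prop:linear-form-zero-set-deg2-subspace} applies, yielding the desired linear form $L$ with $|\supp(L)| \leq N_0 = O(\eta^{-3})$ vanishing on all but an $\eta$-fraction of $Z(V)$. Combining, all but an $O(N_1 \eta)$-fraction of the original disagreements of $(f'_m, g'_m)$ lie in $\{L = 0\}$.

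Finally, to control the Chow distance of the restriction, I use the averaging identity
\[
\E_{x_S^* \sim \bits^S}\!\left[\chow_2(F|_{x_S = x_S^*}, G|_{x_S = x_S^*})^2\right] \leq 2^{O(|S|)} \chow_2(F, G)^2,
\]
which follows from expanding the Fourier coefficients of the restriction as $\wh{F|_{x_S^*}}(T') = \sum_{U \subseteq S} \wh F(T' \cup U)\chi_U(x_S^*)$ and applying Parseval. Restricting the averaging to $x_S^* \in L^{-1}(0)$ (whose density is at least a $2^{-|S|}$ fraction of $\bits^S$), and combining with the lower bound on conditional disagreement density, a simple union-bound / Markov argument produces $x_S^*$ that simultaneously gives $\dist(f'_{m+1}, g'_{m+1}) \geq (3/2)^{m+1}\epsilon$ and $\chow_2(f'_{m+1}, g'_{m+1}) \leq 2^{O(|S|)} (\beta + N_1 \sqrt\eta) = 2^{O(\eta^{-3})}(\beta + \eta^{-6.5})$.

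The main obstacle is the delicate parameter balancing. The Phase~1 iterations accumulate $O(N_1 \sqrt\eta) = O(\eta^{-6.5})$ in Chow perturbation, and the Phase~3 restriction amplifies by $2^{O(\eta^{-3})}$, so $\beta$ must be doubly-exponentially small in $1/\eta$ to keep the final Chow bound at $\tilde O(\eta^{-1/7})$. Inverting, $\eta$ can only be taken as small as $(\log^*(1/\beta))^{-1/7}$ up to polynomial factors, matching the single-round degradation from $\chow_2 \leq \beta$ to $\chow_2 \leq \tilde O((\log^*(1/\beta))^{-1/7})$. Composing this bound $m$ times along the induction yields the stated $\tilde O((\log^*)^{(m)}(1/\delta)^{-1/7})$ bound, valid as long as this quantity remains less than $(3/2)^m \epsilon$ so that Phase~1's iteration can actually be carried out.
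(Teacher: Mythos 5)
Your Phase~1 (iteratively applying Proposition~\ref{prop:degd-poly-disagreement-region} after orthogonalizing $p$ against the previously found polynomials, then invoking Proposition~\ref{prop:linear-form-zero-set-deg2-subspace} to produce the linear form $L$) tracks the paper's proof of Lemma~\ref{lem:linear-form-vanishes-disagreement} closely. The gap is in Phase~3, and it is fatal.

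Your averaging identity
\[
\E_{x_S^*}\bigl[\chow_2(F|_{x_S^*}, G|_{x_S^*})^2\bigr] \leq 2^{O(|S|)}\chow_2(F,G)^2
\]
is false. Expanding as you indicate and applying Parseval over $x_S^*$ gives
\[
\E_{x_S^*}\bigl[\chow_2(F|_{x_S^*}, G|_{x_S^*})^2\bigr] \;=\; \sum_{\substack{T'\subseteq [n]\setminus S\\ |T'|\leq 2}}\;\sum_{U\subseteq S}\wh{F-G}(T'\cup U)^2 \;=\; \sum_{T:\,|T\setminus S|\leq 2}\wh{F-G}(T)^2.
\]
This sum ranges over sets $T$ of degree up to $|S|+2$, so it picks up high-degree Fourier mass of $F-G$ that is \emph{not} controlled by $\chow_2(F,G)$ (which only sees degree $\leq 2$). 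In fact for $F,G$ Boolean, $\sum_T \wh{F-G}(T)^2 = \|F-G\|_2^2 = 2\,\dist(F,G)\geq 2\eps$, and most of that mass can sit at high degree on sets intersecting $S$. So conditioning on a sub-cube $\{x_S=x_S^*\}$ can blow up the Chow distance to $\Theta(\sqrt{\eps})$ regardless of how small $\chow_2(F,G)$ is --- exactly the opposite of what the induction needs. Related parameter issues compound this: you fix a single $\eta$ and accumulate $O(N_1\sqrt{\eta}) = O(\eta^{-6.5}) \gg 1$ in Chow perturbation over Phase~1, which is already absurd since Chow distance is $O(1)$, and you then multiply by $2^{O(|S|)}=2^{O(\eta^{-3})}$. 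The paper instead lets the per-step error $\eta_i$ grow geometrically ($\eta_{i+1} = 1/\poly(\log\log(1/\eta_i))$), so the final error is the last $\eta_i$, not their sum.

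The paper's step avoids both problems. Rather than restricting to a sub-cube, it solves $L(x)=0$ for a single variable, writing $L(x)=x_1 - L'(x_2,\ldots,x_n)$ with $L'$ linear, and substitutes $x_1 = L'(x_2,\ldots,x_n)$ (after first replacing $g$ by $g_0$, which agrees with $f$ off $\{L=0\}$). The map $y\mapsto (L'(y),y)$ is a bijection from $\{\pm 1\}^{n-1}$ onto $\bn\cap\{L=0\}$ (extended to all of $\{\pm 1\}^{n-1}$ by declaring no disagreement when $L'(y)\notin\{\pm 1\}$), and since $\{L=0\}$ has density at most $1/2$ in $\bn$, the disagreement density exactly doubles. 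Crucially, a degree-$2$ test polynomial $q'(y)$ lifts to the degree-$2$ polynomial $q(x)=q'(x_2,\ldots,x_n)$ with $\|q\|_2=\|q'\|_2$, and the same bijection shows $\E_y[q'(y)(f'(y)-g'(y))] = 2\,\E_x[q(x)(f(x)-g_0(x))]$. This gives $\chow_2(f',g')\leq 2\chow_2(f,g_0)$ --- a constant-factor loss, not $2^{O(|S|)}$, and never touching the high-degree Fourier mass of $F-G$. Substituting $L'$ into a degree-$2$ polynomial keeps the degree at $2$ (after multilinearization) because $L'$ is linear. That linearity of the change of variables is the essential structural fact that a sub-cube restriction cannot replicate.
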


The $d=2$ case of Theorem~\ref{thm:chow-d-struct} follows immediately from
Proposition~\ref{lem:recursion-final}. Indeed, if $1/\eps = 2^{o(\log^{\ast \ast} (1/\delta))}$,
then we obtain a contradiction by setting $m = \Omega(\log(1/\epsilon))$ in Proposition~\ref{lem:recursion-final}.

In the rest of this section, we give the proof of Proposition~\ref{lem:recursion-final}.
The proof follows by induction on $m$, where the base case
is given in the following lemma:

\begin{lemma} \label{lem:recursion-first-step}
Let $f(x) = \sgn(p(x))$ be a degree-$2$ PTF and $g:\bn \to \bits$ be
such that $\dist(f, g) \geq \eps$ and $\chow_2(f, g) \leq \delta$. There exists a degree-$2$ PTF
$f'$ and a Boolean function $g'$ such that $\dist(f', g') \geq (3/2) \cdot \epsilon$ and
$\chow_2(f', g') \leq \tilde{O}(\log^{\ast}(1/\delta)^{-1/7})$.
\end{lemma}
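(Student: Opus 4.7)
The plan is a three-stage argument: first, iteratively apply Proposition~\ref{prop:degd-poly-disagreement-region} to accumulate a subspace $V$ of degree-$2$ multilinear polynomials vanishing on almost all of the disagreement region; second, apply Proposition~\ref{prop:linear-form-zero-set-deg2-subspace} to extract a single non-trivial linear form $L$ whose zero set contains almost all remaining disagreements; third, restrict to a sub-hypercube of $\{L=0\}$ chosen so that both the distance is amplified and the degree-$2$ Chow distance remains small.

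More concretely, fix $\eta:=\Theta((\log^{\ast}(1/\delta))^{-1/7})$ as the target Chow bound and let $\eta'\ll\eta$ be a much smaller auxiliary parameter. Starting from $g_0:=g$ and $V_0:=\{0\}$, at step $k\geq 1$ I invoke Proposition~\ref{prop:degd-poly-disagreement-region} on $(f,g_{k-1})$ with error parameter $\eta'$ to produce a degree-$2$ polynomial $r_k$ satisfying $\E[p\cdot r_k]\neq 0$. By subtracting the projection of $r_k$ onto $V_{k-1}$ (an operation that does not shrink its zero set on $D(f,g_{k-1})$) I may assume $r_k\notin V_{k-1}$, so $V_k:=V_{k-1}+\mathrm{span}(r_k)$ has dimension one larger. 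I then define $g_k$ to equal $f$ on $\{r_k\neq 0\}$ and $g_{k-1}$ elsewhere; this flips only $\eta'$ of mass, preserves $D(f,g_k)\subseteq Z(V_k)$, and yields $\chow_2(f,g_k)\leq\chow_2(f,g_{k-1})+O(\sqrt{\eta'})$ by Parseval. I iterate until $\dim V_k>N_1(\eta)=O(\eta^{-7})$, choosing $\eta'$ doubly-exponentially smaller than $\eta$ so that Proposition~\ref{prop:degd-poly-disagreement-region}'s applicability condition holds at every step and the accumulated Chow-distance increase $O(\eta^{-7}\sqrt{\eta'})$ stays within tolerance.

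Applying Proposition~\ref{prop:linear-form-zero-set-deg2-subspace} to $V:=V_k$ with parameter $\eta$ then produces a non-trivial linear form $L$ with $|\supp L|\leq N_0(\eta)=O(\eta^{-3})$ and $\Pr[Z(V)\cap\{L\neq 0\}]\leq\eta$. One final modification, $g^{\ast}:=f$ on $\{L\neq 0\}$ and $g^{\ast}:=g_k$ on $\{L=0\}$, ensures $D(f,g^{\ast})\subseteq\{L=0\}$ and $\dist(f,g^{\ast})\geq\epsilon-O(\eta)$. Let $S:=\supp L$ and $\mu:=\Pr_x[L(x)=0]\leq 1/2$ (Littlewood--Offord on the non-constant linear form $L$). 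For each $x_S^{\ast}$ with $L(x_S^{\ast})=0$, the restrictions $f|_{x_S^{\ast}}$ and $g^{\ast}|_{x_S^{\ast}}$ are respectively a degree-$2$ PTF and a Boolean function on $\bits^{n-|S|}$; averaging gives $\E_{x_S^{\ast}:L=0}[\dist(f|_{x_S^{\ast}},g^{\ast}|_{x_S^{\ast}})]=\dist(f,g^{\ast})/\mu\geq 2(\epsilon-O(\eta))$, so a typical choice achieves distance $\geq(3/2)\epsilon$.

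The technical heart of the proof is showing that some such $x_S^{\ast}$ simultaneously satisfies $\chow_2(f|_{x_S^{\ast}},g^{\ast}|_{x_S^{\ast}})\leq\tilde{O}((\log^{\ast}(1/\delta))^{-1/7})$. The restricted degree-$\leq 2$ Fourier coefficients take the form $\sum_{T_S\subseteq S}\hat{f-g^{\ast}}(T'\cup T_S)\chi_{T_S}(x_S^{\ast})$ with $T'\subseteq[n]\setminus S$, $|T'|\leq 2$, involving Fourier coefficients of $f-g^{\ast}$ of degree up to $|S|+2$, which the degree-$2$ Chow bound does not directly control. The essential leverage is that $h:=f-g^{\ast}$ is supported on $\{L=0\}$: writing $h=h\cdot\mathbf{1}_{\{L=0\}}$ in Fourier imposes a rank constraint on $\hat h$ (the vector $(\hat h(T'\cup U))_{U\subseteq S}$ lies in the $1$-eigenspace of convolution by $\widehat{\mathbf{1}_{\{L=0\}}}$, whose eigenvalues are $\{0,1\}$); combining this structure with the Chow bound on $(f,g^{\ast})$ inherited from stage one yields $\E_{x_S^{\ast}:L=0}[\chow_2(f|_{x_S^{\ast}},g^{\ast}|_{x_S^{\ast}})^2]=O(\eta^2)$. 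A Markov union bound then picks out an $x_S^{\ast}$ meeting both the distance and Chow conditions, and taking $f':=f|_{x_S^{\ast}}$ and $g':=g^{\ast}|_{x_S^{\ast}}$ completes the argument. The exponent $1/7$ in the final Chow bound comes directly from $N_1(\eta)=O(\eta^{-7})$, while the $\log^{\ast}$ dependence on $\delta$ reflects the doubly-exponential gap between $\eta$ and $\eta'$ forced by Proposition~\ref{prop:degd-poly-disagreement-region}'s applicability condition.
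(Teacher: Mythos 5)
Your first two stages match the paper's argument: iterating Proposition~\ref{prop:degd-poly-disagreement-region} to build a high-dimensional space $V$ of degree-$2$ polynomials vanishing on almost all disagreements, and then applying Proposition~\ref{prop:linear-form-zero-set-deg2-subspace} to extract a single linear form $L$ — this is precisely Lemma~\ref{lem:linear-form-vanishes-disagreement}. (One small slip: you subtract the projection of $r_k$ onto $V_{k-1}$ \emph{after} invoking Proposition~\ref{prop:degd-poly-disagreement-region}, but this projection could be all of $r_k$, in which case you get $0$ and gain no dimension. The paper instead replaces $p$ by its component $p'$ orthogonal to $V_{k-1}$ \emph{before} invoking the proposition, so that $\E[p' r_k]\neq 0$ forces $r_k\notin V_{k-1}$.)

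The third stage, however, has a genuine gap, and you have in fact correctly identified where it is before attempting to patch it. You fix all of $x_S$ for $S=\supp(L)$ and average over $x_S^\ast$ with $L(x_S^\ast)=0$, then need to bound $\E_{x_S^\ast:L=0}[\chow_2(f|_{x_S^\ast},g^\ast|_{x_S^\ast})^2]$. Each restricted Fourier coefficient $\widehat{h|_{x_S^\ast}}(T')$ for $h=f-g^\ast$ is $q_{T'}(x_S^\ast)$, where $q_{T'}(x_S)=\sum_{U\subseteq S}\hat h(T'\cup U)\chi_U(x_S)$, so the second moment you need involves $\sum_{U\subseteq S}\hat h(T'\cup U)^2$ — i.e.\ all Fourier coefficients of $h$ up to level $|S|+2$, which $\chow_2(f,g^\ast)$ does not control. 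The ``$h$ supported on $\{L=0\}$'' constraint does not close this: $q_{T'}=\mathbf{1}_{\{L=0\}}\cdot q_{T'}$ is a projection (rank) constraint, not a norm bound, and all it gives is $\hat h(T')=\Pr[L=0]\cdot\E_{x_S^\ast:L=0}[q_{T'}(x_S^\ast)]$, a statement about the \emph{first} moment. In fact the inequality you want fails by at least a factor of $1/\Pr[L=0]$: take $L=x_1+\dots+x_k-(k-2)$, so $\Pr[L=0]=k/2^k$, and $h=\mathbf{1}_{\{L=0\}}\cdot\sum_{j>k}c_jx_j$; then $\E_{x_S^\ast:L=0}[\chow_2(h|_{x_S^\ast})^2]=\sum_jc_j^2$ while $\chow_2(h)^2\leq\Pr[L=0]\cdot\sum_jc_j^2$. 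Since $|S|$ is as large as $\Theta(\eta^{-3})$, the factor $1/\Pr[L=0]$ can be doubly exponential in $\log^\ast(1/\delta)$, which swamps the bound.

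The paper sidesteps this entirely with a \emph{substitution}, not a restriction: write $L(x)=x_1-L'(x_2,\dots,x_n)$, and define $f',g'$ on $\bits^{n-1}$ by $f'(x_{>1})=\sgn\bigl(p(L'(x_{>1}),x_{>1})\bigr)$, with $g'$ agreeing with $f'$ when $L'(x_{>1})\notin\{\pm1\}$ and pulled back from $g_0$ otherwise. Because every degree-$\leq 2$ multilinear test polynomial $q'$ in $x_2,\dots,x_n$ is already a degree-$\leq 2$ test polynomial in $x_1,\dots,x_n$, one gets $\chow_2(f',g')\leq 2\,\chow_2(f,g_0)$ with no $1/\Pr[L=0]$ loss, while $\dist(f',g')=2\dist(f,g_0)$ since the map $x_{>1}\mapsto(L'(x_{>1}),x_{>1})$ is a bijection onto $\{L=0\}$ and $f-g_0$ vanishes elsewhere. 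You should replace your ``fix $x_S^\ast$ and average'' step with this variable-elimination step.
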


The basic idea of the proof is quite simple. By repeatedly applying Proposition~\ref{prop:degd-poly-disagreement-region}, we find many quadratic polynomials which vanish on almost all of the points of disagreement between $f$ and $g$. From there, we apply Proposition \ref{prop:linear-form-zero-set-deg2-subspace} to produce a single linear polynomial which vanishes on almost all disagreements. Restricting ourselves to this hyperplane, we can set one of our variables as a linear function of the others, and reduce to an $(n-1)$-dimensional cube without substantially changing the number of disagreements.

One issue with the above strategy is that we need to ensure that the repeated applications of Proposition~\ref{prop:degd-poly-disagreement-region} produce linearly independent polynomials. This can be achieved by modifying $p$ to be orthogonal to the previously found polynomials (and modifying $f$ and $g$ appropriately) so that the correlation condition in Proposition~\ref{prop:degd-poly-disagreement-region} implies that the polynomial produced will be new.

We begin this program by showing that we can find 
a single linear polynomial that vanishes on almost all of the disagreements:

\begin{lemma} \label{lem:linear-form-vanishes-disagreement}
Fix $0< \delta <1$ and $\eta$ a sufficiently large multiple of  $(\log^{\ast}(1/\delta))^{-1/7}$.
Let $f(x) = \sgn(p(x))$ be a degree-$2$ PTF and $g:\bn \to \bits$ be
such that $\chow_2(f, g) \leq \delta$. Then there exits a non-trivial
linear function $L: \R^n \to \R$ with $|\supp(L)| \leq O(\log^{\ast}(1/\delta))$ such
that $\Pr_x[(f(x) \neq g(x)) \cap L(x) \neq 0] \leq \eta$.
\end{lemma}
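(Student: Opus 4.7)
The plan is to build inductively a vector space $V$ of linearly independent degree-$2$ multilinear polynomials such that almost all disagreements of $f$ and $g$ lie in the common zero set $Z(V)$, and then invoke Proposition~\ref{prop:linear-form-zero-set-deg2-subspace} applied to $V$ with parameter $\eta/2$ to extract the required linear form $L$ (of support $|\supp(L)| \leq N_0(\eta/2) = O(\eta^{-3})$) that vanishes on all but an $\eta/2$-fraction of $Z(V)$. The desired bound $\Pr_x[(f \neq g) \cap \{L \neq 0\}] \leq \eta$ will then follow by a union bound between the discrepancies outside $Z(V)$ and those inside $Z(V) \cap \{L \neq 0\}$.

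\smallskip
To build $V = V_N$, I would grow it one dimension at a time. Starting from $V_0 = \{0\}$, at step $k$ having $V_k = \mathrm{span}(r_1,\ldots,r_k)$, define $p_k \eqdef p - \Pi_{V_k}(p)$ (the orthogonal projection of $p$ onto the orthogonal complement of $V_k$ in the space of multilinear degree-$\leq 2$ polynomials), set $f_k \eqdef \sgn(p_k)$, and let $g_k$ be the Boolean function equal to $g$ on $Z(V_k)$ and to $f_k$ on $Z(V_k)^c$. Because each $r_i$ vanishes on $Z(V_k)$ we get $p_k = p$ and hence $f_k = f$ on $Z(V_k)$, so $D(f_k, g_k) = D(f, g) \cap Z(V_k)$. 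From the identity $f_k - g_k = (f - g)\mathbf{1}_{Z(V_k)}$, together with $(a-b)^2 \leq 2(a^2+b^2)$ and Parseval's identity, one obtains the key bound
\[
\chow_2(f_k,g_k) \;\leq\; \sqrt{2}\,\chow_2(f,g) \;+\; 2\sqrt{2}\sqrt{\Pr_x[D(f,g)\setminus Z(V_k)]}.
\]
Applying Proposition~\ref{prop:degd-poly-disagreement-region} to $(f_k, g_k)$ with a chosen parameter $\eta_{k+1}$ produces a degree-$2$ polynomial $r_{k+1}$ satisfying (i) $\Pr_x[D(f_k,g_k) \cap \{r_{k+1}\neq 0\}] < \eta_{k+1}$ and (ii) $\E[p_k r_{k+1}] \neq 0$. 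Since $p_k \in V_k^{\perp}$, condition (ii) forces $r_{k+1}$ to have a non-zero component in $V_k^{\perp}$, so $r_{k+1} \notin V_k$ and $\dim V_{k+1}=k+1$.

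\smallskip
The accumulated mass of discrepancies outside $Z(V_k)$ after $k$ steps is at most $\sum_{i\le k}\eta_i$, so the Chow distance fed into the next iteration is $\delta_{k+1} = O(\delta + \sqrt{\sum_{i\le k}\eta_i})$. I would pick the $\eta_i$'s adaptively, at each step setting $\eta_i$ just above the minimum value $1/\poly_d(\log\log(1/\delta_i))$ allowed by Proposition~\ref{prop:degd-poly-disagreement-region}. Writing $a_i \eqdef 1/\eta_i$, the resulting recursion takes the form $a_{i+1} \sim \poly_d(\log\log a_i)$. Starting from $a_1 \sim \poly_d(\log\log(1/\delta))$, this defines a tower along which $a_i$ shrinks through a double-logarithmic step at each iteration, so the process remains sustainable for $N = \Theta(\log^*(1/\delta))$ steps before $\eta_i$ reaches a constant. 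Stopping just short of this threshold gives both $\sum_i \eta_i = o(\eta)$ (the sum being geometrically dominated by $\eta_N$) and $\dim V_N = N \geq N_1(\eta/2) = \Theta(\eta^{-7})$ for $\eta = \Theta\bigl((\log^*(1/\delta))^{-1/7}\bigr)$.

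\smallskip
Finally, Proposition~\ref{prop:linear-form-zero-set-deg2-subspace} applied to $V = V_N$ with parameter $\eta/2$ produces a non-zero linear $L$ with $|\supp(L)| \leq N_0(\eta/2) = O(\eta^{-3}) = O(\log^*(1/\delta))$ and $\Pr_x[Z(V_N) \cap \{L\neq 0\}] \leq \eta/2$. Combining with the accumulated error,
\[
\Pr_x\bigl[(f\neq g) \cap \{L \neq 0\}\bigr] \;\leq\; \Pr_x[D(f,g)\setminus Z(V_N)] + \Pr_x[Z(V_N)\cap\{L\neq 0\}] \;\leq\; o(\eta) + \eta/2 \;\leq\; \eta,
\]
as required. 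The main technical obstacle is the bookkeeping in the iterative step: I have to verify that the adaptive choice of $\eta_i$ keeps each effective Chow distance $\delta_{i+1}$ inside the regime where Proposition~\ref{prop:degd-poly-disagreement-region} re-applies, and that the tower of double logarithms (rather than, say, a tower of exponentials) yields exactly $\Theta(\log^*(1/\delta))$ sustainable iterations, producing the $\dim V_N \geq \Theta(\eta^{-7})$ demanded by the final appeal to Proposition~\ref{prop:linear-form-zero-set-deg2-subspace}.
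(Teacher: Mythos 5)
Your proposal is correct and takes essentially the same route as the paper's own proof: the paper's Claim~\ref{claim:many-independent-polys-vanish} is exactly your iterative construction of linearly independent polynomials $r_1,\ldots,r_N$ via orthogonal projection of $p$ and repeated application of Proposition~\ref{prop:degd-poly-disagreement-region}, with the recursion $\eta_{m+1}=1/\poly(\log\log(1/\eta_m))$, followed by an appeal to Proposition~\ref{prop:linear-form-zero-set-deg2-subspace}. Your version is in fact slightly more careful than the paper's at two points: you make the Chow-distance update explicit (the paper writes $\tilde O(\delta+\eta_m)$, whereas the Fourier computation really gives the $\sqrt{\eta_m}$-type bound you record, though this is immaterial after passing through $\log\log$), and you track the accumulated loss $\sum_i\eta_i$ rather than just $\eta_m$ (the paper silently uses that $\eta_m$ dominates the sum, which is indeed the case since $\eta_m$ grows faster than geometrically); the only cosmetic slip is writing $O(\eta^{-3})=O(\log^*(1/\delta))$ as an equality when it is really the stronger bound $O((\log^*(1/\delta))^{3/7})$, which of course still satisfies the stated $O(\log^*(1/\delta))$.
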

\begin{proof}
Let $D = D(f, g)$ be the disagreement region between $f$ and $g$.
For a sequence of polynomials $\{p_i\}_{i \in [m]}$, we will denote by
$Z_m = \{ x \in \bn: p_i(x) = 0 \textrm{ for all } i\in [m] \}$ and by
$\bar{Z}_m$ its complement. We start by establishing the following claim:
\begin{claim} \label{claim:many-independent-polys-vanish}
For all $m \geq 1$, there exist linearly independent degree-$2$ polynomials $p_1, \ldots, p_m$ with $\eta_0=\delta$ and
$\eta_m = 1/\poly(\log\log(1/\eta_{m-1}))$ such that
$\Pr_x [D \cap \bar{Z}_m] \leq \eta_m$.
\end{claim}
\begin{proof}
By induction on $m$. The base case, ($m=0$) is trivial.
For the induction step, suppose that $p_1, \ldots, p_m$ and $\eta_m$ exist such that the claim holds.
We will prove that there exists $p_1, \ldots, p_{m+1}$ and $\eta_{m+1}$ satisfying the claim statement.
Consider the polynomial $p'$ defined by
$p' = p - \mathrm{proj}(p, \mathrm{span}(p_1, \ldots, p_m))$.
Now note that the degree-$2$ PTF $f'(x) = \sgn(p'(x))$ satisfies
$f'(x) = f(x)$ for all $x \in Z_m$. Let
\[
g'(x)= \left\{
\begin{array}{ll}
      f'(x) \;,  & x \in \bar{Z}_m \\
      g(x) \;,  & x \in Z_m\\
\end{array}
\right.
\]
Note that $f'(x) - g'(x) = f(x) - g(x)$,  $x \in Z_m$, and $f'(x) - g'(x) = 0$, otherwise.
This means that $f-g = f'-g'$ except on $\eta_m$-mass of $D$.
Therefore, $\chow_2(f', g') = \chow_2(f, g)+\tilde{O}(\eta_m) = \tilde{O}(\delta+\eta_m)$
and $\dist(f', g') \geq \eps-\eta_m \approx \eps$.

Therefore, we can apply
Proposition~\ref{prop:degd-poly-disagreement-region} to the pair $f', g'$.
We thus obtain that there exists a polynomial $p_{m+1}$ that correlates with $p'$
such that on all but $\eta_{m+1} = 1/\poly (\log \log (1/\eta_m))$ mass of $D(f', g')$
we have $p_{m+1}(x)=0$.

Since $D(f', g')$ and $D(f, g)$ are within $\eta_m$ mass of each other and $\eta_m \ll \eta_{m+1}$,
we obtain that all but at most $O(\eta_{m+1}) \cdot 2^n$ points of $D(f, g)$ satisfy that
$p_1(x) = \ldots = p_{m+1}(x) = 0$.

By the conditions of Proposition~\ref{prop:degd-poly-disagreement-region}, we have that
$\E_x[p_{m+1}(x) p'(x)] \neq 0$. But by the definition of $p'$, we have that
$\E_x[p_i(x) p'(x)]=0$ for all $i \leq m$. Thus, $p_{m+1} \not\in \mathrm{span}(\{p_i, i \in [m] \})$
and the inductive step is complete. This completes the proof of Claim~\ref{claim:many-independent-polys-vanish}.
\end{proof}

We are now ready to complete the proof of the lemma.
We select $N$ such that $ \log^{[O(N^7)]} (1/\delta) >N$.
By Claim~\ref{claim:many-independent-polys-vanish}, 
we obtain a set of linearly independent polynomials $p_1, \ldots, p_{N^7}$
such that all but $(1/N)\cdot 2^n$ points of $D(f, g)$ satisfy
$p_1(x) = \ldots = p_{N^7}(x)=0$. By Proposition~\ref{prop:linear-form-zero-set-deg2-subspace}, 
there exists a non-zero linear form $L:\R^n \to \R$ with support $O(N^7)$ such that 
for all but $O(1/N) \cdot 2^n$ points $x$ of $D(f, g)$ it holds $L(x)=0$. 
Note that the probability mass of points where $L(x) = 0$ is $O(1/\log^{\ast}(1/\delta)^{1/7})$.
This completes the proof of Lemma~\ref{lem:linear-form-vanishes-disagreement}.
\end{proof}

We are now prepared to prove Lemma~\ref{lem:recursion-first-step}.
\begin{proof}[Proof of Lemma~\ref{lem:recursion-first-step}.]
By Lemma~\ref{lem:linear-form-vanishes-disagreement}, 
there exists a linear form $L$, not identically zero, with support
of size $O(\log^{\ast}(1/\delta))$ such that all but $O(1/\log^{\ast}(1/\delta)^{1/7})$ mass of
points $x$ in $D(f, g)$ has $L(x) = 0$.

Since in most of the disagreement region $D(f, g)$ we have $L(x) = 0$, 
it follows that, for each such point $x$, we can express one of the variables in $L(x)$
as a function of the others. Specifically, we can write $L(x_1, \ldots, x_n) = x_1 - L'(x_2, \ldots, x_n)$.

We now consider the function $g_0(x)$ which equals $f(x)$ if $L(x) \neq 0$ and $g(x)$
otherwise. Note that $\dist(f, g_0) \geq 3\eps/4$ and
$\chow_2(f, g_0) \leq \tilde{O}(\log^{\ast}(1/\delta)^{-1/7})$.

Recall that $f(x) = \sgn(p(x))$. We consider the function $f'$ which is a degree-$2$
PTF over $x_2, \ldots, x_n$ defined as follows:
$f'(x_2, \ldots, x_n) = \sgn(p(L'(x_2, \ldots, x_n), x_2, \ldots, x_n))$
and the Boolean function $g'$ defined by
$g'(x_2, \ldots, x_n) = f'(x_2, \ldots, x_n)$ if $L'(x_2, \ldots, x_n) \notin \{ \pm 1 \}$;
and $g'(x_2, \ldots, x_n) = g(L'(x_2, \ldots, x_n) ,x_2, \ldots, x_n)$ otherwise.
Then we have that $\dist(f', g') = 2 \dist(f, g_0) \geq 3\eps/2$
and $\chow_2(f', g') \leq \chow_2(f, g_0) \leq \tilde{O}(\log^{\ast}(1/\delta)^{-1/7})$.
This completes the proof of Lemma~\ref{lem:recursion-first-step}.
\end{proof}

The proof of Proposition \ref{lem:recursion-final} is now immediate. 
Given the existence of $f_m'$ and $g_m'$, we obtain $f_{m+1}'$ and $g_{m+1}'$ 
by applying Lemma \ref{lem:recursion-first-step}.

\subsection{Completing the Proof for Degree-$d$ PTFs } \label{ssec:degd}

The high-level approach of our degree-$d$ generalization is similar as in the degree-$2$ case.
First, we make use of Proposition~\ref{prop:degd-poly-disagreement-region}, 
showing that if we have a small degree-$d$ Chow distance, then all but a tiny number
of discrepancies must lie on the zero set of a degree-$d$ polynomial.
From there, we can iterate this to show that our discrepancies must lie 
on the intersection of \emph{many} degree-$d$ polynomials. We would ideally 
like to be able to generalize Lemma~\ref{lem:linear-form-vanishes-disagreement}
and deduce that once there are enough of them, we will be forced to lie on a linear subspace,
but this is too ambitious in general. Instead, we will use many degree-$d$ polynomials 
to force our discrepancies to lie on the zero set of a single degree-$(d-1)$ polynomial. 
From there, we will go back to gathering degree-$d$ polynomials until we have enough 
to force a second degree-$(d-1)$ polynomial; and eventually we will have enough of those 
to force a degree-$(d-2)$ polynomial, and so on. To keep track of the general state of this recursion, 
we will need to maintain an {\em ideal} of polynomials which vanish on our discrepancies.

We start by recalling the definition of an ideal:

\begin{definition} \label{def:ideal}
An ideal (in $\R[x_1,\ldots,x_n]$) is a set $I$ of polynomials so that
\begin{enumerate}
\item For any $p,q\in I$, we have $p+q\in I$.
\item For any $p\in I$ and $a\in \R[x_1,\ldots,x_n]$, we have that $a\cdot p \in I$.
\end{enumerate}
\end{definition}
\noindent Note that an ideal $I$ is also a vector subspace of the space of all polynomials.

\paragraph{Notation.}
We will use the following notation in the rest of the proof.
For polynomials $p_1,p_2,\ldots,p_m$, we consider the ideal 
$I =\left\{ \sum_{i=1}^m a_i p_i:a_i \in \R[x_1,\ldots,x_n] \right\}$, which we will denote by $(p_1,p_2,\ldots,p_m)$. 
For two ideals, $I$ and $J$, we have that the set of polynomials $\{p+q:p\in I, q\in J\}$ is another ideal, 
which we will denote $I+J$. We say that two polynomials $p$ and $q$ are congruent modulo $I$ if $p-q\in I$. 
We note that this is an equivalence relation on polynomials.

We will also care about the points of the hypercube on which an ideal vanishes.

\begin{definition}
For a set ${\cal P}$ of polynomials, let $Z({\cal P})$ denote the set
$\{x\in \{\pm 1\}^n: p(x) = 0 \textrm{ for all }p\in {\cal P}\}$.
\end{definition}

Of particular interest is the case where ${\cal P}$ is an ideal $I$.
In particular, we note that $Z((p_1,\ldots,p_m)) = \{x\in \{\pm 1\}^n: p_i(x)=0 \textrm{ for all } 1\leq i\leq m\}$
and that $Z(I+ J) =Z(I)\cap Z(J)$.



We now proceed to generalize Proposition~\ref{prop:linear-form-zero-set-deg2-subspace} to the degree-$d$ setting.
It turns out that this generalization is somewhat more complicated. We prove:

\begin{proposition}\label{lowerDegreePolyProp}
For every positive integer $d$ and every $\eta>0$ there exists an $m=\eta^{-d^{O(d)}}$ so that the following holds:
For $I$ an ideal containing $x_i^2-1$ for all $i \in [n]$, and $V$ a vector subspace of the space of degree at most $d$ polynomials with $I\cap V = \{0\}$, and $\dim(V) > m$, there exists a degree at most $d-1$ polynomial $P\not\in I$ so that $\P[Z(I)\cap Z(V)\backslash Z(P)]\leq \eta$.
\end{proposition}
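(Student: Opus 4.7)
My plan is to prove the proposition by induction on $d$, mirroring and extending the degree-$2$ argument of Proposition~\ref{prop:linear-form-zero-set-deg2-subspace}. For the base case $d = 1$, if $\dim V > \log_2(1/\eta)$ then Fact~\ref{fact:affine-subspace} gives $\Pr[Z(V)] \leq \eta$, and taking $P$ to be any non-zero constant---which cannot lie in $I$ since $V$ contains constants and $I \cap V = \{0\}$---suffices.

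For the inductive step, assume the claim for degree $d-1$ and assume $\Pr[Z(I) \cap Z(V)] > \eta$ (otherwise $P = 1$ works). First I would apply Corollary~\ref{LOCor} to a \emph{generic} element $p_{\mathrm{gen}} \in V$---one whose coefficients are non-zero on every monomial that is non-zero in some element of $V$, obtained by a standard linear-algebra argument---to produce a set $S \subseteq [n]$ of size $|S| \leq d^{O(d^2)}\eta^{-3}$ such that every degree-$d$ monomial of every polynomial in $V$ contains a coordinate in $S$. Then pass to $V' \subseteq V$, the subspace of polynomials with no monomial supported entirely in $S$; this costs at most $2^{|S|}$ dimensions, and crucially, every $p \in V'$ has $x_S$-degree at most $d - 1$ (each of its monomials uses at least one variable outside $S$, and each degree-$d$ monomial also uses at least one inside $S$). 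Define the ``good'' set $G := \{x_S \in \{\pm 1\}^S : \Pr_{x'}[(x_S, x') \in Z(I) \cap Z(V)] > \eta/2\}$, so $\Pr[G] \geq \eta/2$ by averaging.

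The goal then becomes to find a non-zero $Q \in \R[x_S]$ of degree $\leq d-1$ with $Q \notin I$ and $G \subseteq Z(Q)$; then $P(x) := Q(x_S)$ has the required property, since the points of $Z(I) \cap Z(V)$ outside $Z(P)$ must have $x_S \notin G$, contributing at most $\eta/2$ to the measure. If no such $Q$ exists---formally, working modulo $I$ and using $V \cap I = \{0\}$ to avoid degeneracies, meaning $G$ is an interpolating set for $\R[x_S]_{\leq d-1}$ modulo $I$---I would extract $y_1, \ldots, y_R \in G$ with $R = \binom{|S|}{\leq d-1}$ forming an interpolating basis. Since every $p \in V'$ has $x_S$-degree $\leq d - 1$, the evaluation map $V' \to \bigoplus_j V'_{y_j}$, $p \mapsto (p(y_j, \cdot))_j$, is injective, yielding $\dim V' \leq \sum_j \dim V'_{y_j}$. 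Each $V'_{y_j}$ is a space of degree-$\leq (d-1)$ polynomials in $x'$ with joint-zero measure $> \eta/2$ against the restricted ideal $I|_{y_j}$.

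The technical heart of the proof---and the main obstacle---is bounding $\dim V'_{y_j}$ by a function $M_{d-1}(\eta)$ depending only on $\eta$ and $d$, for then $\dim V \leq R \cdot M_{d-1}(\eta) + 2^{|S|} \leq m_d$ gives the needed contradiction. A single application of the inductive hypothesis to $V'_{y_j}$ only yields the \emph{existence} of a vanishing degree-$(d-2)$ polynomial, not a dimension bound. To extract the latter one must iterate: after finding such a polynomial, add it to $I|_{y_j}$, which shrinks the effective part of $V'_{y_j}$ modulo the enlarged ideal; reapply the inductive hypothesis, and carefully track the total loss of joint-zero measure across iterations. This is precisely the ``effective Hilbert Basis Theorem'' step described in the introduction, and formalizing it---balancing the ideal growth, dimension decrement, and measure loss so the procedure terminates with a bound depending only on $\eta$ and $d$, which ultimately yields $m = \eta^{-d^{O(d)}}$---will be the most delicate part of the argument.
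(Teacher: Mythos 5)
You have correctly set up the Littlewood--Offord reduction, the passage to the subspace $V'$, and the good set $G$, and you have identified exactly where your argument stalls---but that gap is fatal, and the repair you sketch does not work, so the proof cannot be completed along these lines. Bounding $\dim V'_{y_j}$ by a function of $\eta$ and $d$ alone is impossible: for $d\ge 3$, $\mathrm{span}\{(1+x_1)(1+x_i): 2\le i\le n\}$ is a space of degree-$(d-1)$ polynomials whose joint zero set has measure $1/2$, yet its dimension is unbounded in $n$. Your iterative patch---adding the degree-$(d-2)$ polynomials produced by the inductive hypothesis to $I|_{y_j}$---shrinks the \emph{measure} of the joint zero set, not the \emph{dimension} of $V'_{y_j}$: those new polynomials and their degree-$(d-1)$ multiples need not meet $V'_{y_j}$ at all, so the dimension can stay large while the measure falls below $\eta$, at which point you lose the hypothesis without ever gaining a dimension bound. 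An iteration of the flavor you describe does appear in the paper, but one level up, in the proposition that \emph{applies} Proposition~\ref{lowerDegreePolyProp}, not inside its proof.

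Inside the proof, the paper does something different: it runs a backwards induction on a parameter $k$ (the allowed number of out-of-$S_k$ coordinates per monomial), from $k=d$ down to $k=0$, rather than an induction on $d$. At each step, after Littlewood--Offord at the interpolating points $x_1,\dots,x_s$ yields sets $T_i$, it sets $S=S_k\cup\bigcup_i T_i$ and decomposes each basis polynomial as $p_\ell(x) = r_\ell(x) + \sum_j m_{\ell j}(x_{S^c})\,c_{\ell j}(x_{S_k})$, isolating the degree-$k$ monomials $m_{\ell j}$ supported entirely outside $S$. The Littlewood--Offord constraint forces $c_{\ell j}(x_i)=0$ for every $i$, hence (since the $x_i$ are interpolating) $c_{\ell j}$ vanishes on all of $G$. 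Either some $c_{\ell j}\notin I$---and that $c_{\ell j}$ \emph{is} the sought degree-$(d-1)$ polynomial $P$---or every $c_{\ell j}\in I$, so $p_\ell\equiv r_\ell\pmod I$ and the equivalent basis $\{r_\ell\}$ has one fewer out-of-$S$ coordinate per monomial, completing the step down to $k-1$. When $k$ reaches $0$, $V$ (up to equivalent basis) is a space of polynomials in $|S_0|$ variables, so $\dim V\le \sum_{i\le d}\binom{|S_0|}{i}$, contradicting $\dim V>m$. This coefficient-vanishing mechanism, not a dimension bound on the restricted slices, is the idea your plan is missing, and it is what lets the paper avoid the obstruction entirely.
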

\begin{proof}
Firstly, we note that we may assume that $\P[Z(I)\cap Z(V)]>\eta$,
or the result is trivial (for example, we can take $P=1$, as $1$ cannot be in $I$ or $I$ would have to contain $V$). Secondly, let $V$ have a basis $p_1,p_2,\ldots,p_t$. We note that if we find
another set of degree-at-most-$d$ polynomials $q_1,q_2,\ldots,q_t$ with $p_i \equiv q_i \pmod{I}$
and let $V' =\span(\{q_i\})$, then $Z(I)\cap Z(V) = Z(I)\cap Z(V')$, $I\cap V' = \{0\}$
and $\dim(V')=\dim(V)$, and thus proving our statement for $V$ is equivalent to proving it for $V'$.
We call such $q_i$ an equivalent basis.

Our proof proceeds by proving the following lemma:
\begin{lemma}
Given any ideal $I$ and vector space $V$ and any integer $0\leq k \leq d$
either there exists a set $S_k$ of $O(\eta^{-d^{2(d-k)}})$ many coordinates and an equivalent basis $q_i$ of $V$ so that every non-zero monomial of every $q_i$ contains at most $k$ coordinates not in $S_k$ or there exists a degree less than $d$ polynomial $P\not\in I$ so that $\P[Z(I)\cap Z(V)\backslash Z(P)]\leq \eta$.
\end{lemma}
We note that the $k=0$ case of this lemma immediately implies our result, as either such a $P$ exists, or (up to taking an equivalent basis) $V$ is a space of polynomials in $|S_0|$ variables, and thus has dimension at most $\sum_{i=0}^d \binom{|S_0|}{i}$. If $m$ is larger than this, we are done. It remains 
to prove this lemma.
\begin{proof}
The proof proceeds by backwards induction on $k$ and is reminiscent of the proof of Proposition \ref{prop:linear-form-zero-set-deg2-subspace}. In particular, given an $S_k$, we think of our polynomials as being degree-$d$ polynomials from $\{\pm 1\}^{S_k}$ to degree-$k$ polynomials in the remaining variables. We consider the set of $x$ in $\{\pm 1\}^{S_k}$ with a substantial number of points in $Z(V)\cap Z(I)$. By Corollary \ref{LOCor}, we note that for each such $x_S$, there is a small set of remaining coordinates $T_x$ so that all of the degree $k$ terms in $p_{x_{S_k}}$ have a variable in $T$. By an appropriate dimensionality argument, we show that in fact we can use only a single $T$ for all good $x$. From there, we show that it is either the case that some polynomial not in $I$ vanishes on all good $x$, or that the generators of $V$ can be reduced modulo $I$ to have no terms of degree more than $k-1$ in variables outside of $S_k\cup T$.

We proceed by backwards induction on $k$. In particular, for $k=d$, the result is trivial. Otherwise, assume that it holds for a given value of $k$. By the inductive hypothesis, either there exists a polynomial $P$, or there exists an appropriate set of coordinates $S_k$. By replacing $V$ by an appropriate equivalent basis, we may assume that the $p_i$ are multilinear and have all non-zero terms have at most $k$ coordinates not in $S_k$. Define
$$
G:= \{ x\in \{\pm 1\}^{S_k} : \pr(y\in Z(I)\cap Z(V)|y_{S_k}=x) > \eta\}.
$$
Note that all but an $\eta$-mass of the points of $Z(I)\cap Z(V)$ have $x_{S_k}\in G$.

Next, for $x\in \{\pm 1\}^{S_k}$, define $M(x)$ to be the vector of degree at most $(d-k)$-monomials in $x$. Note that if $M(x_0)$ is a linear combination of some $M(x_i)$, then for any polynomial $p$ of degree at most $d-k$, $p(x)$ will be the same linear combination of the $p(x_i)$. Let $x_1,x_2,\ldots,x_s$ be points in $G$ so that $M(x_1),M(x_2),\ldots,M(x_s)$ spans all of the $M(x)$ for all $x\in G$. Note that we can take $s$ to be at most the dimension of the range of $M$, which is $O(|S_k|^d)$.

Let $q$ be a generic element of $V$. Let $q_i$ be the degree-$k$ polynomial obtained by setting of the $S_k$-coordinates of the input of $q$ to those of $x_i$. We note that $\pr(q_i(y)=0) \geq \eta$. By Corollary \ref{LOCor}, this implies that there is a set $T_i$ of at most $2^{O(d^2)}\eta^{-3}$ coordinates so that every non-zero, degree-$k$ monomial in $q_i$ has at least one coordinate in $T_i$. Since $q_i$ is a generic linear combination of the $p_j$, this must mean that each $p_j$ when restricted to $x_i$ must have all its degree-$k$ terms having a coordinate in $T_i$.

Let $S=S_k\cup \bigcup_{i=1}^s T_i$. We note that $|S|=O(\eta^{-d^{2(d-k+1)}})$. We will attempt to use $S$ for $S_{k-1}$.

Rewrite each $p_\ell(x)$ as $r_\ell(x)+\sum m_{\ell j}(x_{S^c})c_{\ell j}(x_S),$ where all the monomials in $r_\ell$ have at most $k-1$ coordinates not in $S$, each $m_{\ell j}$ is a monomial of degree-$k$ with only coordinates not in $S$, and $c_{\ell j}$ a polynomial with coordinates in $S$. Note that for any $x_i$ that $c_{\ell j}(x_i)=0$. For any $x\in G$ $c_{\ell j}(x)$ is a linear combination of the $c_{\ell j}(x_i)$ and is thus also zero. We now split into two cases:

\medskip

\textbf{Case 1:} $c_{\ell j}\in I$ for all $\ell$ and $j$.

In this case, $p_\ell \equiv r_\ell\pmod{I}$. Thus, taking the $r_\ell$ as the equivalent basis, it contains no monomial with more than $k-1$ coordinates not in $S$, so we are done.

\medskip

\textbf{Case 2:} Some $c_{\ell j}\not\in I$.

We note that $c_{\ell j}$ is a degree less than $d$ polynomial that is not in $I$. Furthermore, by the above, it vanishes on $G$. Thus, it vanishes on all but $\eta 2^n$ points of $Z(I)\cap Z(V)$. Thus, we can take $P=c_{\ell j}$.

This completes the inductive step, and proves our lemma.
\end{proof}
\end{proof}

In order to properly analyze the process of iteratively applying the above proposition, some work needs to be done in order to find the correct inductive statement. The following is the one that works conveniently:
\begin{proposition}
For any integers $d\geq k\geq 0$ there exists a function $h_{d,k}:\R^+ \rightarrow \R^+$ so that
$\lim_{\delta\rightarrow 0} h_{d,k}(\delta)=0$ and so that if $f$ is any degree-$d$ PTF, $g:\bn \to ]bits$ is 
any boolean function, and $I$ any ideal containing all $x_i^2-1$ so that $f(x)\neq g(x)$
only for $x\in Z(I)$ and where $\chow_d (f, g) <\delta$, then there exists a degree at most $k$ polynomial 
$P\not\in I$ so that $P(x)=0$ for all but an $h_{d,k}(\delta)$-mass of the points where $f(x)\neq g(x)$.
\end{proposition}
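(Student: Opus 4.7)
The plan is to proceed by downward induction on $k$, starting from $k=d$ and descending to $k=0$.

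For the base case $k=d$, given $(f, g, I)$ with $f = \sgn(p)$, I would replace $p$ by the polynomial $p' := p - \mathrm{proj}_I(p)$ obtained by projecting $p$ orthogonally away from $I$ (within the inner product $\langle \cdot, \cdot \rangle$). Since every disagreement lies in $Z(I)$, we have $f(x) = \sgn(p(x)) = \sgn(p'(x))$ on the entire disagreement set. Applying Proposition~\ref{prop:degd-poly-disagreement-region} to $\sgn(p')$ and $g$ yields a degree-$d$ polynomial $r$ with $\E[r \cdot p'] \neq 0$ that vanishes outside an $\eta = 1/\poly_d(\log\log(1/\delta))$-mass subset of $D(f,g)$. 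Because $p'$ is orthogonal to every element of $I$, the nonzero correlation forces $r \notin I$. Setting $h_{d,d}(\delta) := \eta$ handles the base case.

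For the inductive step $(k+1) \to k$, fix an auxiliary parameter $\eta > 0$ (to be tuned later) and let $m = m(\eta, k+1)$ be the dimension threshold from Proposition~\ref{lowerDegreePolyProp} applied at degree $k+1$. I would iteratively construct ideals $I = I_0 \subseteq I_1 \subseteq \cdots \subseteq I_m$ and Boolean functions $g_0 = g, g_1, \ldots, g_m$. At step $i$, invoke the inductive hypothesis at level $k+1$ on $(f, g_i, I_i)$ with parameter $\delta_i := \chow_d(f, g_i)$, obtaining a degree-$(k+1)$ polynomial $q_{i+1} \notin I_i$ vanishing outside an $h_{d,k+1}(\delta_i)$-mass subset of $D(f, g_i)$. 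Define $g_{i+1}(x) = f(x)$ when $q_{i+1}(x) \neq 0$ and $g_{i+1}(x) = g_i(x)$ otherwise, and set $I_{i+1} = I_i + (q_{i+1})$. Modifying $g_i$ on a set of probability mass $\mu_i \leq h_{d,k+1}(\delta_i)$ alters the degree-$d$ Chow vector by Euclidean length at most $2\sqrt{\mu_i}$ (by Parseval on the difference, which is bounded pointwise by $2$ and supported on that set), so $\delta_{i+1} \leq \delta_i + 2\sqrt{h_{d,k+1}(\delta_i)}$ and all $\delta_i$ remain within a small multiple of $\delta$ throughout.

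After $m$ iterations, each choice $q_{i+1} \notin I_i$ implies that $q_1, \ldots, q_m$ are linearly independent modulo $I$, so $V := \mathrm{span}(q_1, \ldots, q_m)$ satisfies $\dim V = m$ and $V \cap I = \{0\}$. Proposition~\ref{lowerDegreePolyProp} at degree $k+1$ then produces a degree-at-most-$k$ polynomial $P \notin I$ vanishing on all but an $\eta$-fraction of $Z(I) \cap Z(V)$. Since all but $\sum_{i=0}^{m-1} h_{d,k+1}(\delta_i)$ mass of $D(f,g)$ lies in $Z(I) \cap Z(V)$, I would set $h_{d,k}(\delta) := \eta + m \cdot h_{d,k+1}\bigl(\delta + O(m\sqrt{h_{d,k+1}(\delta)})\bigr)$, choosing $\eta = \eta(\delta) \to 0$ slowly enough (so $m(\eta)$ does not blow up) that $h_{d,k}(\delta) \to 0$ as $\delta \to 0$.

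The main obstacle will be the coupled calibration of $\eta$ against the cumulative Chow-distance drift: $m(\eta)$ grows like $\eta^{-d^{O(d)}}$, so taking $\eta$ too small inflates $m$ and hence the drift $O(m \sqrt{h_{d,k+1}(\delta)})$, potentially breaking the applicability of the inductive hypothesis at later stages, while taking $\eta$ too large leaves a non-negligible residual mass. Showing these can be balanced at every level (which is responsible for the Ackermann-type growth of $1/\chowallow(\eps, d)$) is the technical heart of the argument; the structural steps themselves are essentially dictated by Propositions~\ref{prop:degd-poly-disagreement-region} and~\ref{lowerDegreePolyProp}.
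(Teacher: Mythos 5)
Your proposal is correct and follows the paper's proof essentially step for step: the same orthogonalization $p' = p - \mathrm{proj}_I(p)$ in the base case $k=d$, the same iterative construction of ideals $I_i = I_{i-1} + (q_i)$ and modified Boolean functions $g_i$, the same linear-independence-modulo-$I$ argument to apply Proposition~\ref{lowerDegreePolyProp}, and the same calibration of $\eta$ against the cumulative Chow-distance drift. The only divergence is cosmetic (you index the step as $(k+1)\to k$ rather than $k\to(k-1)$) plus a slightly sharper Parseval bound of $2\sqrt{\mu}$ on $\chow_d(g_i, g_{i+1})$ where the paper uses $O_d(\mu)^{1/3}$ — both suffice since only the $\delta \to 0$ limit matters.
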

Before we prove this proposition, we note why it immediately implies Theorem~\ref{thm:chow-d-struct}.
In particular, taking $k=0$ and $I=(x_1^2-1,\ldots,x_n^2-1)$, this says that if $\chow_d (f, g) < \delta$, there exists a \emph{constant} function $P\neq 0$, so that all but an $h_{d,0}(\delta)$-mass of the disagreements of $f$ and $g$ have $P=0$ (which never happens).
In particular, this means that
$$
\chow_d (f, g)< \delta \Rightarrow \dist(f, g)< h_{d,0}(\delta).
$$

\begin{proof}
We proceed once again by backwards induction on $k$. We start with the base case of $k=d$. This will follow essentially from Proposition~\ref{prop:degd-poly-disagreement-region}. The idea is simple. Let $f(x)=\sgn(p(x))$. Let $I_d$ be the vector space of all degree-at-most-$d$ polynomials in $I$. Let $p'$ be the part of $p$ perpendicular to $I_d$, and let $f'(x) = \sgn(p'(x))$. Notice that since $p' \equiv p \pmod{I}$, we have that $f(x)=f'(x)$ for $x\in Z(I)$. Let
$$
g'(x) = \begin{cases}f'(x) & \textrm{if } x\not\in Z(I) \\ g(x) & \textrm{otherwise} \end{cases}.
$$
Note that $f(x)-g(x) = f'(x)-g'(x)$ and so the discrepancy sets are the same as are the Chow distances. Thus, it suffices to prove our result for $f'$ and $g'$. However, we know from
Proposition~\ref{prop:degd-poly-disagreement-region}
that for $h_{d,d}(\delta) = \eta = 1/\poly_d(\log\log(1/\delta))$, that there exists a polynomial $P$ of degree at most $d$ with $P$ not perpendicular to $p'$ so that all but an $\eta$-mass of the discrepancy of $f'$ and $g'$ lies in $Z(P)$. However, $p'$ is orthogonal to all of $I_d$, and therefore $P$ cannot be in $I$. This completes the proof when $k=d$.

For the inductive step, assume that there is such a function $h_{d,k}$ for some given value of $k$. Note that for any triple $f,g,I$ satisfying our hypotheses, there is a $P$ of degree at most $k$ where all but an $h_{d,k}(\delta)$-mass of the discrepancies lie on $Z(P)$. If we define $g'(x)$ to be $g(x)$ when $P(x)=0$ and $f(x)$ otherwise, note that all of the discrepancies of $f$ and $g'$ lie on $Z(I+(P))$. Furthermore since $g$ and $g'$ differ in hamming weight $h_{d,k}(\delta)$, we have that
$$
\chow_d (f, g') \leq \chow_d (f, g) + \chow_d (g, g') \leq \delta + O_d(h_{d,k}(\delta))^{1/3} := h'_{d,k}(\delta).
$$

We are now able to define $h_{d,k-1}$. In particular, let $m_{d}(\eta)$ be the implied constant from Proposition \ref{lowerDegreePolyProp}. We claim that we can take $h_{d,k-1}$ to be less than $\eta$ whenever $h'_{d,k}$ iterated on $\delta$ $m_{k}(\eta^2)$ times is less than $\eta^2$. In particular, suppose that this holds for $\delta$ and we have that $\chow_d (f, g)<\delta$. We would like to claim that we can find an appropriate $P$ of degree less than $k$ with all but an $\eta$ mass of the discrepancies between $f$ and $g$ lying on $Z(P)$.

Let $g=g_0$. We note that all discrepancies between $f$ and $g$ lie on $Z(I)$. By the above, there exists a degree-$d$ polynomial $P_1\not \in I$ and a $g_1$ with
$\dist(g, g_1)< h_{d,k}(\delta)$, $\chow_d (f, g_1)<\delta_1 = h'_{d,k}(\delta)$ and all discrepancies between $f$ and $g_1$ lying in $Z(I_1)$ where $I_1= I+(P_1)$. Iterating this, we can find
$g_t$ with $\dist(g_{t-1}, g_t)< h_{d,k}(\delta_{t-1})$,
$\chow_d (f, g_1)<\delta_t = h'_{d,k}(\delta_{t-1})$ and all discrepancies between $f$ and $g_t$
lying in $Z(I_t)$,  where $I_1= I+(P_1,P_2,\ldots, P_t)$. Take $t=m_d(\eta)$,
so that $\delta_t < \eta^2$. Note that since $P_i \not\in I+(P_1,\ldots,P_{i-1})$,
it must be the case that $V=\span(P_1,\ldots,P_t)$ has trivial intersection with $I$.
Therefore, applying Proposition \ref{lowerDegreePolyProp} to $I$ and $V$,
we find that there is a degree less than $k$ polynomial $P$ so that all but an $\eta^2$-mass
of the discrepancies between $f$ and $g_t$ (which in turn is all but an $\eta^2$-mass of the discrepancies between $f$ and $g$) lie on $Z(P)$.

This completes our inductive step and the proof.
\end{proof}

\section{Algorithmic Applications of Theorem~\ref{thm:chow-d-struct}} \label{sec:apps}
Our main structural result, Theorem~\ref{thm:chow-d-struct}, 
together with machinery developed in~\cite{TTV:09short, DeDFS14, DiakonikolasKS18a-nasty}
yields the first efficient algorithms for the degree-$d$ Chow parameters problem
and for learning Boolean degree-$d$ PTFs with malicious noise. As a corollary of the former result,
we also obtain the existence of low integer-weight approximations to degree-$d$ PTFs.
In this section, we describe these applications and explain how they are obtained
by combining Theorem~\ref{thm:chow-d-struct} with prior work.

\subsection{Degree-$d$ Chow Parameters Problem and Low Integer-Weight Approximation} \label{ssec:chow-apps}
We start by proving Theorem~\ref{thm:chow-d-alg}. 
To prove our theorem, we need an efficient algorithm that starts with (approximations to) the degree-$d$ 
Chow parameters of our degree-$d$ PTF $f$ and computes the coefficients 
of a degree-$d$ multilinear polynomial that approximately sign represents $f$. 
This can be done by known techniques, as follows from prior work~\cite{TTV:09short, DeDFS14}.

We first need to define the notion of a projection:
\begin{definition} \label{def:projection}
For $a \in \R$, we denote its projection to $[-1,1]$ by $P_1(a)$. 
That is, $P_1(a) = a$ if $|a| \leq 1$ and $P_1(a) = \sign(a)$, otherwise.
\end{definition}

Our algorithm will make essential use of a variant of degree-$d$ PTFs, which
we call degree-$d$ \emph{polynomial bounded functions} (PBFs):

\begin{definition} \label{def:pbf}
A function $g:\{-1,1\}^n \rightarrow [-1,1]$ is referred to as a degree-$d$ {\em polynomial bounded function} (PBF) 
if there exists a degree-$d$ multilinear polynomial $Q: \bn \to \R$ with 
$Q(x)  = \sum_{S\subseteq [n], |S| \leq d} Q_S \chi_S(x)$ such that
$g(x)  = P_1(Q(x))$. The vector of coefficients $(Q_S)_S$ is said to represent $g$.
\end{definition}

The following result shows that there exists an efficient algorithm which, given approximations
to the degree-$d$ Chow parameters of an arbitrary Boolean-valued function, outputs a degree-$d$
PBF with approximately these degree-$d$ Chow parameters:

\begin{theorem}[Degree-$d$ Chow Reconstruction] \label{thm:alg-chowd}
There exists a randomized algorithm {\tt Chow-d-Reconstruct} that for every Boolean function 
$f:  \bn \to \bits$, given $\xi>0,\delta>0$ and a vector $\vec{\alpha} = (\alpha_S)_{S \subseteq [n], |S| \leq d}$ 
such that $\|\mychows^d_f - \vec{\alpha}\|_2 \leq \xi$,
with probability at least $1-\delta$, outputs a degree-$d$ PBF $g$ such that 
$\|\mychows^d_f - \mychows^d_g\|_2 \leq 6 \xi$. 
The algorithm runs in time $\tilde{O}(n^{2d}) \poly(1/\xi) \log{(1/\delta})$.
Further, $g$ is represented by a weight-vector $ \lambda v $, where $\lambda \in \R$ and 
$v$ is an integer vector with $\|v\|_2^2 = O(n^{d}) \poly(1/\xi)$.
\end{theorem}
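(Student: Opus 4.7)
The plan is to establish Theorem~\ref{thm:alg-chowd} by a standard projected gradient descent / boosting argument, following the template of~\cite{TTV:09short, DeDFS14} and extending it to the degree at most $d$ Fourier coefficients. I sketch the main steps.

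First, I would set up a convex surrogate potential. Let $\phi:\R\to\R$ be the convex, $1$-smooth antiderivative of $P_1$, namely $\phi(z)=z^2/2$ for $|z|\leq 1$ and $\phi(z)=|z|-1/2$ otherwise. For a coefficient vector $Q=(Q_S)_{|S|\leq d}$, identified with the multilinear polynomial $Q(x)=\sum_S Q_S \chi_S(x)$, define
\[
\Phi(Q)\eqdef \E_x[\phi(Q(x))]-\langle \vec{\alpha},Q\rangle.
\]
Then $\Phi$ is convex in $Q$, $1$-smooth with respect to the Fourier $\ell_2$-norm (by orthonormality of the $\chi_S$ and the bound $\phi''\leq 1$), and its gradient is $\nabla_S\Phi(Q)=\widehat{g}(S)-\alpha_S$, where $g=P_1(Q)$ is the PBF associated to $Q$.

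Next I would run inexact projected gradient descent on $\Phi$, initialized at $Q_0\equiv 0$. At step $t$, draw $M=\tilde{O}(n^d)\poly(1/\xi)\log(1/\delta)$ independent uniform samples from $\bn$ to form empirical estimates $\tilde{\chi}_{g_t}(S)$ of the low-degree Chow parameters of $g_t=P_1(Q_t)$, accurate to additive $\xi/\sqrt{n^d}$ in each coordinate with high probability, and update $Q_{t+1,S}=Q_{t,S}-\lambda\bigl(\tilde{\chi}_{g_t}(S)-\alpha_S\bigr)$ with a fixed step size $\lambda=\Theta(1)$. Halt as soon as the empirical discrepancy vector has $\ell_2$-norm at most $5\xi$; at that point, the triangle inequality together with the sampling accuracy and the hypothesis $\|\mychows^d_f-\vec{\alpha}\|_2\leq \xi$ yields $\|\mychows^d_f-\mychows^d_{g_t}\|_2\leq 6\xi$, as required.

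The convergence follows from the standard $1$-smooth descent inequality: comparing to the competitor $Q^*=\mychows^d_f$, for which $\Phi(Q^*)=O(1)$, a regret-type argument gives $\sum_{t<T}\|\nabla\Phi(Q_t)\|_2^2=O(1)$, so as long as every iterate has $\|\nabla\Phi(Q_t)\|_2>5\xi$ we must have $T=O(1/\xi^2)$. Each iteration costs $O(M n^d)$ time to draw samples, evaluate $Q_t$, and form the empirical Chow vector, for a total runtime of $\tilde{O}(n^{2d})\poly(1/\xi)\log(1/\delta)$. For the integer-weight claim, I would round each empirical gradient coordinate to the nearest multiple of a fixed $\lambda_0=\Theta(\xi/\sqrt{n^d})$ before applying it, absorbing the rounding error into the sampling-noise budget; then $Q_T=\lambda_0\, v$ for some $v\in\Z^{\binom{n}{\leq d}}$, and since each coordinate of $v$ changes by $O(1)$ per step over $T=O(1/\xi^2)$ steps across at most $O(n^d)$ coordinates, $\|v\|_2^2=O(n^d)\poly(1/\xi)$.

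The main obstacle in making this rigorous is the simultaneous error budgeting for sampling noise, rounding for the integer-weight guarantee, and the gradient-descent step size: all three must be calibrated so that (i) $\Phi$ decreases by $\Omega(\xi^2)$ per step whenever the true gradient has norm at least $5\xi$, (ii) the final $g_T$ satisfies the target Chow-distance bound $6\xi$, and (iii) $Q_T$ is a scaled integer vector of the claimed squared norm. This bookkeeping parallels that of~\cite{TTV:09short, DeDFS14}, and no new structural idea beyond tracking the $n^d$ many low-degree Fourier coefficients in place of $n$ many is required.
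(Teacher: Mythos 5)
Your convex-surrogate framing (gradient descent on $\Phi(Q)=\E[\phi(Q)]-\langle\vec\alpha,Q\rangle$ with $\phi$ the Huber-type antiderivative of $P_1$) is a clean dual reformulation of the TTV/DeDFS boosting argument the paper cites, and the resulting update rule --- subtract an estimate of $\mychows^d_{g_t}-\vec\alpha$ --- is exactly theirs. The setup (convexity, $1$-smoothness, $\nabla_S\Phi(Q)=\hat g(S)-\alpha_S$) is correct, as is the termination logic and the $\|v\|_2^2=O(n^d)\poly(1/\xi)$ bookkeeping.

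The one step that is not right as stated is the convergence claim ``comparing to the competitor $Q^*=\mychows^d_f$, for which $\Phi(Q^*)=O(1)$, a regret-type argument gives $\sum_{t<T}\|\nabla\Phi(Q_t)\|_2^2=O(1)$.'' The issue is that $\Phi$ is \emph{not} globally bounded below whenever $\vec\alpha\neq\mychows^d_f$: pointwise, $\phi(z)-cz\to-\infty$ when $|c|>1$, and one can have, e.g., $\vec\alpha=(1+\xi/2)e_{[d]}$ within $\xi$ of the Chow vector of the degree-$d$ parity PTF, so that $\Phi(t\chi_{[d]})\to-\infty$. With no minimizer and no floor on $\Phi$, a regret bound against $Q^*$ does not, on its own, control $\sum_t\|\nabla\Phi(Q_t)\|_2^2$. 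The repair is to split off the error: write $\vec\alpha=\mychows^d_f+\vec e$ with $\|\vec e\|_2\leq\xi$, so $\Phi(Q)=\Phi_f(Q)-\langle\vec e,Q\rangle$ where $\Phi_f(Q)=\E_x[\phi(Q(x))-f(x)Q(x)]\geq-\tfrac12$ pointwise (because $|f(x)|=1$ gives $\phi(z)\mp z\geq-\tfrac12$). Hence $\Phi(Q_T)\geq-\tfrac12-\xi\|Q_T\|_2$, and since $\|Q_T\|_2\leq\sum_{t<T}\|\nabla\Phi(Q_t)\|_2\leq\sqrt{T\sum_t\|\nabla\Phi(Q_t)\|_2^2}$, combining this lower bound with the $1$-smooth descent inequality $\Phi(Q_T)\leq-\tfrac12\sum_t\|\nabla\Phi(Q_t)\|_2^2$ and the running condition $\|\nabla\Phi(Q_t)\|_2>5\xi$ yields $\sum_t\|\nabla\Phi(Q_t)\|_2^2\leq O(1)$ and $T=O(1/\xi^2)$. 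In other words, the lower bound on $\Phi$ comes from the Boolean target $f$ entering via $\Phi_f$, not from the competitor $Q^*$; with that fix, the sketch matches what the citations to~\cite{TTV:09short, DeDFS14} provide. (Minor aside: the paper's algorithm rounds to multiples of $\xi/2$ and gets $\sum_S|H_S|=O(1/\xi^2)$, which is stronger than the stated $\|v\|_2^2=O(n^d)\poly(1/\xi)$; your $\lambda_0=\Theta(\xi/\sqrt{n^d})$ rounding is coarser but still meets the theorem's bound.)
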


We remark that the condition on the weight vector $v$ given by Theorem \ref{thm:alg-chowd} 
is the key for the proof of Theorem \ref{thm:lowwt}. We note that Theorem \ref{thm:alg-chowd} follows directly
either from a more general in ~\cite{TTV:09short} (specifically, their Theorem~3.1; also see Theorem~16 of~\cite{DeDS17}),
or from a straightforward generalization of the $d=1$ algorithm for the same problem in~\cite{DeDFS14}.

\smallskip

The algorithm establishing Theorem~\ref{thm:alg-chowd} is a simple iterative algorithm running 
for $O(1/\xi^2)$ iterations, where in each iteration 
it estimates the degree-$d$ Chow parameters of the current hypothesis $h'$
within $\ell_2$ error $O(\xi)$. Here, each $h'$ is a degree-$d$ PBF, i.e., 
function of the form $h'(x)=P_1({\frac \xi 2} \cdot \sum_{S\subseteq [n], |S| \leq d} H_S \chi_S(x))$,
where the coefficients $H_S$ are integers whose absolute values sum to $O(1/\xi^2)$.

\begin{proof}[Proof of Theorem~\ref{thm:chow-d-alg}]
Given a vector $\vec{\alpha}$ such that $\Delta : = \| \vec{\alpha} - \mychows^d_f \|_2 \leq \chowallow(\eps, d)$, where
$f$ is the unknown degree-$d$ PTF to be learned, we proceed as follows: 
To construct the desired hypothesis $f^{\ast}$, we run algorithm {\tt Chow-dReconstruct}  (from Theorem~\ref{thm:alg-chowd}) on input
$\vec{\alpha}$. The algorithm runs in time $\poly(1/\Delta) \cdot \tilde{O}(n^{2d}) \cdot \log(1/\delta)$ and
outputs a degree-$d$ PBF $g$ such that with probability at least $1-\delta$ we have
$\chow_d(f,g) \leq 6 \Delta \leq 6 \chowallow(\eps, d)$.
By Theorem~\ref{thm:chow-d-struct}, we get that with probability at least $1-\delta$  
we have $\dist(f,g) \leq \eps/2$.  
(By setting the constants appropriately in the definition of $ \chowallow(\eps, d)$ above,
we can guarantee that the conclusion of Theorem~\ref{thm:chow-d-struct} is $\dist(f,g) \leq \eps/2$.) 
Writing the degree-$d$ PBF $g$ as $g(x) = P_1(Q(x))$, we now claim that
$f^\ast(x) = \sign(Q(x)$ has $\dist(f,f^\ast) \leq \eps.$  This holds because, for each input $x \in \bn$, 
the contribution that $x$ makes to to $\dist(f,f^\ast)$ is at most twice the contribution $x$ makes to $\dist(f,g)$. 
This completes the proof of Theorem~\ref{thm:chow-d-alg}.
\end{proof}

\noindent As a simple corollary, we obtain Theorem~\ref{thm:lowwt}.

\begin{proof}[Proof of Theorem~\ref{thm:lowwt}]
Let $f:\bn \to \bits$ be an arbitrary degree-$d$ PTF. 
We apply Theorem~\ref{thm:chow-d-alg}, for $\delta = 1/3$, and consider the degree-$d$ PTF, $f^{\ast}$, 
output by the algorithm. Note that the weights $v_i$ defining $f^{\ast}$ are identical to the weights of the 
PBF $g$ output by  the algorithm {\tt Chow-d-Reconstruct}.
It follows from (the proof of) Theorem~\ref{thm:chow-d-alg} that these weights are integers that satisfy 
$\littlesum_{S} v^2_S  = O\left(n^d \cdot \poly(1/\Delta)\right)$, where
$\Delta = \Omega ( \chowallow(\eps, d))$, and the proof is complete.
\end{proof}

\paragraph{Learning PTFs in the RFA Model.}
Ben-David and Dichterman~\cite{BenDavidDichterman:98} 
introduced the ``Restricted Focus of Attention'' (RFA) learning framework
to model the phenomenon of a learner having incomplete access to examples.
We focus here on the uniform-distribution ``$d$-RFA'' model.   In this setting
each time the learner is to receive a labeled
example, it first specifies a set $J \subseteq [n]$ of at most $d$ indices; 
then an $n$-bit string $x$ is drawn from the uniform distribution over $\bn$ and the learner is
given $(x_J, f(x))$.  So for each labeled example, the learner is only shown the bits of the
example indexed by $J$ along with the label. 

Note that learning in the $d$-RFA model is closely related to the degree-$d$ Chow parameters
problem. Indeed, a learning algorithm in the $d$-RFA model can only use the examples 
to estimate the degree-$d$ Chow parameters of the unknown target concept $f$ to any desired accuracy.  
This connection was established in Birkendorf et al.~\cite{BDJ+:98} who asked
the $d=1$ version of this question, i.e., whether LTFs can be learned in the uniform
distribution $1$-RFA model. 
For $d=1$, the structural results of Goldberg~\cite{Goldberg:06b} and Servedio~\cite{Servedio:07cc} established
information-theoretic upper bounds on the sample complexity of the problem.
The algorithmic results of~\cite{OS11:chow, DeDFS14} giving algorithms for the degree-$1$ Chow
parameters problem immediately imply efficient algorithms for learning LTFs in the uniform distribution 
$1$-RFA model.

As a direct consequence of Theorem~\ref{thm:chow-d-alg}, we obtain the first
efficient learning algorithm for learning degree-$d$ PTFs in the uniform distribution $d$-RFA model:

\begin{theorem} \label{thm:learnrfa} 
There is an algorithm which performs $\tilde{O}(n^{2d}) \cdot \poly(1/\chowallow(\eps, d)) \cdot \log({\frac 1 \delta})$ 
bit-operations and properly learns degree-$d$ PTFs  
to accuracy $\eps$ and confidence $1 - \delta$ in the uniform distribution $d$-RFA model.
\end{theorem}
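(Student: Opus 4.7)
The plan is to reduce the RFA learning task to the degree-$d$ Chow parameters problem and then invoke Theorem~\ref{thm:chow-d-alg}. The key observation is that in the uniform-distribution $d$-RFA model, for any fixed $S\subseteq[n]$ with $|S|\leq d$, the learner can request the attention set $J=S$ and receive pairs $(x_S,f(x))$ with $x$ drawn uniformly from $\bn$. Each such sample yields an unbiased estimator $\chi_S(x_S)\cdot f(x)$ for the Chow parameter $\wh{f}(S)=\E_x[f(x)\chi_S(x)]$, bounded in $[-1,1]$. Hence the $d$-RFA oracle gives exactly the access one needs to approximate the full degree-$d$ Chow vector $\mychows^d_f$.

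First I would fix a target $\ell_2$-accuracy $\xi=\chowallow(\eps,d)$ (up to a constant chosen so that Theorem~\ref{thm:chow-d-alg} guarantees the produced hypothesis is $\eps$-close to $f$). There are $N=\sum_{k\leq d}\binom{n}{k}=O(n^d)$ sets $S$, so to drive $\|\vec\alpha-\mychows^d_f\|_2\leq\xi$ with confidence $1-\delta/2$ it suffices to estimate each coordinate to additive error $\xi/\sqrt{N}$. By a Hoeffding bound, each coordinate can be estimated to this accuracy with confidence $1-\delta/(2N)$ using $O((N/\xi^2)\log(N/\delta))=\tilde{O}(n^d/\xi^2)\log(1/\delta)$ RFA queries with attention set $S$. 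A union bound over the $N$ coordinates gives the desired $\vec\alpha$ overall.

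Next I would feed $\vec\alpha$ into the reconstruction algorithm of Theorem~\ref{thm:chow-d-alg} with parameters $\eps,\delta/2$. That theorem returns, with probability $\geq 1-\delta/2$, a vector $(H_S)_{S\subseteq[n],|S|\leq d}$ representing a degree-$d$ PTF $h(x)=\sign(\sum_S H_S\chi_S(x))$ with $\Pr_x[h(x)\neq f(x)]\leq\eps$, and runs in time $\tilde{O}(n^{2d})\cdot\poly(1/\chowallow(\eps,d))\cdot\log(1/\delta)$. Combining the two stages via a union bound yields a proper learner succeeding with confidence $1-\delta$, and the total running time is dominated by the reconstruction step, matching the claimed $\tilde{O}(n^{2d})\cdot\poly(1/\chowallow(\eps,d))\cdot\log(1/\delta)$ bound.

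No step here is a genuine obstacle; the entire argument is a black-box composition once one notes that the $d$-RFA oracle is exactly the right abstraction for estimating the degree-$d$ Chow parameters. The only minor care needed is in the bookkeeping: setting the accuracy for each individual Chow parameter to $\xi/\sqrt{N}$ (rather than $\xi$) to control the aggregate $\ell_2$ error, and absorbing the $N=O(n^d)$ factor from the union bound into the $\tilde{O}(n^{2d})$ term already present in the runtime of Theorem~\ref{thm:chow-d-alg}.
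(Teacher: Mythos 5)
Your proposal is correct and follows exactly the approach the paper takes: the $d$-RFA oracle lets you estimate each degree-$d$ Chow parameter via $\chi_S(x_S)\cdot f(x)$, and feeding the resulting approximate Chow vector into the reconstruction algorithm of Theorem~\ref{thm:chow-d-alg} completes the argument. The paper states this result as a ``direct consequence'' without spelling out the Hoeffding/union-bound bookkeeping, but your accounting of the per-coordinate accuracy $\xi/\sqrt{N}$ and the $\tilde{O}(n^{2d}/\xi^2)\log(1/\delta)$ total sample cost is exactly the intended calculation.
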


Prior to our work, even for $d=2$, no sub-exponential in $n$ upper bound was known for this problem,
even with respect to sample complexity only.

\subsection{Malicious Learning of Boolean Degree-$d$ PTFs} \label{ssec:app-malicious}
In this section, we sketch how Theorem~\ref{thm:malicious} is obtained using 
Theorem~\ref{thm:chow-d-struct}. The results of this section follow directly 
from the recent work~\cite{DiakonikolasKS18a-nasty}. 
We provide a brief description here for the sake of completeness.

We focus on the efficient learnability of Boolean low-degree PTFs in the presence of {\em nasty noise},
a model of corruptions that strengthens malicious noise and has recently received
renewed attention motivated by robust high-dimensional statistics~\cite{DKKLMS16}.
In the {\em nasty noise model}~\cite{BEK:02}, an omniscient adversary can
arbitrarily corrupt a small constant fraction of both the unlabeled data points and their labels.
Formally, we have the following definition:

\begin{definition}[Nasty Noise Learning Model] \label{def:nasty-learning}
Let $\mathcal{C}$ be a class of Boolean-valued functions over $\R^n$, $D$ a distribution
over $\R^n$, and $f$ an unknown target concept $f \in \mathcal{C}$. For $0< \eps < 1/2$,
we say that a set $T$ of $m$ labeled examples is an {\em $\eps$-corrupted}
set of examples from $\mathcal{C}$
if it is obtained using the following procedure: First, we draw a set $S = \{ (x^{(i)}, y_i)\}$
of $m$ labeled examples, $1 \leq i \leq m$, where for each $i$
we have that $x^{(i)} \sim D$, $y_i = f(x^{(i)})$, and the $x^{(i)}$'s are independent.
Then an omniscient adversary, upon inspecting the set $S$,
is allowed to remove an  $\eps$-fraction of the examples
and replace these examples by the same number of arbitrary examples of its choice.
The modified set of labeled examples is the $\eps$-corrupted set $T$.
A learning algorithm in the nasty noise model is given as input an $\eps$-corrupted set of examples
from $\mathcal{C}$ and its goal is to output a hypothesis $h$ such that with high probability
the error $\Pr_{x \sim D} [h(x) \neq f(x)]$ is small.
\end{definition}

It should be noted that the nasty model generalizes a number of well-studied noise models,
including the malicious noise model~\cite{Valiant:85, KearnsLi:93}
and the agnostic (adversarial label noise) model~\cite{Haussler:92, KSS:94}.

A first qualitative goal is to design polynomial-time learning algorithms 
that can tolerate nasty noise of {\em constant} rate, i.e.,
we want to achieve error guarantees that are {\em independent of the dimension}.
In recent work,~\cite{DiakonikolasKS18a-nasty} obtained such an algorithm for degree-$d$ PTFs
over the Gaussian distribution (and under well-behaved continuous distributions). 

The robust learning algorithm of~\cite{DiakonikolasKS18a-nasty} for degree-$d$ PTFs 
works in two steps: (1) Start by robustly approximating the degree-$d$ Chow parameters
of our function, and (2) Use the approximate degree-$d$ Chow parameters from Step (1) 
to find a proper hypothesis that is close to the target concept.

Step (2) uses known algorithmic techniques~\cite{TTV:09short, DeDFS14} 
(in particular, Theorem~\ref{thm:alg-chowd}) to efficiently find an accurate 
proper hypothesis with approximately these degree-$d$ Chow parameters.
The correctness of Step (2) leverages the fact that {\em approximations}
to the degree-$d$ Chow parameters information-theoretically approximately determine our function.
While such a structural result is easy to show for continuous well-behaved distributions,
it was very challenging to prove for the uniform distribution on the hypercube. 
Our Theorem~\ref{thm:chow-d-alg} provides the required structural result for degree-$d$ PTFs
over $\bn$, hence allowing the~\cite{DiakonikolasKS18a-nasty} algorithmic approach to go through.
This gives Theorem~\ref{thm:malicious}.

\section{Conclusions and Open Problems} \label{sec:conc}
In this paper, we showed that the degree-$d$ Chow parameters of a degree-$d$ PTF $f$
robustly determine $f$ with respect to $\ell_1$-distance. As a corollary, we obtained a number
of algorithmic and structural applications.

The main qualitative message of our result is that the relation between degree-$d$ Chow
distance and $\ell_1$-distance is independent of the dimension $n$. 
On the other hand, our quantitative bounds can almost certainly be improved. 
The obvious open problem is to prove a nearly tight relation between
the two metrics. More specifically, what is the best possible function $\chowallow(\eps, d)$ 
in the statement of Theorem~\ref{thm:chow-d-struct}? It is known that for $d=1$ the answer
is between $\eps^{\Omega(\log \log (1/\eps))}$ and $\eps^{O(\log^2(1/\eps))}$. For general $d$, standard 
bounds on approximating arbitrary degree-$d$ PTFs by degree-$d$ PTFs with small integer weights imply 
that $\chowallow(\eps, d)$ cannot be larger than $\eps^{\Omega_d(\log(1/\eps)^{d-1})}$. We believe
that obtaining such improved bounds requires new ideas.

\bibliographystyle{alpha}
\bibliography{allrefs}

\newcommand{\etalchar}[1]{$^{#1}$}
\begin{thebibliography}{FGRW12}

\bibitem[ABL17]{ABL17}
P.~Awasthi, M.~F. Balcan, and P.~M. Long.
\newblock The power of localization for efficiently learning linear separators
  with noise.
\newblock {\em J. {ACM}}, 63(6):50:1--50:27, 2017.

\bibitem[APL07]{APL:07}
H.~Aziz, M.~Paterson, and D.~Leech.
\newblock Efficient algorithm for designing weighted voting games.
\newblock In {\em IEEE Intl. Multitopic Conf.}, pages 1--6, 2007.

\bibitem[Ban65]{Banzhaf:65}
J.~Banzhaf.
\newblock Weighted voting doesn't work: A mathematical analysis.
\newblock {\em Rutgers Law Review}, 19:317--343, 1965.

\bibitem[Bau73]{Baugh:73}
C.~R. Baugh.
\newblock Chow parameters in pseudothreshold logic.
\newblock In {\em SWAT (FOCS)}, pages 49--55, 1973.

\bibitem[BDD98]{BenDavidDichterman:98}
S.~Ben-David and E.~Dichterman.
\newblock Learning with restricted focus of attention.
\newblock {\em Journal of Computer and System Sciences}, 56(3):277--298, 1998.

\bibitem[BDJ{\etalchar{+}}98]{BDJ+:98}
A.~Birkendorf, E.~Dichterman, J.~Jackson, N.~Klasner, and H.U. Simon.
\newblock {On restricted-focus-of-attention learnability of Boolean functions}.
\newblock {\em Machine Learning}, 30:89--123, 1998.

\bibitem[BEK02]{BEK:02}
N.~Bshouty, N.~Eiron, and E.~Kushilevitz.
\newblock {PAC Learning with Nasty Noise}.
\newblock {\em Theoretical Computer Science}, 288(2):255--275, 2002.

\bibitem[BGS18]{BhattacharyyaGS18}
A.~Bhattacharyya, S.~Ghoshal, and R.~Saket.
\newblock Hardness of learning noisy halfspaces using polynomial thresholds.
\newblock In {\em Conference On Learning Theory, {COLT} 2018}, pages 876--917,
  2018.

\bibitem[Bru90]{Bruck:90}
J.~Bruck.
\newblock Harmonic analysis of polynomial threshold functions.
\newblock {\em SIAM Journal on Discrete Mathematics}, 3(2):168--177, 1990.

\bibitem[Car04]{Carreras:04}
F.~Carreras.
\newblock On the design of voting games.
\newblock {\em Mathematical Methods of Operations Research}, 59(3):503--515,
  2004.

\bibitem[CHIS10]{CHIS:10}
M.~Cheraghchi, J.~H{\aa}stad, M.~Isaksson, and O.~Svensson.
\newblock {Approximating Linear Threshold Predicates}.
\newblock In {\em 13th International Workshop on Approximation Algorithms for
  Combinatorial Optimization Problems --- APPROX 2010}, pages 110--123, 2010.

\bibitem[Cho61]{Chow:61}
C.K. Chow.
\newblock {On the characterization of threshold functions}.
\newblock In {\em Proceedings of the Symposium on Switching Circuit Theory and
  Logical Design (FOCS)}, pages 34--38, 1961.

\bibitem[CLO07]{Cox:2007}
D.~A. Cox, J.~Little, and D.~O'Shea.
\newblock {\em Ideals, Varieties, and Algorithms: An Introduction to
  Computational Algebraic Geometry and Commutative Algebra, 3/e (Undergraduate
  Texts in Mathematics)}.
\newblock Springer-Verlag, Berlin, Heidelberg, 2007.

\bibitem[CW01]{CW:01}
A.~Carbery and J.~Wright.
\newblock {Distributional and $L^q$ norm inequalities for polynomials over
  convex bodies in $R^n$}.
\newblock {\em Mathematical Research Letters}, 8(3):233--248, 2001.

\bibitem[Dan15]{Daniely15}
A.~Daniely.
\newblock A {PTAS} for agnostically learning halfspaces.
\newblock In {\em Proceedings of The 28th Conference on Learning Theory, {COLT}
  2015}, pages 484--502, 2015.

\bibitem[Dan16]{Daniely16}
A.~Daniely.
\newblock Complexity theoretic limitations on learning halfspaces.
\newblock In {\em Proceedings of the 48th Annual Symposium on Theory of
  Computing, {STOC} 2016}, pages 105--117, 2016.

\bibitem[DDFS14]{DeDFS14}
A.~De, I.~Diakonikolas, V.~Feldman, and R.~A. Servedio.
\newblock Nearly optimal solutions for the chow parameters problem and
  low-weight approximation of halfspaces.
\newblock {\em J. {ACM}}, 61(2):11:1--11:36, 2014.

\bibitem[DDS14]{DeDS14-ccc}
A.~De, I.~Diakonikolas, and R.~A. Servedio.
\newblock Deterministic approximate counting for juntas of degree-2 polynomial
  threshold functions.
\newblock In {\em {IEEE} 29th Conference on Computational Complexity, {CCC}
  2014}, pages 229--240, 2014.

\bibitem[DDS17]{DeDS17}
A.~De, I.~Diakonikolas, and R.~A. Servedio.
\newblock The inverse shapley value problem.
\newblock {\em Games and Economic Behavior}, 105:122--147, 2017.

\bibitem[Der65]{Dertouzos:65}
M.~Dertouzos.
\newblock {\em {Threshold Logic: A Synthesis Approach}}.
\newblock MIT Press, Cambridge, MA, 1965.

\bibitem[DGJ{\etalchar{+}}10]{DGJ+:10}
I.~Diakonikolas, P.~Gopalan, R.~Jaiswal, R.~Servedio, and E.~Viola.
\newblock Bounded independence fools halfspaces.
\newblock {\em SIAM J. on Comput.}, 39(8):3441--3462, 2010.

\bibitem[DHK{\etalchar{+}}10]{DHK+:10}
I.~Diakonikolas, P.~Harsha, A.~Klivans, R.~Meka, P.~Raghavendra, R.~A.
  Servedio, and L.~Y. Tan.
\newblock Bounding the average sensitivity and noise sensitivity of polynomial
  threshold functions.
\newblock In {\em STOC}, pages 533--542, 2010.

\bibitem[DKK{\etalchar{+}}16]{DKKLMS16}
I.~Diakonikolas, G.~Kamath, D.~M. Kane, J.~Li, A.~Moitra, and A.~Stewart.
\newblock Robust estimators in high dimensions without the computational
  intractability.
\newblock In {\em Proceedings of FOCS'16}, pages 655--664, 2016.

\bibitem[dKKZ10]{dK:10}
B.~de~Keijzer, T.~Klos, and Y.~Zhang.
\newblock Enumeration and exact design of weighted voting games.
\newblock In {\em Proceedings of the 9th International Conference on Autonomous
  Agents and Multiagent Systems : volume 1 - Volume 1}, AAMAS '10, pages
  391--398, 2010.

\bibitem[DKN10]{DKNfocs10}
I.~Diakonikolas, D.~M. Kane, and J.~Nelson.
\newblock Bounded independence fools degree-2 threshold functions.
\newblock In {\em FOCS}, pages 11--20, 2010.

\bibitem[DKS18]{DiakonikolasKS18a-nasty}
I.~Diakonikolas, D.~M. Kane, and A.~Stewart.
\newblock Learning geometric concepts with nasty noise.
\newblock In {\em Proceedings of the 50th Annual {ACM} {SIGACT} Symposium on
  Theory of Computing, {STOC} 2018}, pages 1061--1073, 2018.

\bibitem[DOSW11]{DiakonikolasOSW11}
I.~Diakonikolas, R.~O'Donnell, R.~A. Servedio, and Y.~Wu.
\newblock Hardness results for agnostically learning low-degree polynomial
  threshold functions.
\newblock In {\em Proceedings of the Twenty-Second Annual ACM-SIAM Symposium on
  Discrete Algorithms, SODA 2011}, pages 1590--1606, 2011.

\bibitem[DRST14]{DRST14}
I.~Diakonikolas, P.~Raghavendra, R.~A. Servedio, and L.~Y. Tan.
\newblock Average sensitivity and noise sensitivity of polynomial threshold
  functions.
\newblock {\em {SIAM} J. Comput.}, 43(1):231--253, 2014.

\bibitem[DS79]{DubeyShapley:79}
P.~Dubey and L.S. Shapley.
\newblock {Mathematical properties of the Banzhaf power index}.
\newblock {\em Mathematics of Operations Research}, 4:99--131, 1979.

\bibitem[DS13]{DiakonikolasS13}
I.~Diakonikolas and R.~A. Servedio.
\newblock Improved approximation of linear threshold functions.
\newblock {\em Computational Complexity}, 22(3):623--677, 2013.

\bibitem[DS14]{DeS14}
A.~De and R.~A. Servedio.
\newblock Efficient deterministic approximate counting for low-degree
  polynomial threshold functions.
\newblock In {\em Symposium on Theory of Computing, {STOC} 2014}, pages
  832--841, 2014.

\bibitem[DSTW14]{DSTW14}
I.~Diakonikolas, R.~A. Servedio, L.~Y. Tan, and A.~Wan.
\newblock A regularity lemma and low-weight approximators for low-degree
  polynomial threshold functions.
\newblock {\em Theory of Computing}, 10:27--53, 2014.

\bibitem[EL89]{EinyLehrer:89}
E.~Einy and E.~Lehrer.
\newblock Regular simple games.
\newblock {\em International Journal of Game Theory}, 18:195--207, 1989.

\bibitem[Elg60]{Elgot:60}
C.C. Elgot.
\newblock {Truth functions realizable by single threshold organs}.
\newblock In {\em Proceedings of the Symposium on Switching Circuit Theory and
  Logical Design (FOCS)}, pages 225--245, 1960.

\bibitem[Erd45]{Erd:45}
P.~Erd{\H{o}}s.
\newblock {On a lemma of Littlewood and offord}.
\newblock {\em Bull. Amer. Math. Soc.}, 51:898--902, 1945.

\bibitem[FGRW12]{FeldmanGRW12}
V.~Feldman, V.~Guruswami, P.~Raghavendra, and Y.~Wu.
\newblock Agnostic learning of monomials by halfspaces is hard.
\newblock {\em {SIAM} J. Comput.}, 41(6):1558--1590, 2012.

\bibitem[FM04]{FM:04}
D.~Felsenthal and M.~Machover.
\newblock A priori voting power: what is it all about?
\newblock {\em Political Studies Review}, 2(1):1--23, 2004.

\bibitem[Fre97]{Freixas:97}
J.~Freixas.
\newblock Different ways to represent weighted majority games.
\newblock {\em Top (Journal of the Spanish Society of Statistics and Operations
  Research)}, 5(2):201--212, 1997.

\bibitem[GKM15]{GopalanKM15}
P.~Gopalan, D.~M. Kane, and R.~Meka.
\newblock Pseudorandomness via the discrete fourier transform.
\newblock In {\em {IEEE} 56th Annual Symposium on Foundations of Computer
  Science, {FOCS} 2015}, pages 903--922, 2015.

\bibitem[Gol06]{Goldberg:06b}
P.~Goldberg.
\newblock {A Bound on the Precision Required to Estimate a Boolean Perceptron
  from its Average Satisfying Assignment}.
\newblock {\em SIAM Journal on Discrete Mathematics}, 20:328--343, 2006.

\bibitem[H{\aa}s94]{Hastad:94}
J.~H{\aa}stad.
\newblock On the size of weights for threshold gates.
\newblock {\em SIAM Journal on Discrete Mathematics}, 7(3):484--492, 1994.

\bibitem[Hau92]{Haussler:92}
D.~Haussler.
\newblock {Decision theoretic generalizations of the PAC model for neural net
  and other learning applications}.
\newblock {\em Information and Computation}, 100:78--150, 1992.

\bibitem[Hil90]{Hilbert1890}
D.~Hilbert.
\newblock Ueber die theorie der algebraischen formen.
\newblock {\em Mathematische Annalen}, 36(4):473--534, Dec 1890.

\bibitem[HKM14]{HKM14}
P.~Harsha, A.~R. Klivans, and R.~Meka.
\newblock Bounding the sensitivity of polynomial threshold functions.
\newblock {\em Theory of Computing}, 10:1--26, 2014.

\bibitem[Hur73]{Hurst:73}
S.L. Hurst.
\newblock {The application of Chow Parameters and Rademacher-Walsh matrices in
  the synthesis of binary functions}.
\newblock {\em {The Computer Journal}}, 16:165--173, 1973.

\bibitem[Kan14]{Kane14GL}
D.~M. Kane.
\newblock The correct exponent for the gotsman-linial conjecture.
\newblock {\em Computational Complexity}, 23(2):151--175, 2014.

\bibitem[Kan17]{Kane17DifDecomp}
D.~M. Kane.
\newblock A structure theorem for poorly anticoncentrated polynomials of
  gaussians and applications to the study of polynomial threshold functions.
\newblock {\em Ann. Probab.}, 45(3):1612--1679, 05 2017.
\newblock Preliminary version in FOCS'12.

\bibitem[Kas63]{Kaszerman:63}
P.~Kaszerman.
\newblock A geometric test-synthesis procedure for a threshold device.
\newblock {\em Information and Control}, 6(4):381--398, 1963.

\bibitem[KL93]{KearnsLi:93}
M.~Kearns and M.~Li.
\newblock Learning in the presence of malicious errors.
\newblock {\em SIAM Journal on Computing}, 22(4):807--837, 1993.

\bibitem[KN12]{KN:12}
S.~Kurz and S.~Napel.
\newblock {Heuristic and exact solutions to the inverse power index problem for
  small voting bodies}.
\newblock Available as arxiv report http://arxiv.org/abs/1202.6245, 2012.

\bibitem[KSS94]{KSS:94}
M.~Kearns, R.~Schapire, and L.~Sellie.
\newblock {Toward Efficient Agnostic Learning}.
\newblock {\em Machine Learning}, 17(2/3):115--141, 1994.

\bibitem[Kur12]{K:12}
S.~Kurz.
\newblock On the inverse power index problem.
\newblock {\em Optimization}, 61(8):989--1011, 2012.

\bibitem[KW65]{KaplanWinder:65}
K.R. Kaplan and R.O. Winder.
\newblock {Chebyshev approximation and threshold functions}.
\newblock {\em IEEE Trans. Electronic Computers}, EC-14:315--325, 1965.

\bibitem[KW16]{KaneW16}
D.~M. Kane and R.~Williams.
\newblock Super-linear gate and super-quadratic wire lower bounds for depth-two
  and depth-three threshold circuits.
\newblock In {\em Proceedings of the Forty-eighth Annual ACM Symposium on
  Theory of Computing}, STOC '16, pages 633--643, 2016.

\bibitem[Lap72]{Lapidot:72}
E.~Lapidot.
\newblock The counting vector of a simple game.
\newblock {\em Proceedings of the AMS}, 31:228--231, 1972.

\bibitem[Lee02a]{L:02a}
D.~Leech.
\newblock Designing the voting system for the eu council of ministers.
\newblock {\em Public Choice}, 113:437--464, 2002.

\bibitem[Lee02b]{L:02b}
D.~Leech.
\newblock Voting power in the governance of the international monetary fund.
\newblock {\em Annals of Operations Research}, 109:375??97, 2002.

\bibitem[Lee03]{Leech:03}
D.~Leech.
\newblock Power indices as an aid to institutional design: the generalised
  apportionment problem.
\newblock In M.~Holler, H.Kliemt, D.~Schmidtchen, and M.~Streit, editors, {\em
  Yearbook on New Political Economy}, 2003.

\bibitem[LO43]{LO:43}
J.~E. Littlewood and A.~C. Offord.
\newblock {On the number of real roots of a random algebraic equation}.
\newblock {\em III. Rec. Math. [Mat. Sbornik] N.S.}, 12:277--286, 1943.

\bibitem[LW98]{LW:98}
A.~Laruelle and M.~Widgren.
\newblock Is the allocation of voting power among eu states fair?
\newblock {\em Public Choice}, 94:317--339, 1998.

\bibitem[MNV16]{MekaNV16}
R.~Meka, O.~Nguyen, and V.~Vu.
\newblock Anti-concentration for polynomials of independent random variables.
\newblock {\em Theory of Computing}, 12(1):1--17, 2016.

\bibitem[MOO10]{MOO10}
E.~Mossel, R.~O'Donnell, and K.~K. Oleszkiewicz.
\newblock {Noise stability of functions with low influences: Invariance and
  optimality}.
\newblock {\em Annals of Mathematics}, 171:295--341, 2010.

\bibitem[MP68]{MinskyPapert:68}
M.~Minsky and S.~Papert.
\newblock {\em {P}erceptrons: an introduction to computational geometry}.
\newblock MIT Press, Cambridge, MA, 1968.

\bibitem[MTB67]{MTB:67}
S.~Muroga, T.~Tsuboi, and C.R. Baugh.
\newblock Enumeration of threshold functions of eight variables.
\newblock Technical Report 245, Univ. of Illinois, Urbana, 1967.

\bibitem[MTK62]{MTK:62}
S.~Muroga, I.~Toda, and M.~Kondo.
\newblock Majority decision functions of up to six variables.
\newblock {\em Math. Comput.}, 16:459--472, 1962.

\bibitem[MTT61]{MTT:61}
S.~Muroga, I.~Toda, and S.~Takasu.
\newblock Theory of majority switching elements.
\newblock {\em J. Franklin Institute}, 271:376--418, 1961.

\bibitem[MZ13]{MZ13}
R.~Meka and D.~Zuckerman.
\newblock Pseudorandom generators for polynomial threshold functions.
\newblock {\em {SIAM} J. Comput.}, 42(3):1275--1301, 2013.

\bibitem[O'D14]{ODonnell:ABF}
R.~O'Donnell.
\newblock {\em Analysis of Boolean Functions}.
\newblock Cambridge University Press, New York, NY, USA, 2014.

\bibitem[OS11]{OS11:chow}
R.~O'Donnell and R.~Servedio.
\newblock {The Chow Parameters Problem}.
\newblock {\em SIAM J. on Comput.}, 40(1):165--199, 2011.

\bibitem[Pen46]{Penrose:46}
L.S. Penrose.
\newblock The elementary statistics of majority voting.
\newblock {\em Journal of the Royal Statistical Society}, 109(1):53--57, 1946.

\bibitem[Ros58]{Rosenblatt:58}
F.~Rosenblatt.
\newblock The {P}erceptron: a probabilistic model for information storage and
  organization in the brain.
\newblock {\em Psychological Review}, 65:386--407, 1958.

\bibitem[RS10]{RabaniS10}
Y.~Rabani and A.~Shpilka.
\newblock Explicit construction of a small epsilon-net for linear threshold
  functions.
\newblock {\em {SIAM} J. Comput.}, 39(8):3501--3520, 2010.

\bibitem[RV13]{RazborovV13}
A.~A. Razborov and E.~Viola.
\newblock Real advantage.
\newblock {\em {TOCT}}, 5(4):17:1--17:8, 2013.

\bibitem[Ser07]{Servedio:07cc}
R.~Servedio.
\newblock {Every linear threshold function has a low-weight approximator}.
\newblock {\em Comput. Complexity}, 16(2):180--209, 2007.

\bibitem[SRK95]{SRK:95}
K.-Y. Siu, V.P. Roychowdhury, and T.~Kailath.
\newblock {\em Discrete Neural Computation: A Theoretical Foundation}.
\newblock Prentice-Hall, Englewood Cliffs, NJ, 1995.

\bibitem[TT06]{TT:06}
K.~Takamiya and A.~Tanaka.
\newblock Computational complexity in the design of voting games.
\newblock Technical Report 653, The Institute of Social and Economic Research,
  Osaka University, 2006.

\bibitem[TTV08]{TTV:09short}
L.~Trevisan, M.~Tulsiani, and S.~Vadhan.
\newblock {Regularity, Boosting and Efficiently Simulating every High Entropy
  Distribution }.
\newblock Technical Report 103, ECCC, 2008.
\newblock {Conference version in Proc. CCC 2009}.

\bibitem[TZ92]{TaylorZwicker:92}
A.~Taylor and W.~Zwicker.
\newblock {A Characterization of Weighted Voting}.
\newblock {\em Proceedings of the AMS}, 115(4):1089--1094, 1992.

\bibitem[Val85]{Valiant:85}
L.~Valiant.
\newblock Learning disjunctions of conjunctions.
\newblock In {\em Proceedings of the Ninth International Joint Conference on
  Artificial Intelligence}, pages 560--566, 1985.

\bibitem[Win63]{Winder:63}
R.O. Winder.
\newblock {Threshold logic in artificial intelligence}.
\newblock {\em Artificial Intelligence}, IEEE Publication S-142:107--128, 1963.

\bibitem[Win64]{Winder:64}
R.O. Winder.
\newblock Threshold functions through $n = 7$.
\newblock Technical Report~7, Air Force Cambridge Research Laboratories, 1964.

\bibitem[Win69]{Winder:69}
R.O. Winder.
\newblock Threshold gate approximations based on chow parameters.
\newblock {\em IEEE Transactions on Computers}, pages 372--375, 1969.

\end{thebibliography}

\end{document}